\definecolor{tab20_1}{rgb}{0.122, 0.467, 0.706}
\definecolor{tab20_2}{rgb}{0.682, 0.78, 0.91}
\definecolor{tab20_3}{rgb}{1, 0.498, 0.055}
\definecolor{tab20_4}{rgb}{1, 0.733, 0.471}
\definecolor{tab20_5}{rgb}{0.173, 0.628, 0.173}
\definecolor{tab20_6}{rgb}{0.596, 0.875, 0.541}
\definecolor{tab20_9}{rgb}{0.58, 0.404, 0.741}
\definecolor{tab20_10}{rgb}{0.773, 0.69, 0.835}
\DeclareMathOperator*{\argmax}{arg\,max}
\newcommand{\softmaxx}{\text{softmax}}
\newcommand{\softmax}{\mathrm{softmax}}
\title{Mind the Gap: a Spectral Analysis of Rank~Collapse\\and Signal Propagation in Attention Layers}
\author{Thiziri Nait Saada\footnote{Equal contribution.} \qquad Alireza Naderi\footnotemark[1] \qquad Jared Tanner\\
Mathematical Institute, University of Oxford\\
\texttt{\{naitsaadat, naderi, tanner\}@maths.ox.ac.uk}
}
\date{}
\newtheorem{theorem}{Theorem}
\newtheorem{lemma}{Lemma}
\newtheorem*{remark}{Remark}
\newtheorem{proposition}[theorem]{Proposition}
\newtheorem{definition}{Definition}[section]
\begin{document}
\maketitle

\begin{abstract}

Attention layers are the core component of transformers, the current state-of-the-art neural network architecture. Alternatives to softmax-based attention are being explored due to its tendency to hinder effective information flow. Even \textit{at initialisation}, it remains poorly understood why the propagation of signals and gradients through these random networks can be pathological, resulting in issues known as (i) vanishing/exploding gradients and (ii) rank collapse \textit{in depth}, i.e. when all tokens converge to a single representation along layers. While rank collapse in depth naturally arises from repeated matrix multiplications---a common pattern across various architectures---we identify an additional and previously unknown challenge unique to softmax attention layers: (iii) rank collapse \textit{in width}, which occurs as the context length increases.
Using Random Matrix Theory, we conduct a rigorous analysis that uncovers a spectral gap between the two largest singular values of the attention matrix as the cause of (iii), which in turn exacerbates (i) and (ii).
Building on this insight, we propose a novel yet simple practical solution to mitigate rank collapse in width by removing the outlier eigenvalue(s). Our theoretical framework offers a fresh perspective on recent practical studies, such as \cite{ye2024differential, ali2023centered}, whose ad hoc solutions can now be interpreted as implicit efforts to address the spectral gap issue. This work provides valuable theoretical support for ongoing large-scale empirical research, bringing theory and practice one step closer in the understanding of transformers.\footnote{Our code is available at \href{https://github.com/thizirinait/Mind-the-Gap-a-Spectral-Analysis-of-Rank-Collapse-and-Signal-Propagation-in-Attention-Layers.git}{https://github.com/thizirinait/Spectral-Analysis-of-Attention-Layers.git}.}
\end{abstract}

\section{Introduction}\label{sec:intro}

% general introduction on transformers
Transformers \cite{vaswani2017} have emerged as the dominant architecture in machine learning, achieving remarkable success across various domains, particularly in natural language processing and computer vision, largely due to their defining feature: the self-attention mechanism \cite{bahdanau2016neural}. However, despite their empirical success, transformers are often plagued by training instability and high sensitivity to numerous hyperparameters, which require careful tuning. This challenge has motivated recent efforts to establish a theoretical framework for understanding transformer architectures, even in their most basic forms, to ensure reliable information flow through deeper layers and facilitate training. 

% what is this paper about? An explanation of rank collapse
The purpose of this work is to analyse signal propagation in softmax-based attention layers \textit{at initialisation}, i.e. with randomly initialised model parameters. While the issues of rank collapse (in depth) and vanishing/exploding gradients have been previously identified in transformers at initialisation \cite{dong2021attention, noci2022signal}, our work extends these findings and uncovers an additional phenomenon---rank collapse \textit{in width}---due to the use of softmax in the self-attention mechanism. By width, we specifically refer to the context length, which, despite its increasing scale in modern attention layers, seems to have been overlooked in favour of depth in previous analyses of rank collapse in transformers. Indeed, rank collapse in width, which is unique to attention layers, has not been identified in the existing literature nor recognised as a catalyst for rank collapse along the depth. In contrast, rank collapse in depth is not specific to transformer-like networks, as it typically results from successive matrix multiplications, a shared feature of various deep neural network architectures. Investigating the spectra of the random matrices formed by the model's parameters, we detect a \textit{spectral gap} between the two largest eigenvalues of the attention matrix, which drives rank collapse in width and further accelerates rank collapse in depth. Moreover, we propose a provably effective remedy for the spectral gap, a solution that naturally arises when the problem is viewed through a spectral lens. To the best of our knowledge, this is the first study to examine the attention mechanism in the large context length regime, as well as the first to identify rank collapse in width and its effect on other oversmoothing phenomena.

% An informal illustration of the spectral gap
Let us consider the eigenvalues of a softmax-based attention matrix. Since the rows sum to unity, there is an eigenvalue of $1$ corresponding to the eigenvector of all-ones. Under certain conditions, the other eigenvalues shrink in size as the matrix dimension increases, resulting in a widening gap between the largest eigenvalue (i.e. $1$) and the diminishing bulk of eigenvalues; see Figure \ref{fig:bulk_key_query}. The successive multiplication of such matrices at each layer increasingly favours a specific direction---the one aligned with the dominant eigenvector of the attention matrix---over the others. This leads to a distortion in the geometry of the input training data, exemplified by the phenomenon of rank collapse. An intuitive solution is to project out this troublesome direction from all attention matrices, allowing for more balanced signal propagation. These ideas lie at the core of our rigorous analysis of attention layers, from which we derive a slightly modified attention mechanism that proves advantageous in practice. Remarkably, the spectral gap observed in softmax attention matrices also appears in alternative attention mechanisms suggested in the literature, making our analysis relevant to these variants as well; see Figure \ref{fig:bulk_key_query}.

\subsection{Related work}
% Connection to on-going empirical research TODO
Our analysis builds upon previous work that studied the spectra of large random neural networks to better understand and stabilise initial training dynamics, such as \cite{pennington2018emergence} for fully-connected networks and \cite{xiao2018dynamical} for convolutional networks.

Rank collapse in transformers was first explored in \cite{dong2021attention}, where the authors show that the output of an attention-only transformer converges exponentially with depth to a single representation across tokens. \cite{noci2022signal} make a connection between rank collapse and vanishing gradients by assuming ``uniform attention'' $\mathbf{A} = \frac{1}{T} \mathbf{1}_{T \times T}$, essentially proving rank collapse in depth based on the assumption that rank collapse in width has already occurred (as opposed to showing why this premise holds).

There is also a growing literature on the shortcomings of softmax self-attention mechanism and possible alternatives. Some suggest the usage of other well-known activations such as identity, ReLU, and sigmoid \cite{hron2020infinite, wortsman2023replacing, ramapuram2024theory}, while others propose modifications to the attention block \cite{he2022deep, wang2022anti, shi2022revisiting, ali2023centered, dovonon2024setting, ye2024differential}. Based on our analysis, we can offer a fresh perspective on these attention variants and reflect on their effect on rank collapse. Indeed, they all fall (at the exception of the identity) within the class of attention matrices presenting a spectral gap, therefore suffering from rank collapse, with slightly adjusted rates according to the location of the outlier in the spectrum; see Figure \ref{fig:bulk_key_query}. 

Interestingly, our work and the fix we propose find echos in recent promising practical studies. For instance, the Differential Transformer \cite{ye2024differential} computes the attention matrix as a difference between two softmax matrices and has shown significant improvements from this adjustment, e.g. finer quantisation, better key information retrieval or hallucination mitigation. In light of our findings, this adjustment can be understood as removing the troublesome outlier from the spectrum of the attention matrix that naturally arises when using softmax. Even closer, the work of \cite{ali2023centered} directly implements our fix and shows improved behaviour on image segmentation tasks. When our work is of theoretical nature and concerned with the initialisation dynamics, it is interesting to see how these concepts are playing out in practical applications.

\subsection{Organisation of the paper}
In Section \ref{sec:single-layer}, we first review the fundamentals of random matrix spectra before exploring the properties of standard softmax key-query attention matrices with isotropic input. We prove the existence of a spectral gap for such attention matrices and use this result to demonstrate rank collapse in width. Next, we introduce a modified attention mechanism designed to eliminate the spectral gap and show that this modification effectively resolves rank collapse in width.

In Section \ref{sec:depth}, we extend our analysis from a single attention layer to a deep attention-only transformer. Using a certain class of random Markov matrices to model attention, we precisely determine the rate at which rank collapse occurs as a function of width and depth (Proposition \ref{prop:cov_sr_collapse}), thereby revealing for the first time how the two interact. Furthermore, we prove that the removal of the spectral gap cures rank collapse in width and mitigates the exploding gradients (Propositions \ref{prop:resolved_sr_collapse} and \ref{prop:resolved_exploding_gradient}). 

In Section \ref{sec:exp}, we validate our findings, providing empirical evidence of rank collapse in both width and depth in commercial models such as BERT family. 
We put to test our ``remove the gap'' solution across a range of architectures featuring LayerNorm and skip connections and show its effectiveness in resolving stable rank collapse and slowing down gradient explosion.
Finally, we observe the emergence of additional outliers in deep transformers and conduct experiments to study the potential benefits of removing all outliers rather than the dominant one.

\begin{figure}[t] 
    \centering
    \begin{subfigure}[t]{.45\textwidth}
    \centering
\includegraphics[scale=0.35, left]{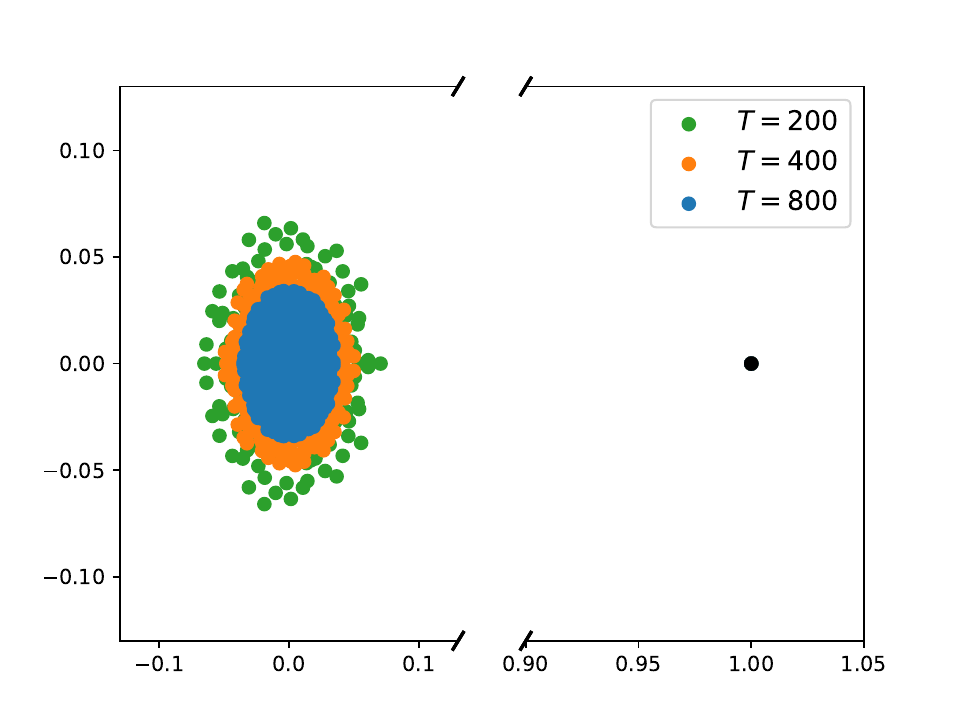}
    \caption{Softmax attention \cite{vaswani2017}}
    \label{fig:bulk_softmax}
    \end{subfigure}
    \begin{subfigure}[t]{.45\textwidth}
    \centering
\includegraphics[scale=0.35, left]{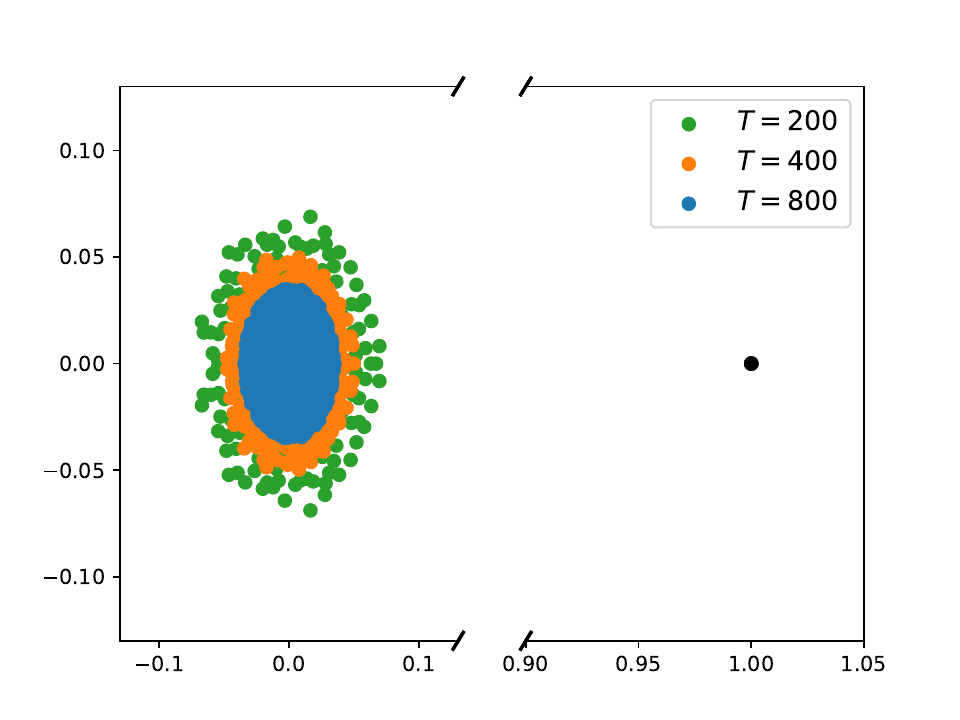}
    \caption{i.i.d. Markov}
    \label{fig:bulk_markov}
    \end{subfigure}\\
    \begin{subfigure}[t]{.45\textwidth}
    \centering
\includegraphics[scale=0.35, left]{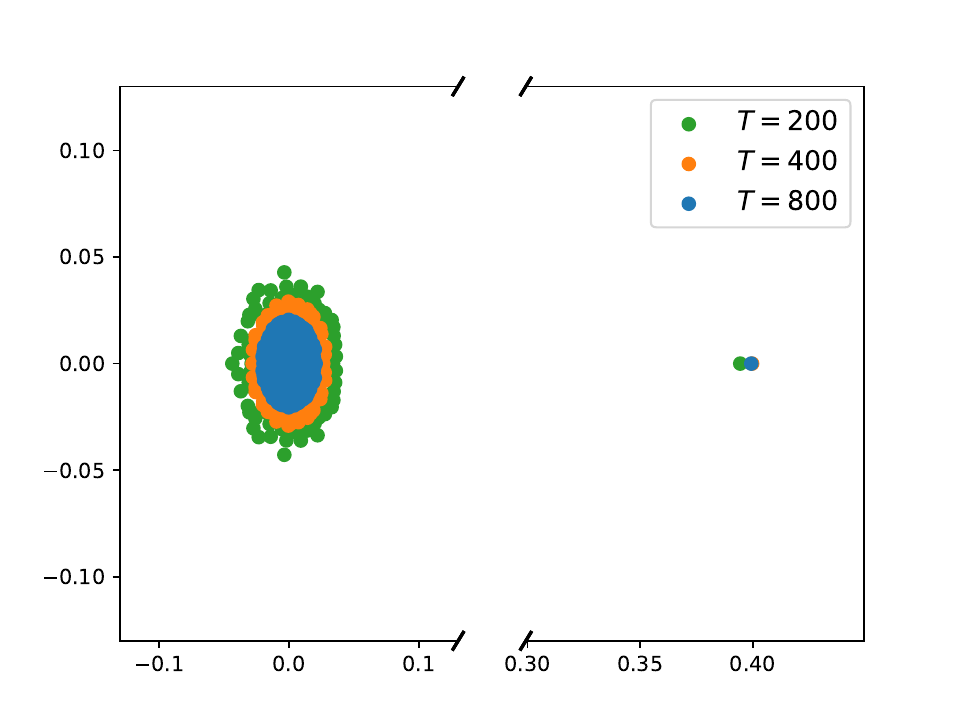}
    \caption{ReLU attention \cite{wortsman2023replacing}}
    \label{fig:bulk_relu}
    \end{subfigure}
    \begin{subfigure}[t]{.45\textwidth}
    \centering
\includegraphics[scale=0.35, left]{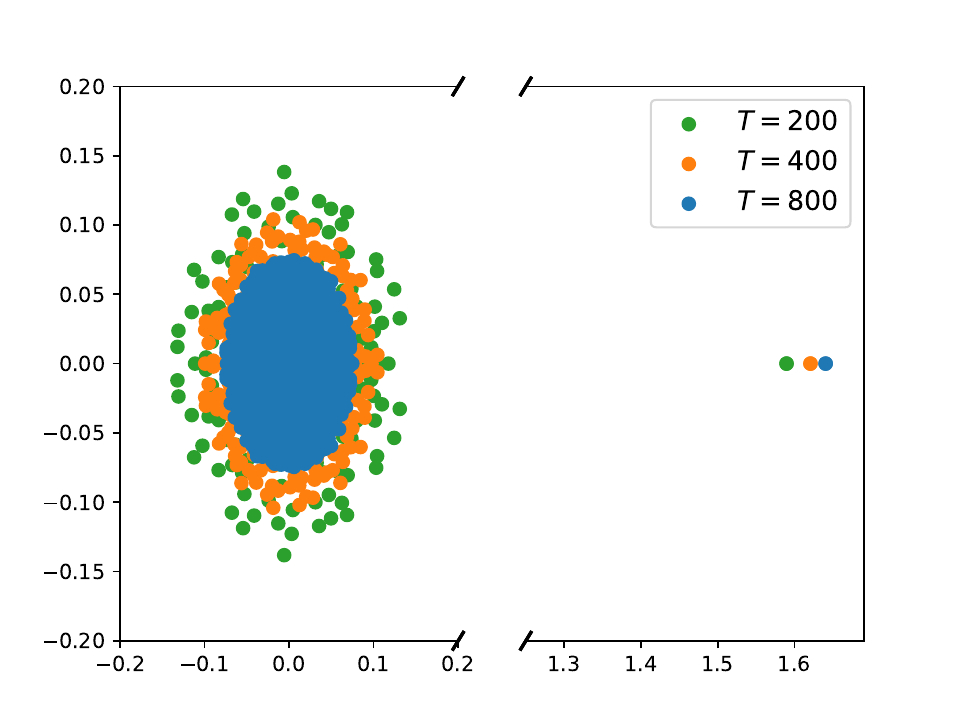}
    \caption{Sigmoid attention \cite{ramapuram2024theory}}
    \label{fig:bulk_sigmoid}
    \end{subfigure}
    \caption{Theorem \ref{theorem:first_layer} demonstrates that applying the conventional softmax key-query attention mechanism to orthonormal input tokens yields a spectral gap in its spectrum (a). Due to their non-zero mean, ReLU (c) and sigmoid attention (d) mechanisms also exhibit a spectral gap---although of a different size. As the size $T$ of an i.i.d. Markov matrix (Definition \ref{def:markov}) grows, its eigenvalues form a circular bulk of radius $O(T^{-1/2})$ in the complex plane, except for the largest eigenvalue which remains 1 (the black dot in (b)). In practice, $T$ does not need to be too large for the limiting behaviour to appear, as shown above.}
    \label{fig:bulk_key_query}
\end{figure}

\section{Attention layers and rank collapse in width}\label{sec:single-layer}

\subsection{Spectra of random matrices}
% A brief dig into the tools used in the paper
Throughout this paper, we consider random matrices (of different distributions) in the \textit{large width} limit and describe them through their \textit{limiting} spectral properties. In the context of transformers, we mean by ``large width'' that both the number of tokens and the embedding dimension(s) are large---an assumption typically satisfied in practice. For certain classes of random matrices, the overall behaviour of eigenvalues/singular values becomes remarkably predictable as the matrix size increases, despite the randomness of individual entries. If $\mathbf{M}_n \in \mathbb{R}^{n \times n}$ are random matrices with eigenvalues and singular values denoted by $\{\lambda_i(\mathbf{M}_n)\}$ and $\{s_i(\mathbf{M}_n)\}$, respectively, the histograms of the $n$ eigenvalues/singular values
\begin{equation*}
    \mu_{\mathbf{M}_n} \coloneqq \frac{1}{n} \sum_{i=1}^{n} \delta_{\lambda_i(\mathbf{M}_n)}, \quad \nu_{\mathbf{M}_n} \coloneqq \frac{1}{n} \sum_{i=1}^{n} \delta_{s_i(\mathbf{M}_n)},
\end{equation*}
converge, in many interesting cases, to deterministic distributions $\mu$ and $\nu$, known as the \textit{limiting eigenvalue/singular value distribution} of $\mathbf{M}_n$.
Additionally, the \textit{largest} eigenvalue/singular value of random matrices is often studied in its own right.
Our analysis builds on cutting-edge results concerning both the limiting distribution of the eigenvalues/singular values (the ``bulk'' of the spectrum) and the behaviour of the largest eigenvalue/singular value (the ``edge'' of the spectrum).
In particular, we focus on two classes of random matrices: Gaussian and Markov, that respectively model the \textit{value} and \textit{attention} matrices in our transformer model \eqref{eq:model} at initialisation.

\subsection{Attention at initialisation}

Let $\mathbf{X}_0 \in \mathbb{R}^{T \times d}$ be an input sequence of context length $T$, i.e. formed of $T$ tokens.  Each token is embedded as a $d$-dimensional vector, with a fixed ratio $\gamma \coloneqq \frac{T}{d} \leq 1$. The standard softmax key-query attention matrix is obtained by 
    \begin{equation}\label{eq:softmax_attention}
        \mathbf{A}(\mathbf{X}_0) \coloneqq \softmax \bigg(\frac{\mathbf{X}_0 \mathbf{W}^{Q} {\mathbf{W}^K}^\top \mathbf{X}^\top_0}{\sqrt{d_{qk}}} \bigg),
    \end{equation}
    where $\mathbf{W}^Q, \mathbf{W}^K \in \mathbb{R}^{d \times d_{qk}}$ have i.i.d.~$\mathcal{N}(0, \sigma^2_{qk})$ entries. 

The above attention matrix exhibits a spectral gap, meaning its largest eigenvalue (or singular value) differs significantly in magnitude from the second largest. Moreover, this gap widens as the context length increases. In fact, we precisely quantify the rates at which both eigenvalues evolve, as follows.

\begin{theorem}[Spectral gap in softmax attention]\label{theorem:first_layer}
Let the input sequence $\mathbf{X}_0$ have orthonormal rows, i.e. $\mathbf{X}_0 \mathbf{X}^\top_0 = \mathbf{I}$. 
Then, $\lambda_1(\mathbf{A}(\mathbf{X}_0)) = 1$ and almost surely,
    \begin{equation}
        \lim_{T \to \infty} s_1(\mathbf{A}(\mathbf{X}_0)) = 1,
        \end{equation}
        and
        \begin{equation}
        \lim_{T \to \infty} s_2(\sqrt{T}\mathbf{A}(\mathbf{X}_0)) = 2\sigma,
        \end{equation}
        while
        \begin{equation}
        \varlimsup_{T \to \infty} |\lambda_2(\sqrt{T}\mathbf{A}(\mathbf{X}_0))| \leq 2\sigma,
    \end{equation}
    where $\sigma$ depends only on $\sigma_{qk}$.
\end{theorem}

\begin{figure}[tb!]
\centering  
\begin{tikzpicture}

% Draw the rounded rectangle
\draw[rounded corners=0.15cm, line width=1.5pt, draw=black, fill=gray!10] (0,0) rectangle (8,5);
\node[anchor=west] at (0.1,0.25) {\scriptsize \setulcolor{black}\ul{attention matrices}};

\draw[line width=1.5pt, draw=tab20_1, fill=tab20_2, opacity=0.5] (4,2.5) ellipse (3.75cm and 2.35cm);
\node[anchor=east] at (4.68,4.62) {\scriptsize \setulcolor{tab20_1}\ul{spectral gap}};

% Draw the larger circle
\draw[line width=1.5pt, draw=tab20_5, fill=tab20_6, opacity=0.5] (2.6,2.5) circle (2);
\node[anchor=west] at (0.85,2.5) {\scriptsize \setulcolor{tab20_5}\ul{i.i.d. Markov}};

% Draw the oval (entirely inside the rectangle)
\draw[line width=1.5pt, draw=tab20_3, fill=tab20_4, opacity=0.5] (5.1,2.5) ellipse (2.5cm and 2cm);
\node[anchor=west] at (5,2.5) {\scriptsize \setulcolor{tab20_3} \ul{softmax attention}};

%Making borders darker
\draw[line width=1.5pt, draw=tab20_5] (2.6,2.5) circle (2);
\draw[line width=1.5pt, draw=tab20_3] (5.1,2.5) ellipse (2.5cm and 2cm);
\draw[line width=1.5pt, draw=tab20_1] (4,2.5) ellipse (3.75cm and 2.35cm);

% Draw the smaller circle in the intersection
\draw[line width=1.5pt, draw=tab20_9, fill=tab20_10] (3.64,2.1) ellipse (.75 and 0.90);
\node[anchor=center, scale=0.75] at (3.6,3.18) {\scriptsize \setulcolor{tab20_9} \ul{$\mathbf{A}(\mathbf{X}), \, \mathbf{X}\mathbf{X}^{\top} = \mathbf{I}$}};
% Add a black dot at the center of the smaller ellipse
\fill[black] (3.65,2.5) circle (0.05);
\node[anchor=center, scale=0.75] at (3.64,2.25) {\scriptsize uniform attention};
\node[anchor=center, scale=0.75] at (3.64,2) {\scriptsize $\mathbf{A} = \frac{1}{T} \mathbf{1}_{T \times T}$};

\end{tikzpicture}
\caption{The standard key-query attention matrices of Equation \eqref{eq:softmax_attention} form a subclass of i.i.d. Markov matrices of Definition \ref{def:markov}, of which the uniform attention of \cite{noci2022signal} is one trivial special case. We use i.i.d. Markov matrices to build our general multi-layer theory in section \ref{sec:depth}.}
\label{fig:diagram}
\end{figure}

% MODEL
With no further assumptions, we exactly analyse one attention layer at initialisation, where the output is given by
$$\mathbf{X} = \mathbf{A}(\mathbf{X}_0)\mathbf{W}^{V},$$
and the \textit{value matrix} $\mathbf{W}^{V} \in \mathbb{R}^{d \times d}$ is initialised independently with i.i.d.~$\mathcal{N}(0,1)$ entries \footnote{See Remark \ref{remark:value} for a discussion on scaling the value matrix.}.

\subsection{Rank collapse \textit{in width}}

Theorem \ref{theorem:first_layer} reveals that the attention matrix becomes effectively rank-one for large context length $T$.
Naturally, any definition of ``rank collapse'' relies on a proxy for discrete rank, as the random matrices in question are almost surely full-rank, making it uninformative to refer to their actual rank. For example, \cite{dong2021attention} consider the ``one-infinity norm'' of the residual (the difference between a matrix and its best approximation of the form $\mathbf{1}\mathbf{x}^\top$), defined as $\sqrt{\lVert \mathrm{res}(\mathbf{M}) \rVert_{1} \lVert \mathrm{res}(\mathbf{M}) \rVert_{\infty}}$, while \cite{noci2022signal} use $\sum_{i,j} (\mathbf{M}\mathbf{M}^\top)_{i,j}$, which is maximised when all rows of $\mathbf{M}$ are identical. We choose \textit{stable rank} as our preferred proxy due to its clear geometrical interpretation and simple definition in terms of singular values. It is defined for any non-zero $\mathbf{M} \in \mathbb{R}^{m \times n}$ as
\begin{equation}
    \mathrm{sr}(\mathbf{M}) \coloneqq \frac{\lVert \mathbf{M} \rVert_F^2}{\lVert \mathbf{M} \rVert^2} = \frac{\sum_i s_i^2(\mathbf{M})}{s_1^2(\mathbf{M})}.
\end{equation}

The following theorem establishes that the stable rank of the covariance kernel $$\boldsymbol{\Sigma} \coloneqq \mathbf{X}\mathbf{X}^{\top}$$ collapses to $1$ as the width $T$ increases.

\begin{theorem}[Rank collapse in width] \label{theorem:cov_sr_collapse_layer1}
Assume $\boldsymbol{\Sigma}_0 = \mathbf{I}$. Then, with overwhelming probability\footnote{A sequence of events $E_n$ holds with overwhelming probability if, for every $A>0$, $\mathbb{P}(E_n) \geq 1-C_A n^{-A}$, for some constant $C_A$, as defined in \cite{tao2012topics}.}, 
\begin{align}
    \lim_{T \to \infty} \mathrm{sr}(\mathbf{\Sigma}) = 1.
\end{align}
Moreover, the convergence happens at a polynomial rate, i.e. $\big|\mathrm{sr}(\boldsymbol{\Sigma})-1 \big| = O(T^{-3})$.
\end{theorem}

This theorem constitutes the primary contribution of this paper. It uncovers a previously unknown phenomenon that we refer to as rank collapse in width---or the convergence of tokens' representations into a single point as the context length increases within a single attention layer. It should not be mistaken for rank collapse in depth, which is partly a result of repeated random matrix multiplications. In contrast, rank collapse in width is unique to attention layers, making this discovery particularly significant for the theory of transformers. In the next section, we will explore how removing the spectral gap of Theorem \ref{theorem:first_layer} helps alleviate rank collapse in width and restore a more balanced flow of information.

\subsection{Attention without the gap}

Assume $\mathbf{X}_0 \mathbf{X}_0^{\top} = \mathbf
I$ and let us write $\mathbf{A}\coloneqq \mathbf{A}(\mathbf{X}_0)$ to lighten the notation.
The attention matrix $\mathbf{A} \in \mathbb{R}^{T \times T}$ can be written as 
\begin{equation}
    \mathbf{A} = \mathbb{E}\mathbf{A} + (\mathbf{A}-\mathbb{E}\mathbf{A}) = \frac{1}{T} \mathbf{1}_{T \times T} + \mathbf{A}^{\perp},
\end{equation}
where $\mathbf{1}_{T \times T} \coloneqq (1, \cdots, 1)^\top (1, \cdots, 1)$ is the all-ones matrix. Therefore, $\mathbf{A}$ is a rank-one perturbation of $\mathbf{A}^{\perp}$, whose spectral radius is $\lambda_1(\mathbf{A}^{\perp}) = O(T^{-1/2})$ (see Lemma \ref{lemmaB}). Although the rank-one perturbation $\frac{1}{T} \mathbf{1}_{T \times T}$ cannot disturb the bulk of the spectrum, it causes the largest eigenvalue to ``escape'' from the bulk to $1$, creating a spectral gap.

In light of this, we can slightly modify the attention mechanism to eliminate the outlier---and thus the gap---simply by replacing $\mathbf{A}$ with $\mathbf{A}^\perp$ to get the following modified attention layer, i.e.
\begin{equation}\label{eq:modified_layer}
    \mathbf{X}^{\perp} = \mathbf{A}^\perp(\mathbf{X}_0) \mathbf{W}^V,
\end{equation}
where $\mathbf{A}^\perp(\mathbf{X}_0) \coloneqq \mathbf{A}(\mathbf{X}_0) - \frac{1}{T}\mathbf{1}_{T \times T}$. Note that this modification is applied only to the attention matrix (and \textit{not} to the signal representation) and $\mathbf{X}^\perp$ serves as shorthand for the output of an attention layer whose $\mathbf{A}$'s is replaced with $\mathbf{A}^\perp$'s as in \eqref{eq:modified_layer}. 

Since the modified attention exhibits no spectral gap (see Lemma \ref{lemma:bulk_perp}), the stable rank of the covariance matrix $\boldsymbol{\Sigma}^\perp \coloneqq \mathbf{X}^\perp {\mathbf{X}^{\perp}}^\top$ no longer collapses to 1 but instead scales linearly in width, as detailed in Proposition\ \ref{prop:resolved_sr_collapse_layer1} (cf. Theorem\ \ref{theorem:cov_sr_collapse_layer1}). 

\begin{proposition}[Resolved rank collapse in width]\label{prop:resolved_sr_collapse_layer1}
Let $\mathbf{X}^{\perp} = \mathbf{A}^{\perp}(\mathbf{X}_0) \mathbf{W}^{V}$ be the signal in our modified attention layer \eqref{eq:modified_layer} and $\mathbf{\Sigma}^{\perp} \in \mathbb{R}^{T \times T}$ be its covariance matrix. Then, almost surely, the rank does not collapse, i.e., there exists a constant $C>0$ such that,
\begin{equation}
    \lim_{T\to \infty} \frac{\mathrm{sr}(\mathbf{\Sigma}^{\perp})}{T} = C.
\end{equation}
\end{proposition}

\section{The interplay between width and depth}\label{sec:depth}

% Potential transition

It is important to note that a spectral gap within the attention matrix can arise when the input to the softmax function consists of a matrix with vanishing elements. This can be achieved, for example, by initialising the key and query matrices following Xavier or He initialisation schemes, or under any scaling such that $\sigma_{qk}^2\to 0$ as $d_{qk}$ increases. In this scenario, the attention matrix asymptotically approaches the uniform attention matrix, i.e. the matrix of all entries equal to $1/T$. In this extreme case, the attention matrix becomes rank-one, the spectral gap is maximised and once the rank-one perturbation $\frac{1}{T} \mathbf{1}_{T \times T}$ is removed, the attention is reduced to zero---effectively preventing any signal propagation beyond the rank collapse.

It is, nonetheless, in this trivial case that the analysis of signal propagation along depth was conducted in \cite{noci2022signal}. In particular, their study demonstrates rank collapse in depth by implicitly assuming that rank collapse in width had already occurred. In contrast, our work presents a more refined analysis of rank collapse by considering the effects of both width and depth jointly, offering, to the best of our knowledge, the most advanced theoretical study to date on the initialisation of transformers.

We model the $\ell$-th layer attention mechanism at initialisation by a random matrix $\mathbf{A}_{\ell}$ with non-negative entries $(\mathbf{A}_{\ell})_{i,j} \geq 0$ and normalised rows, i.e. $\sum_{j} (\mathbf{A}_{\ell})_{i,j} = 1$, as if it were generated by a row-wise application of softmax. As we will demonstrate, this model functions as a helpful abstraction that offers insights into the interaction of width and depth. More specifically, we consider $\mathbf{A}_{\ell}$ to be an i.i.d. Markov matrix, as defined in \cite{Bordenave_2011}.
\begin{definition}[i.i.d. Markov matrix] \label{def:markov}
    Let $Z_{i,j}$ be i.i.d.~non-negative random variables with positive mean $m\coloneqq \mathbb{E}(Z_{1,1})>0$ and variance $\sigma^2 \coloneqq \mathrm{Var}(Z_{1,1})>0$ as well as finite fourth moment $\mathbb{E}(Z_{1,1}^4) < \infty$. Let $\mathbf{A} \in \mathbb{R}^{T \times T}$ be its row-normalised version, i.e.,
    \begin{equation} \label{markov_ensemble}
        \mathbf{A}_{i,j} \coloneqq \frac{Z_{i,j}}{\sum_{j=1}^{T} Z_{i,j}}.
    \end{equation}
    We call $\mathbf{A}$ an i.i.d. Markov matrix.
\end{definition}

\begin{remark}[Terminology]
    This is a slight abuse of language since the entries of an ``i.i.d. Markov matrix'' are not necessarily independent random variables because of the normalisation constraint. Moreover, not all random Markov matrix ensembles satisfy the above definition. Remarkably, the standard key-query dot product attention matrix $\mathbf{A}(\mathbf{X}_0)$ defined in \eqref{eq:softmax_attention} is an i.i.d. Markov matrix provided the input is isotropic $\mathbf{X}_0 \mathbf{X}_0^{\top}=\mathbf{I}$. The uniform attention is another instance from this class of matrices; see Figure \ref{fig:diagram}.
\end{remark}

% MODEL
We study as our model a deep stack of attention layers at initialisation, where at each layer $\ell$, the signal is transformed as $\mathbf{X}_{\ell} = \mathbf{A}_{\ell} \mathbf{X}_{\ell -1} \mathbf{W}^{V}_{\ell}$. The input signal $\mathbf{X}_0 \in \mathbb{R}^{T \times d}$ has $T$ tokens of embedding dimension $d$, with a fixed ratio $\gamma \coloneqq \frac{T}{d} \leq 1$. For such a network of \textit{depth} $L$, the input-output relationship is thus given by
\begin{equation} \label{eq:model}
    \mathbf{X}_L = \mathbf{A}_L \mathbf{A}_{L-1} \dots \mathbf{A}_1 \mathbf{X}_0 \mathbf{W}^{V}_{1} \dots \mathbf{W}^{V}_{L-1} \mathbf{W}^{V}_{L}.
\end{equation}
The \textit{value matrices} $\mathbf{W}^{V}_{\ell} \in \mathbb{R}^{d \times d}$ are initialised independently with i.i.d.~$\mathcal{N}(0,1)$ entries and the \textit{attention matrices} $\mathbf{A}_{\ell} \in \mathbb{R}^{T \times T}$ are independent i.i.d. Markov matrices with $\sigma^2_A = 1$.

\begin{remark}[Scaling of value matrices]\label{remark:value}
    The reason we initialise the value matrices with $\mathcal{N}(0,1)$ entries rather than $\mathcal{N}(0, 1/d)$ (i.e. He initialisation) is that the attention matrices have singular values of magnitude $O(1/\sqrt{T})$ except for the leading one $s_1(\mathbf{A}) = 1+o(1)$; see Theorem\ \ref{chafai_gap}. So, in all but one direction, the attention matrix scales down the signal by a factor of $O(1/\sqrt{T})$, which will be compensated by $\mathbf{W}^V$ with singular values of magnitude $O(\sqrt{d})$.
\end{remark}

\subsection{Revisiting rank collapse \& exploding gradients}

\paragraph{Rank collapse} 
Given an isotropic input $\mathbf{X_0}$ with $\boldsymbol{\Sigma}_0 \coloneqq \mathbf{X}_0 \mathbf{X}^\top_0 = \mathbf{I}$, we are interested in understanding how the stable rank of the covariance matrix at layer $\ell$,
\begin{equation*}
    \boldsymbol{\Sigma}_\ell \coloneqq \mathbf{X}_{\ell} \mathbf{X}^\top_{\ell},
\end{equation*}
evolves. Proposition\ \ref{prop:cov_sr_collapse} demonstrates how the stable rank collapses as the width $T$ increases.

\begin{proposition}[Rank collapse in width for deep network] \label{prop:cov_sr_collapse}
Assume $\boldsymbol{\Sigma}_0 = \mathbf{I}$. Then, for any $\ell \geq 1$, 
\begin{align}
    \lim_{T \to \infty} \mathrm{sr}(\mathbf{\Sigma}_{\ell}) = 1,
\end{align}
with overwhelming probability. Moreover, the convergence happens at a polynomial rate, i.e. $$\big|\mathrm{sr}(\boldsymbol{\Sigma}_{\ell})-1 \big| = O(T^{1-4\ell}).$$
\end{proposition}

\paragraph{Exploding gradients.}
A well-known issue that can disrupt training across various neural network architectures is the vanishing or exploding of gradients; see \cite{hanin2018neural}. For attention-only transformers with degenerate attention, \cite{noci2022signal} demonstrate that the gradients with respect to $\mathbf{W}^V_{\ell}$ vanish. Our model \eqref{eq:model} allows for more general random attention while using a different scaling that makes the same quantity explode rather than vanish. Proposition\  \ref{prop:exploding_gradients} provides a lower bound on the rate at which the gradient grows.

\begin{proposition}[Exploding gradients] \label{prop:exploding_gradients}
    For any $L \geq 2$ and $1 \leq \ell \leq L$, with overwhelming probability,
    \begin{equation}\label{eq:gradient_norm}
         \lim_{T \to \infty} \frac{1}{T^{L-1}} \left\lVert \frac{\partial \mathbf{X}_L}{\partial \mathbf{W}^V_{\ell}}\right\rVert^2_F \geq C_{L-\ell},
    \end{equation}
    for some constant $C_{L-\ell}>0$. In particular, for $T$ large enough, $\lVert \partial \mathbf{X}_L/\partial \mathbf{W}^V_{\ell}\rVert^2_F$ diverges to infinity as $L$ increases. In the single-layer case $\ell = L=1$, the following improved bound
    \begin{equation}
         \lim_{T \to \infty} \frac{1}{T} \left\lVert \frac{\partial \mathbf{X}_1}{\partial \mathbf{W}^V_{1}}\right\rVert^2_F \geq C
    \end{equation}
    holds almost surely.
\end{proposition}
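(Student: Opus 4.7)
The plan is to reduce the squared Frobenius norm of the Jacobian to a product of two independent matrix norms and lower-bound each via standard concentration for products of Gaussian matrices. Writing $\mathbf{X}_L = \mathbf{U}\, \mathbf{W}^V_\ell\, \mathbf{V}$ with $\mathbf{U} \coloneqq \mathbf{A}_L \cdots \mathbf{A}_1 \mathbf{X}_0 \mathbf{W}^V_1 \cdots \mathbf{W}^V_{\ell-1}$ of size $T \times d$ and $\mathbf{V} \coloneqq \mathbf{W}^V_{\ell+1} \cdots \mathbf{W}^V_L$ of size $d \times d$, a direct expansion of the rank-four Jacobian tensor gives $\lVert \partial \mathbf{X}_L/\partial \mathbf{W}^V_\ell \rVert_F^2 = \lVert \mathbf{U} \rVert_F^2 \cdot \lVert \mathbf{V} \rVert_F^2$. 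The two factors are independent, since they draw on disjoint blocks of value matrices and all $\mathbf{W}^V_i$ are independent of the attention matrices.

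For $\lVert \mathbf{V} \rVert_F^2$, I would use $\lVert \mathbf{V} \rVert_F^2 \geq \lVert \mathbf{V} \mathbf{e}_1 \rVert^2$ and iterate: setting $\mathbf{u}_L \coloneqq \mathbf{W}^V_L \mathbf{e}_1$ and $\mathbf{u}_k \coloneqq \mathbf{W}^V_k \mathbf{u}_{k+1}$, rotational invariance of each $\mathbf{W}^V_k$ makes $\lVert \mathbf{u}_k \rVert^2 / \lVert \mathbf{u}_{k+1} \rVert^2$ a $\chi^2_d$ variable independent of the past, so $\lVert \mathbf{V} \mathbf{e}_1 \rVert^2$ is a product of $L - \ell$ i.i.d.\ $\chi^2_d$'s. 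Gaussian concentration for $\chi^2_d/d$ about $1$ then delivers $\lVert \mathbf{V} \rVert_F^2 \geq (1-\delta)^{L-\ell} d^{L-\ell}$ with overwhelming probability.

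The bound on $\lVert \mathbf{U} \rVert_F^2$ is where the structure of the attention matrices enters. Letting $\mathbf{L} \coloneqq \mathbf{A}_L \cdots \mathbf{A}_1$, which is row-stochastic and hence satisfies $\mathbf{L} \mathbf{1} = \mathbf{1}$, the rank-one orthogonal projection $\mathbf{P} \coloneqq T^{-1} \mathbf{1}\mathbf{1}^\top$ gives $\lVert \mathbf{U} \rVert_F^2 \geq \lVert \mathbf{P} \mathbf{U} \rVert_F^2 = T^{-1} \lVert \mathbf{1}^\top \mathbf{U} \rVert^2 = T^{-1} \lVert \mathbf{M}_{\text{pre}}^\top \mathbf{X}_0^\top \mathbf{c} \rVert^2$, where $\mathbf{c} \coloneqq \mathbf{L}^\top \mathbf{1}$ and $\mathbf{M}_{\text{pre}} \coloneqq \mathbf{W}^V_1 \cdots \mathbf{W}^V_{\ell-1}$. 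The identity $\mathbf{1}^\top \mathbf{c} = T$ with Cauchy--Schwarz gives the deterministic bound $\lVert \mathbf{c} \rVert^2 \geq T$, and isometry of $\mathbf{X}_0$ gives $\lVert \mathbf{X}_0^\top \mathbf{c} \rVert^2 = \lVert \mathbf{c} \rVert^2 \geq T$. Conditioning on the $\mathbf{A}$'s so that $\mathbf{X}_0^\top \mathbf{c}$ is fixed, and using independence of $\mathbf{M}_{\text{pre}}$, the same iterated-$\chi^2_d$ argument applied to $\mathbf{M}_{\text{pre}}^\top$ acting on this fixed vector yields $\lVert \mathbf{U} \rVert_F^2 \geq (1-\delta)^{\ell-1} d^{\ell-1}$ with overwhelming probability. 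Multiplying the two bounds and using $d/T = 1/\gamma$ settles the general case.

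For the single-layer case $\ell = L = 1$ the auxiliary products collapse to identities, so $\lVert \partial \mathbf{X}_1/\partial \mathbf{W}^V_1 \rVert_F^2 = \lVert \mathbf{A}_1 \mathbf{X}_0 \rVert_F^2 \cdot d = \lVert \mathbf{A}_1 \rVert_F^2 \cdot d$ using $\mathbf{X}_0 \mathbf{X}_0^\top = \mathbf{I}$; Theorem~\ref{chafai_gap} gives $\lVert \mathbf{A}_1 \rVert_F^2 \geq s_1(\mathbf{A}_1)^2 \to 1$ \emph{almost surely}, upgrading the conclusion to an a.s.\ bound $\geq 1/\gamma$. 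The main obstacle I foresee is precisely the projection step: naively lower-bounding $\lVert \mathbf{U} \rVert_F^2$ by a single column $\lVert \mathbf{U} \mathbf{e}_i \rVert^2$ would cost a factor of $d$, because a generic column of $\mathbf{L}$ lies in the contracting subspace (all singular values besides $s_1(\mathbf{L}) \to 1$ decay like $T^{-1/2}$ per Theorem~\ref{chafai_gap}). Isolating the preserved $\mathbf{1}$-direction is the essential input; the remainder is routine iterated conditioning on $\chi^2_d$ variables.
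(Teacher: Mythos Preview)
Your argument is correct, and it shares the same starting point as the paper's proof: both compute the Jacobian via the Kronecker decomposition $\partial \mathbf{X}_L/\partial \mathbf{W}^V_\ell = \mathbf{P}_1 \otimes \mathbf{P}_2$ with $\mathbf{P}_1 = \mathbf{U}$ and $\mathbf{P}_2 = \mathbf{V}$, and both ultimately isolate the $\mathbf{1}$-direction of the attention product to harvest the dominant contribution. The paper, however, takes a spectral route: it lower-bounds the Frobenius norm by $s_1^2(\mathbf{P}_1)\,s_1^2(\mathbf{P}_2)$, then invokes its Theorem~\ref{theorem_s1} (built on Lemmas~\ref{lemmaA} and~\ref{lemmaB}) to show $s_1(\mathbf{P}_1)\approx d^{(\ell-1)/2}$, and cites an external edge result for products of Gaussian matrices to obtain $s_1^2(\mathbf{P}_2)\sim T^{L-\ell}\frac{(L-\ell+1)^{L-\ell+1}}{(L-\ell)^{L-\ell}}$. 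Your route is more elementary and self-contained: the exact identity $\lVert \mathbf{P}_1\otimes\mathbf{P}_2\rVert_F^2 = \lVert \mathbf{P}_1\rVert_F^2\lVert \mathbf{P}_2\rVert_F^2$ together with the deterministic projection $\lVert \mathbf{U}\rVert_F^2\geq \lVert \mathbf{P}\mathbf{U}\rVert_F^2$ lets you bypass Lemma~\ref{lemmaB} entirely---row-stochasticity and Cauchy--Schwarz suffice, with no appeal to the spectral gap of $\mathbf{A}^\perp$---and the iterated $\chi^2_d$ argument (which is essentially the content of the paper's Lemma~\ref{lemmaA}) replaces the edge result for $\mathbf{P}_2$. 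The price you pay is a less explicit constant: the paper extracts the Fuss--Catalan edge value $\frac{(L-\ell+1)^{L-\ell+1}}{(L-\ell)^{L-\ell}}$, whereas your bound yields $(1-\delta)^{L-1}\gamma^{-(L-1)}$, which still depends on $L$ and $\gamma$ but not in the clean $L-\ell$ form suggested by the notation $C_{L-\ell}$. For the single-layer case the two proofs coincide.
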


\subsection{Removing the gap}

As previously seen, we can write an i.i.d. Markov matrix $\mathbf{A} \in \mathbb{R}^{T \times T}$ as 
\begin{equation}
    \mathbf{A} = \frac{1}{T} \mathbf{1}_{T \times T} + \mathbf{A}^{\perp},
\end{equation}
where $\mathbf{1}_{T \times T}$ is the all-ones matrix and $\mathbf{A}^{\perp}$ has a limiting spectrum resembling that of a Gaussian matrix.
% Therefore, $\mathbf{A}$ is a rank-one perturbation of $\mathbf{A}^{\perp}$, whose spectral radius is $\lambda_1(\mathbf{A}^{\perp}) = O(T^{-1/2})$. Although the rank-one perturbation $\frac{1}{T} \mathbf{1}_{T \times T}$ cannot disturb the bulk of the spectrum, it causes the largest eigenvalue to ``escape'' from the bulk to $1$, creating a spectral gap.

Similar to what we did in the last section to remove the outliers and keep the bulk, we simply replace $\mathbf{A}$ with $\mathbf{A}^\perp$ at every layer, i.e.
\begin{equation}\label{eq:modified_model}
    \mathbf{X}^{\perp}_{L} = \mathbf{A}^\perp_L \mathbf{A}^\perp_{L-1} \dots \mathbf{A}^\perp_1 \mathbf{X}_0 \mathbf{W}^V_1 \dots \mathbf{W}^V_{L-1} \mathbf{W}^V_{L},
\end{equation}
where $\mathbf{A}^\perp_{\ell} \coloneqq \mathbf{A}_{\ell} - T^{-1} \mathbf{1}_{T \times T}$. Note, again,  that we directly modify the attention matrices (but \textit{not} the signal representation) and $\mathbf{X}^\perp_{\ell}$ should be understood as the signal at layer $\ell \geq 1$ in a network whose $\mathbf{A}_{\ell}$'s are replaced with $\mathbf{A}^\perp_{\ell}$'s as in \eqref{eq:modified_model}. We set $\mathbf{X}^\perp_{0} =\mathbf{X}_{0}$.

The modified attention has no spectral gap (see Lemma \ref{lemma:bulk_perp}), thus the stable rank of the covariance matrix $\boldsymbol{\Sigma}^\perp_{\ell} \coloneqq \mathbf{X}^\perp_{\ell} {\mathbf{X}^{\perp}_{\ell}}^\top$ does not collapse anymore, as detailed in Proposition\ \ref{prop:resolved_sr_collapse} (cf. Proposition\ \ref{prop:cov_sr_collapse}).
\begin{proposition}[Resolved rank collapse in width]\label{prop:resolved_sr_collapse}
Let $\mathbf{X}^{\perp}_{\ell} = \mathbf{A}^{\perp}_{\ell} \mathbf{X}^{\perp}_{\ell-1} \mathbf{W}^{V}_{\ell}$ be the signal at layer $\ell$ in our modified model \eqref{eq:modified_model} and $\mathbf{\Sigma}^{\perp}_{\ell} \in \mathbb{R}^{T \times T}$ be its covariance matrix. Then, almost surely, the rank does not collapse, i.e., there exists a constant $C_{\ell}>0$ such that,
\begin{equation}
    \lim_{T\to \infty} \frac{\mathrm{sr}(\mathbf{\Sigma}^{\perp}_{\ell})}{T} = C_{\ell}.
\end{equation}
\end{proposition}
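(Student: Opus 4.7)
The plan is to combine a rotational-invariance reduction with standard free-probability machinery, and then read off the stable rank from the fourth moment and the right edge of the limiting singular value distribution. First, since $\mathbf{X}_0 \mathbf{X}_0^\top = \mathbf{I}_T$ and $\mathbf{W}_1^V$ has i.i.d.~Gaussian entries, rotational invariance gives $\mathbf{X}_0 \mathbf{W}_1^V \stackrel{d}{=} \tilde{\mathbf{W}}$, where $\tilde{\mathbf{W}} \in \mathbb{R}^{T \times d}$ has i.i.d.~$\mathcal{N}(0,1)$ entries. Hence
\begin{equation*}
    \mathbf{X}^\perp_\ell \stackrel{d}{=} \mathbf{A}^\perp_\ell \cdots \mathbf{A}^\perp_1\, \tilde{\mathbf{W}}\, \mathbf{W}^V_2 \cdots \mathbf{W}^V_\ell,
\end{equation*}
and the deterministic input $\mathbf{X}_0$ disappears from the analysis.

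Next I would introduce the natural scalings $\mathbf{U}_i \coloneqq \sqrt{T}\,\mathbf{A}^\perp_i$, $\mathbf{G} \coloneqq \tilde{\mathbf{W}}/\sqrt{d}$, and $\mathbf{V}_j \coloneqq \mathbf{W}^V_j/\sqrt{d}$, so that each of these $2\ell$ independent factors has singular values of order $1$ with a known, compactly supported limiting distribution (quarter-circle for the $\mathbf{U}_i$ by Lemma~\ref{lemma:bulk_perp}, Marchenko-Pastur of aspect ratio $\gamma$ for $\mathbf{G}$, quarter-circle for the $\mathbf{V}_j$). Pulling out the scale,
\begin{equation*}
    \mathbf{X}^\perp_\ell \stackrel{d}{=} \gamma^{-\ell/2}\, \mathbf{U}_\ell \cdots \mathbf{U}_1\, \mathbf{G}\, \mathbf{V}_2 \cdots \mathbf{V}_\ell,
\end{equation*}
so the spectrum of $\mathbf{X}^\perp_\ell$ equals that of an $O(1)$-sized random product up to a deterministic factor that cancels in the stable-rank ratio.

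The heart of the proof is free probability. Independent Gaussian matrices are asymptotically free from any independent ensemble with a limiting spectral distribution, and this property transfers to the $\mathbf{U}_i$ via Lemma~\ref{lemma:bulk_perp}; therefore the empirical singular value distribution of the product converges almost surely to a compactly supported measure $\nu_\ell$ obtained by iterated free multiplicative convolution. The limit satisfies $\int s^4\, d\nu_\ell(s) > 0$, and its right edge $s^\star_\ell$ is finite. In parallel, Theorem~\ref{chafai_gap} applied to each $\mathbf{A}^\perp_i$, together with Bai--Yin-type edge convergence for the Gaussian factors and submultiplicativity $s_1(\prod_i \mathbf{M}_i) \le \prod_i s_1(\mathbf{M}_i)$, pins $s_1(\mathbf{U}_\ell \cdots \mathbf{V}_\ell) \to s^\star_\ell$ almost surely.

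Using $\|\mathbf{\Sigma}^\perp_\ell\|_F^2 = \sum_i s_i(\mathbf{X}^\perp_\ell)^4$ and $\|\mathbf{\Sigma}^\perp_\ell\| = s_1(\mathbf{X}^\perp_\ell)^2$, the $\gamma^{-\ell/2}$ factors cancel and
\begin{equation*}
    \frac{\mathrm{sr}(\mathbf{\Sigma}^\perp_\ell)}{T} = \frac{\tfrac{1}{T}\sum_i s_i(\mathbf{X}^\perp_\ell)^4}{s_1(\mathbf{X}^\perp_\ell)^4} \xrightarrow{a.s.} \frac{\int s^4\, d\nu_\ell(s)}{(s^\star_\ell)^4} \eqqcolon C_\ell > 0.
\end{equation*}
The hardest part is the edge statement: bulk convergence alone is not sufficient, since in principle a hidden BBP-type outlier could emerge from the repeated multiplication. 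Controlling this is precisely what the spectral-gap removal in $\mathbf{A}^\perp$ is designed to guarantee---no $\mathbf{U}_i$ carries an outlier, so none can propagate through the product---and once this is in place the matching of the fourth moment with the edge closes the argument.
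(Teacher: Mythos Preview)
Your overall strategy---rotational reduction, normalisation of each factor, and free multiplicative convolution for the bulk---is essentially the same as the paper's. Where you diverge is in the treatment of the denominator $s_1(\mathbf{X}^\perp_\ell)$. The paper does not attempt to prove that the top singular value sticks to the right edge $s^\star_\ell$ of $\nu_\ell$; instead it simply bounds the denominator from above by submultiplicativity, $s_1^2(\mathbf{X}^\perp_\ell) \le \prod_i s_1^2(\mathbf{A}^\perp_i)\prod_j s_1^2(\mathbf{W}^V_j)$, uses individual edge convergence on each factor, and concludes only that $\liminf_{T\to\infty}\mathrm{sr}(\mathbf{\Sigma}^\perp_\ell)/T$ is bounded below by a positive constant. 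In particular the paper never identifies the limit.

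Your argument is more ambitious but contains a real gap at precisely this point. You write that submultiplicativity ``pins $s_1(\mathbf{U}_\ell\cdots\mathbf{V}_\ell)\to s^\star_\ell$''. It does not: submultiplicativity yields $\limsup s_1 \le \prod_i s_1(\mathbf{U}_i)\cdot s_1(\mathbf{G})\cdot\prod_j s_1(\mathbf{V}_j)$, and the right-hand side converges to the \emph{product of the individual edges}, which is strictly larger than the right edge of the free multiplicative convolution (e.g.\ for two square Ginibre factors the product of edges is $4$ for the singular values, while the edge of the Fuss--Catalan law is $\sqrt{27/4}$). Bulk convergence gives the matching lower bound $\liminf s_1 \ge s^\star_\ell$, but you are left with a non-trivial interval, not a point. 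Showing that the product has no outlier above $s^\star_\ell$ is a genuine ``edge at the edge'' statement for products of non-Gaussian random matrices (your $\mathbf{U}_i$ are centred Markov, not Ginibre), and it does not follow from the ingredients you list. If you can supply that edge result, your proof yields the exact constant $C_\ell = \int s^4\,d\nu_\ell/(s^\star_\ell)^4$, which is strictly more than the paper proves; without it, your argument collapses to the same lower bound the paper obtains, via the same submultiplicativity step.
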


Our modification also mitigates the average growth of the gradients. Proposition\ \ref{prop:resolved_exploding_gradient} establishes a linear growth rate for $\lVert \partial \mathbf{X}^{\perp}_L / \partial \mathbf{W}^V_{\ell} \rVert_F^2$ in expectation, which should be compared to the rate of $T^{L-1}$ from Proposition\ \ref{prop:exploding_gradients}.
\begin{proposition}[Resolved exploding gradients]\label{prop:resolved_exploding_gradient}
Let $\mathbf{X}^{\perp}_{\ell} = \mathbf{A}^{\perp}_{\ell} \mathbf{X}^{\perp}_{\ell-1} \mathbf{W}^{V}_{\ell}$ be the signal at layer $\ell$ in our modified model \eqref{eq:modified_model}. Then, in expectation, the squared norm of the gradients grow linearly with $d$, i.e. there exists a constant $C>0$ such that, 
\begin{equation}
    \lim_{d\to\infty} \frac{1}{d} \mathbb{E} \left\lVert \frac{\partial \mathbf{X}^{\perp}_L}{\partial \mathbf{W}^V_{\ell}}\right\rVert_{F}^{2} = C.
\end{equation}
\end{proposition}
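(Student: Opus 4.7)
The plan is to reduce the target expectation to a product of two independent expectations and to evaluate each by iterated conditioning, peeling off one random matrix at a time. Because attention matrices act on the left of $\mathbf{X}_0$ and value matrices on the right, I can write $\mathbf{X}_L^\perp = \mathbf{L}_\ell\,\mathbf{W}_\ell^V\,\mathbf{R}_\ell$ with
\begin{equation*}
\mathbf{L}_\ell \coloneqq \mathbf{A}_L^\perp \cdots \mathbf{A}_1^\perp \mathbf{X}_0 \mathbf{W}_1^V \cdots \mathbf{W}_{\ell-1}^V, \qquad \mathbf{R}_\ell \coloneqq \mathbf{W}_{\ell+1}^V \cdots \mathbf{W}_L^V,
\end{equation*}
and a direct index calculation gives $\partial(\mathbf{X}_L^\perp)_{ij}/\partial(\mathbf{W}_\ell^V)_{km} = (\mathbf{L}_\ell)_{ik}(\mathbf{R}_\ell)_{mj}$. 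Squaring and summing all four indices, this rank-one structure in $(i,k)$ and $(m,j)$ yields the clean factorisation $\lVert \partial \mathbf{X}_L^\perp/\partial\mathbf{W}_\ell^V\rVert_F^2 = \lVert\mathbf{L}_\ell\rVert_F^2\cdot \lVert\mathbf{R}_\ell\rVert_F^2$; since the two sides share no random matrix, the expectation splits into a product.

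Next I evaluate the two resulting expectations. The Gaussian identity $\mathbb{E}[\mathbf{W}^V(\mathbf{W}^V)^\top]=d\mathbf{I}_d$ combined with cyclicity of the trace gives $\mathbb{E}\lVert\mathbf{M}\mathbf{W}^V\rVert_F^2 = d\,\mathbb{E}\lVert\mathbf{M}\rVert_F^2$ for any $\mathbf{M}$ independent of $\mathbf{W}^V$. Iterating inward on the value matrices in $\mathbf{L}_\ell$ and $\mathbf{R}_\ell$ yields $\mathbb{E}\lVert\mathbf{R}_\ell\rVert_F^2 = d^{L-\ell+1}$ (with the empty product at $\ell=L$ interpreted as $\mathbf{I}_d$) and $\mathbb{E}\lVert\mathbf{L}_\ell\rVert_F^2 = d^{\ell-1}\,\mathbb{E}\lVert\mathbf{P}\mathbf{X}_0\rVert_F^2$, with $\mathbf{P}\coloneqq \mathbf{A}_L^\perp\cdots\mathbf{A}_1^\perp$. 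Under the standing isometric input $\mathbf{X}_0\mathbf{X}_0^\top=\mathbf{I}_T$, cyclic trace further reduces the latter to $\mathbb{E}\lVert\mathbf{P}\rVert_F^2$. The same peeling strategy applies to the attention matrices via the analogue
\begin{equation*}
\mathbb{E}[\mathbf{A}^\perp(\mathbf{A}^\perp)^\top] = c_T\,\mathbf{I}_T,\qquad c_T \coloneqq T\cdot\mathrm{Var}(\mathbf{A}_{1,1}) \sim \sigma_A^2/T,
\end{equation*}
whose off-diagonal entries vanish because distinct rows of $\mathbf{A}^\perp$ are independent and centred, and whose diagonal entries collect the $T$ identical within-row variances. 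Iterating $\mathbb{E}\lVert\mathbf{M}\mathbf{A}^\perp\rVert_F^2 = c_T\,\mathbb{E}\lVert\mathbf{M}\rVert_F^2$ then gives $\mathbb{E}\lVert\mathbf{P}\rVert_F^2 = T\,c_T^L \sim \sigma_A^{2L}/T^{L-1}$.

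Assembling the pieces and using $T=\gamma d$,
\begin{equation*}
\mathbb{E}\left\lVert \frac{\partial \mathbf{X}_L^\perp}{\partial \mathbf{W}_\ell^V}\right\rVert_F^2 \,\sim\, d^{\ell-1}\cdot \frac{\sigma_A^{2L}}{T^{L-1}}\cdot d^{L-\ell+1} \,=\, \frac{\sigma_A^{2L}\,d^L}{T^{L-1}} \,=\, \sigma_A^{2L}\,\gamma^{1-L}\,d,
\end{equation*}
so that the limit exists with $C=\sigma_A^{2L}\gamma^{1-L}$, which with the normalisation $\sigma_A^2=1$ simplifies to $\gamma^{1-L}$. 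The main obstacle is the leading-order estimate $T^2\,\mathrm{Var}(\mathbf{A}_{1,1}) \to \sigma_A^2$, which is the only step where the Markov structure enters nontrivially: one expands $\mathbf{A}_{1,1}=Z_{1,1}/\sum_k Z_{1,k}$, invokes the strong law and the finite fourth-moment assumption of Definition \ref{def:markov} on the denominator, and controls the remainder, exactly as in the moment bookkeeping of \cite{Bordenave_2011}. Everything else is routine trace arithmetic and Gaussian integration.
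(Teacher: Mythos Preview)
Your argument is correct and takes a genuinely different, more elementary route than the paper. Both proofs begin with the same factorisation: writing the Jacobian as a Kronecker product $\mathbf{P}_1^\perp \otimes \mathbf{P}_2$ (your $\mathbf{L}_\ell$ and $\mathbf{R}_\ell$), so that $\lVert\partial\mathbf{X}_L^\perp/\partial\mathbf{W}_\ell^V\rVert_F^2 = \lVert\mathbf{P}_1^\perp\rVert_F^2\,\lVert\mathbf{P}_2\rVert_F^2$. From there the approaches diverge. The paper invokes asymptotic freeness of $\mathbf{P}_1^\perp$ and $\mathbf{P}_2$ to split the limit of the normalised expectation, then identifies each factor as the first moment of a free multiplicative convolution of Marchenko--Pastur laws. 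You instead observe that $\mathbf{P}_1^\perp$ and $\mathbf{P}_2$ are simply \emph{independent}, so the expectation splits exactly for every finite $d$, and you evaluate each piece by repeatedly conditioning and using the second-moment identities $\mathbb{E}[\mathbf{W}^V(\mathbf{W}^V)^\top]=d\mathbf{I}_d$ and $\mathbb{E}[\mathbf{A}^\perp(\mathbf{A}^\perp)^\top]=c_T\mathbf{I}_T$. Your route avoids free probability entirely, needs only the single asymptotic $T^2\mathrm{Var}(\mathbf{A}_{1,1})\to\sigma_A^2$ (which is indeed the moment estimate underpinning \cite{Bordenave_2011}), and delivers the explicit constant $C=\sigma_A^{2L}\gamma^{1-L}$, which the paper leaves implicit. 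The paper's approach has the merit of staying within the free-probability framework used for the bulk results (Propositions~\ref{prop:covariance_bulk} and \ref{prop:jacobian_bulk}), but for this particular statement your direct computation is both tighter and more transparent.
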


\section{Experiments and Further Insights}\label{sec:exp}

\paragraph{Rank collapse observed in practice.} We highlight the practical relevance of our findings by showing rank collapse occurs both in width and depth in widely used transformer encoders like BERT, see Figure \ref{fig:BERT_rank_collapse}. In Figure \ref{fig:rank_collapse_in_width}, all solid lines, corresponding to the vanilla attention layer block, show a stable rank that rapidly converges to $1$ as the context length $T$ grows (at a rate given in Theorem \ref{theorem:first_layer}). This occurs even if additional components such as LayerNorm, skip connections, or both are added on top of the attention block. Provided that rank collapse in depth is an inherent consequence of repeated matrix multiplications, architectural modifications of the attention layer can only slow the collapse rather than completely prevent it. We demonstrate this by showing that rank collapse in depth persists even when the attention matrix is set to the identity matrix---an extreme case with the highest possible stable rank and no spectral gap.

% Famous BERT encoders do exhibit rank collapse both in width and in depth
\begin{figure}[tb!] 
    \centering
    \begin{subfigure}{.45\textwidth}
    \includegraphics[scale=0.35]{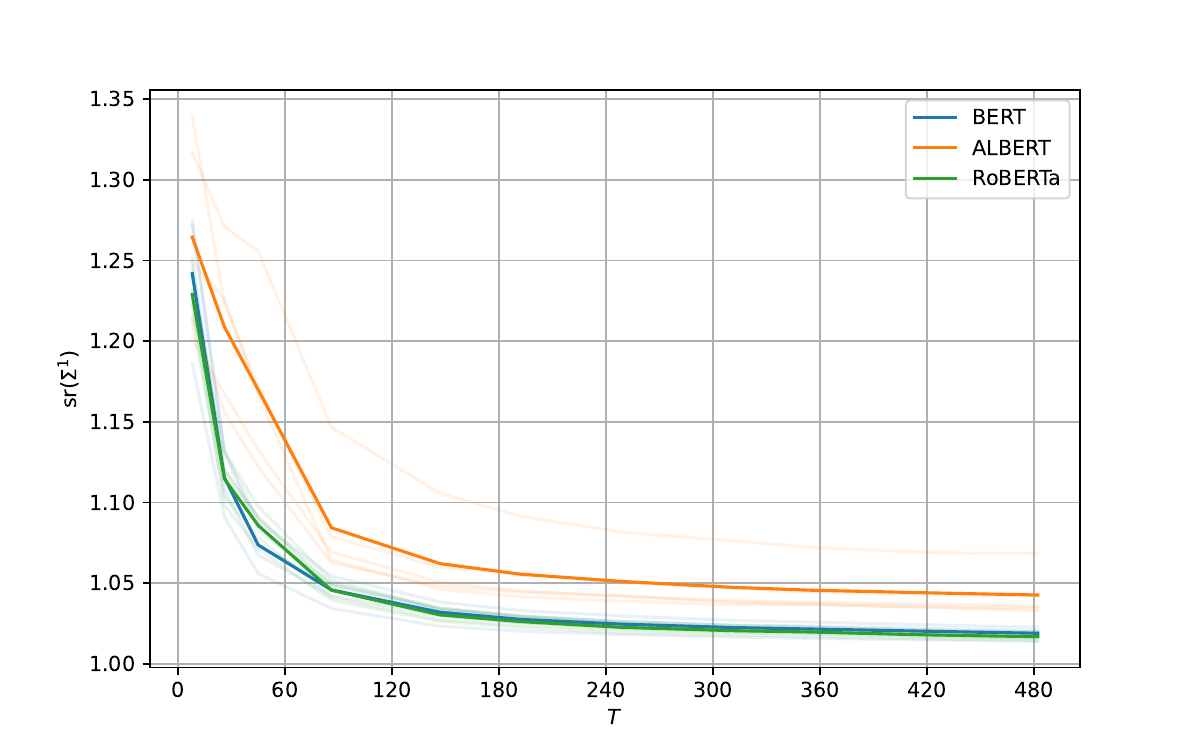}
    \caption{Rank collapse in width}
    \end{subfigure}
    \begin{subfigure}{.45\textwidth}
    \includegraphics[scale=0.35]{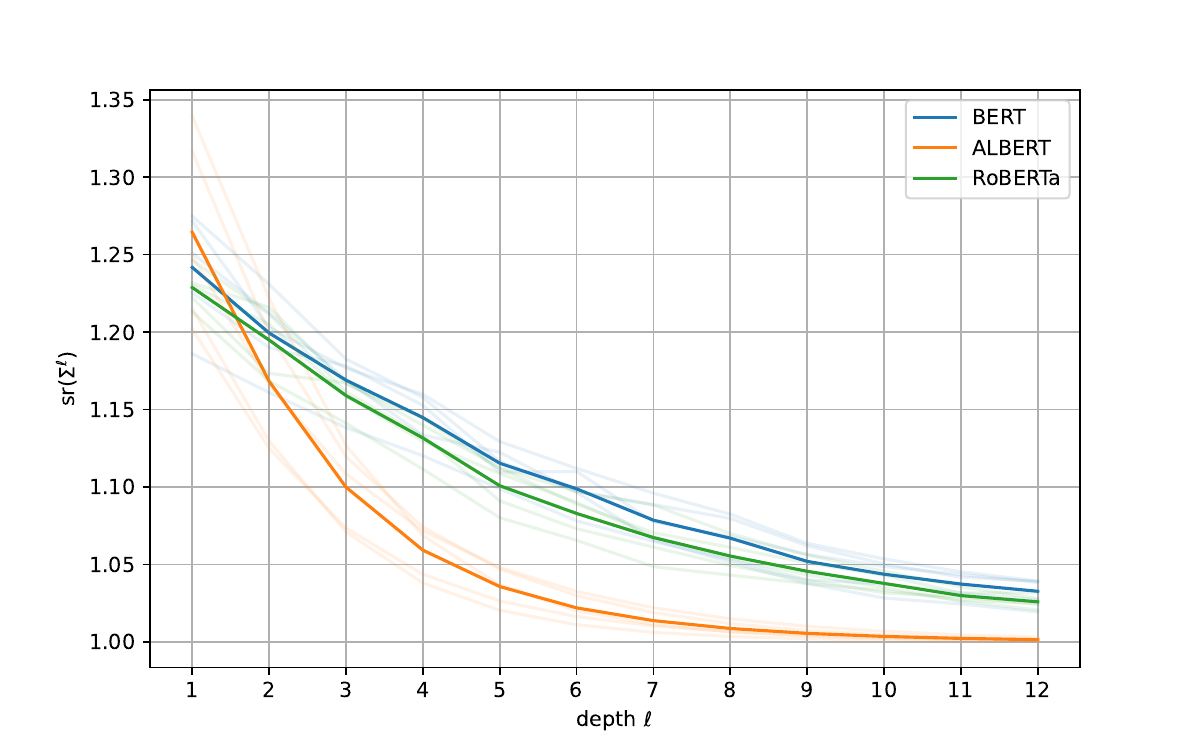}
    \caption{Rank collapse in depth}
    \end{subfigure}
    \caption{Famous transformer encoders suffer from rank collapse at initialisation, both in (a) width and (b) depth. These untrained models are loaded from Hugging Face and sentences from this paper's abstract are tokenised using pre-trained tokenisers.}
    \label{fig:BERT_rank_collapse}
\end{figure}

\paragraph{Removing the spectral gap.} As an input signal propagates through an initialised transformer, we can address both forms of rank collapse---across width and depth---by eliminating the spectral gap induced by the attention matrix at each layer. Figure~\ref{fig:rank_collapse_in_width} reinforce our findings, showing that our solution---indicated by $^{\perp}$ in the legend---consistently mitigates rank collapse. Remarkably, the fix is effective even in deep transformers when combined with additional components such as LayerNorm, skip connections, or both. Indeed, the stable rank now scales up with the context length $T$ (at a linear rate predicted by our theory), in contrast to the original behaviour where it would quickly vanish to $1$. Once the rank collapse in width is mitigated, it naturally slows down the one in depth, as seen in Figure \ref{fig:rank_collapse_in_depth}. Another possible way to slow down rank collapse in depth (though not eliminate it) is to set the value matrices as orthogonal matrices.

After passing an isotropic input through the network, we compute the gradient norm as defined in \eqref{eq:gradient_norm}. When the attention is an i.i.d. Markov matrix, gradient norms are effectively controlled with depth by either applying LayerNorm or removing the spectral gap, as illustrated in Figure\ \ref{fig:gradient_norm_in_depth}. The corresponding lines remain flat, confirming the theoretically expected constant trend with $L$. Without these solutions, the gradients explode exponentially with depth as indicated by the linear trend in the log-log scale, with the derived lower bound of $T^{L-1}$ empirically confirmed. 

\begin{figure}[h] 
    \centering
    \begin{subfigure}{0.48\linewidth}
    \includegraphics[scale=0.35]{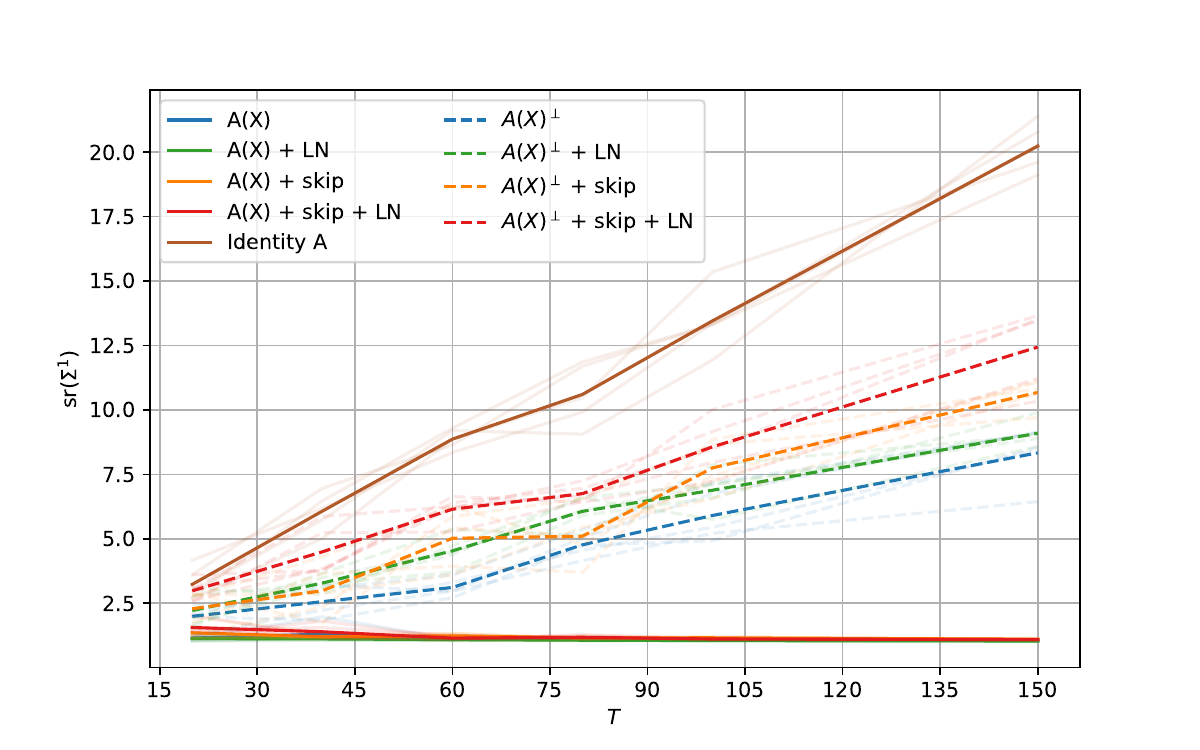}
    \caption{Spectral gap causes rank collapse in width.}
    \label{fig:rank_collapse_in_width}
    \end{subfigure}
    \begin{subfigure}{0.48\linewidth}
    \includegraphics[scale=0.35]{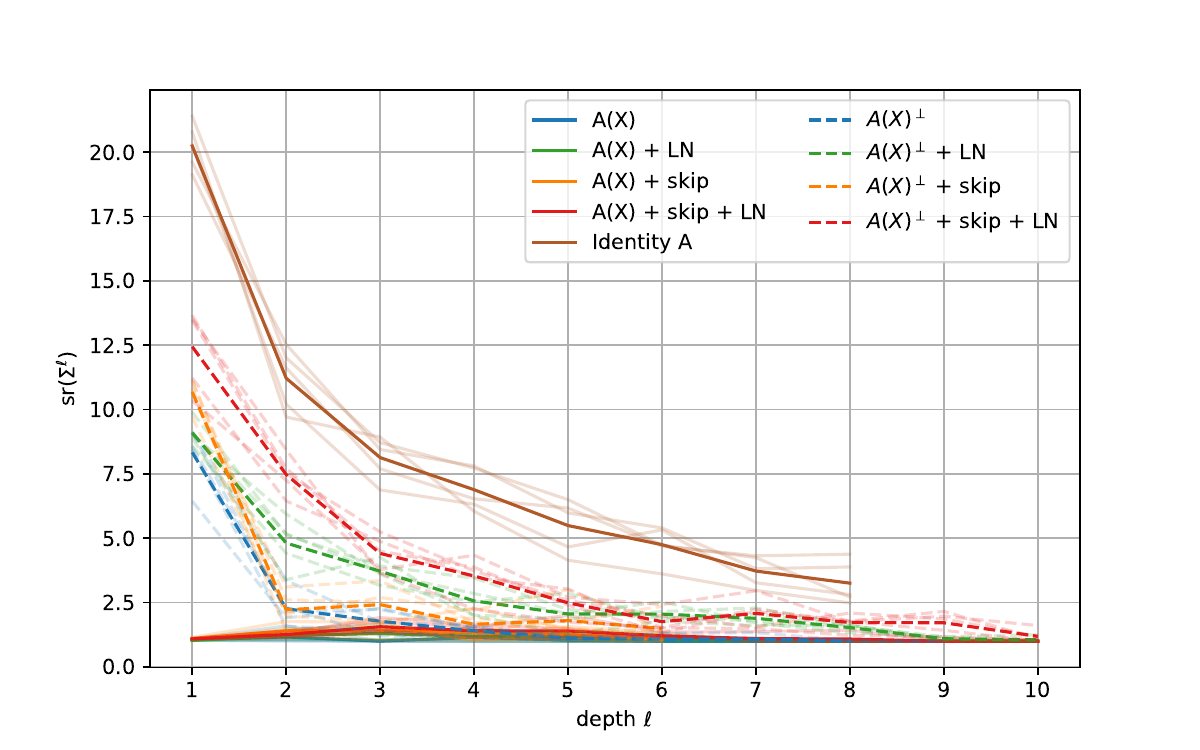}
    \caption{The rank inevitably collapses in depth.}
\label{fig:rank_collapse_in_depth}
    \end{subfigure}
    \caption{Rank collapse occurs both in width and depth. Our fix effectively prevents the rank from collapsing in width within an attention layer. Although rank collapse in depth occurs regardless of the presence of the spectral gap, our fix consistently slows the collapse---a feat no other module achieves.}
    \label{fig:rank_collapse}
\end{figure}

\begin{figure}[tb!] 
    \centering
    \includegraphics[scale=0.45]{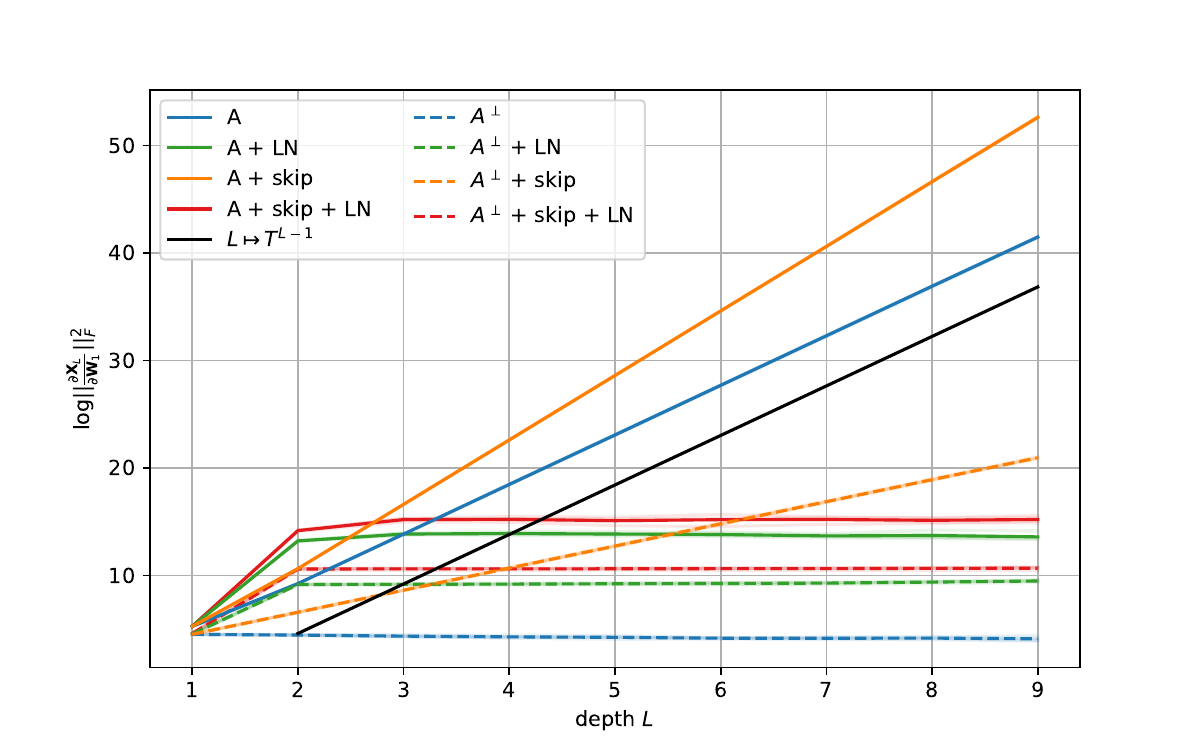}
    %\caption{The gradient norm when the attentions are i.i.d. Markov matrices.}
    \label{fig:gradient_norm_constant_L}
    \caption{In deep transformers initialised with i.i.d. Markov attention, our ``remove the gap" fix is effective, as we can precisely discard the single outlier in the spectrum.}
    \label{fig:gradient_norm_in_depth}
\end{figure}

\paragraph{Future avenues.}
Beyond one attention layer, we conduct an empirical investigation of the spectrum of the actual attention matrix across layers. Surprisingly, as shown in Figure \ref{fig:attention}, we observe the emergence of additional outliers in the spectrum. We assess the effectiveness of our proposed solution---removing the largest spectral outlier---by comparing it to an alternative approach, which consists in removing all outliers. Since identifying and locating these additional outliers is highly non-trivial, we perform an SVD of the attention matrix at each layer and manually identify and remove the extra gaps based on the largest difference between singular values. As depicted in Figure \ref{fig:negligible_benefits}, this computationally expensive approach does not significantly outperform our proposed fix, which is much more efficient. In fact, the two methods achieve comparable results in mitigating rank collapse (first in width, then in depth) and controlling gradient norms. This demonstrates that removing the largest spectral gaps yields the most significant benefit with the lowest cost. Finally, while training dynamics are beyond the scope of this work, our theoretical framework provides a fresh perspective on recent empirical studies that report significant training benefits from various ad hoc fixes. Our analysis offers insights into \textit{why} these fixes may be effective. These studies include approaches such as ``centering" self-attention layers \cite{ali2023centered}, where the authors remove exactly $T^{-1} \mathbf{1}_{T \times T}$ from the attention layer, and ``differentiating" the attention mechanism, as seen in the Differential Transformer architecture, where two softmax attention matrices are subtracted from each other at each layer to ``cancel the noise"; see \cite{ye2024differential}. We can now interpret these approaches as implicit efforts to address the spectral gap.

\begin{figure*}[h]
    \centering
    \includegraphics[width=\linewidth]{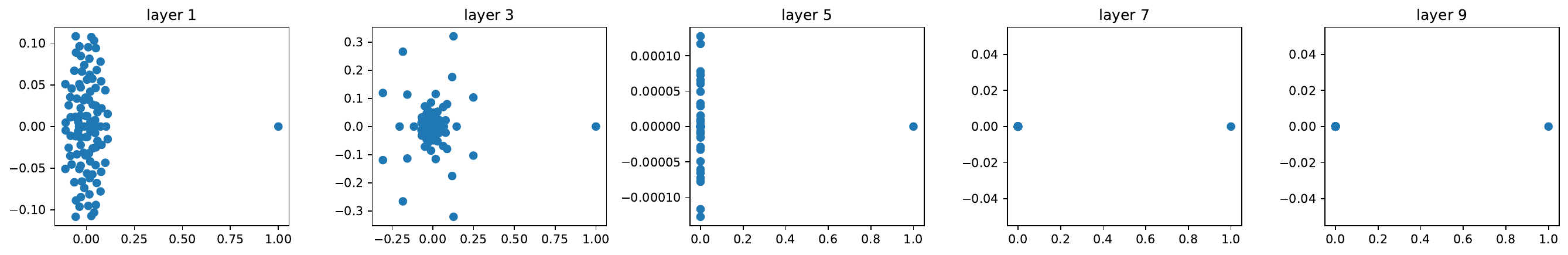}
    \caption{Spectrum of a $T \times T$ softmax-based attention matrix (with $T=200$) across layers. Some additional outlier eigenvalues emerge from the spectrum with depth in an intricate way. The matrix, however, consistently converges to uniform attention for large $\ell$.}
    \label{fig:attention}
\end{figure*}

\begin{figure}[h]
    \centering
\includegraphics[scale=0.45]  {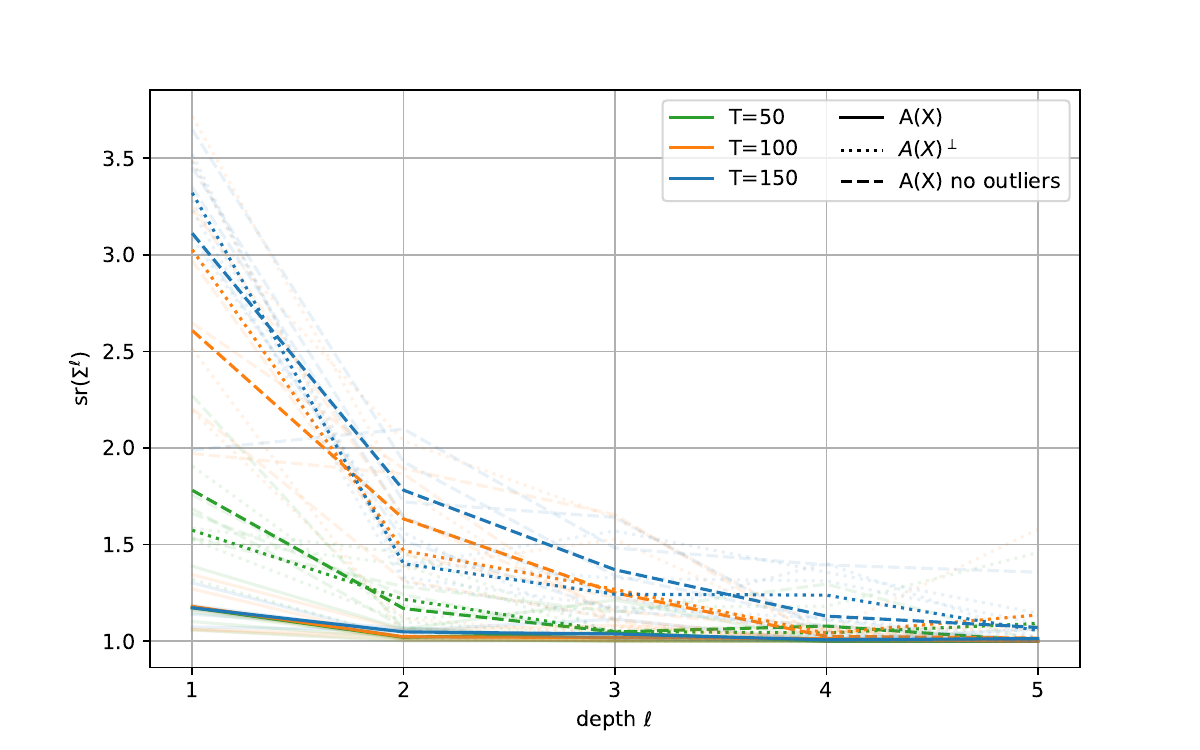}
    \caption{In deep transformers, our ``remove the gap" fix is nearly as effective as removing all outliers from the spectrum despite having a significantly lower computational cost.}
    \label{fig:negligible_benefits}
\end{figure}

\section{Conclusion}\label{sec:discuss}

We introduced a new mathematical framework for studying the self-attention mechanism at initialisation, leveraging results from random matrix theory and free probability. By analysing the spectral properties of i.i.d. Markov matrices, we diagnosed random softmax-based attention with a spectral gap that leads to rank collapse in width---a phenomenon revealed and demonstrated for the first time by our analysis---alongside the previously established rank collapse in depth and exploding gradients.

We proposed a simple modification of the attention mechanism, which is provably effective in slowing rank collapse. Additionally, we empirically discovered that the spectra of standard key-query attention matrices often feature multiple outliers in deeper layers. Context length appears to have been overlooked in previous analyses of rank collapse in favour of depth. Given the continuous growth in the size of transformer models used in practice, we hope this work encourages the community to reconsider the emphasis on depth in theoretical developments, particularly in the analysis of rank collapse.

\section{Acknowledgements}
TNS is supported by the UK Engineering and Physical Sciences Research Council (EPSRC) through the grant EP/W523781/1. JT acknowledges support from His Majesty’s Government in the development of this research as well as by EPSRC grant EP/Y028872/1, Mathematical Foundations of Intelligence: An “Erlangen Programme” for AI.
\newpage

\bibliography{biblio}

\newcommand{\etalchar}[1]{$^{#1}$}
\begin{thebibliography}{XBSD{\etalchar{+}}18}

\bibitem[AGW23]{ali2023centered}
Ameen Ali, Tomer Galanti, and Lior Wolf.
\newblock Centered self-attention layers.
\newblock {\em arXiv preprint arXiv:2306.01610}, 2023.

\bibitem[AIK13]{Akemann_2013}
Gernot Akemann, Jesper~R. Ipsen, and Mario Kieburg.
\newblock Products of rectangular random matrices: Singular values and
  progressive scattering.
\newblock {\em Physical Review E}, 88(5), November 2013.

\bibitem[BCB16]{bahdanau2016neural}
Dzmitry Bahdanau, Kyunghyun Cho, and Yoshua Bengio.
\newblock Neural machine translation by jointly learning to align and
  translate, 2016.

\bibitem[BCC11]{Bordenave_2011}
Charles Bordenave, Pietro Caputo, and Djalil Chafaï.
\newblock Circular law theorem for random markov matrices.
\newblock {\em Probability Theory and Related Fields}, 152(3–4):751–779,
  January 2011.

\bibitem[BK23]{bong2023tight}
Heejong Bong and Arun~Kumar Kuchibhotla.
\newblock Tight concentration inequality for sub-weibull random variables with
  generalized bernstien orlicz norm.
\newblock {\em arXiv preprint arXiv:2302.03850}, 2023.

\bibitem[DBK24]{dovonon2024setting}
Gbetondji~JS Dovonon, Michael~M Bronstein, and Matt~J Kusner.
\newblock Setting the record straight on transformer oversmoothing.
\newblock {\em arXiv preprint arXiv:2401.04301}, 2024.

\bibitem[DCL21]{dong2021attention}
Yihe Dong, Jean-Baptiste Cordonnier, and Andreas Loukas.
\newblock Attention is not all you need: Pure attention loses rank doubly
  exponentially with depth.
\newblock In {\em International Conference on Machine Learning}, pages
  2793--2803. PMLR, 2021.

\bibitem[Gem80]{Geman}
Stuart Geman.
\newblock {A Limit Theorem for the Norm of Random Matrices}.
\newblock {\em The Annals of Probability}, 8(2):252 -- 261, 1980.

\bibitem[Han18]{hanin2018neural}
Boris Hanin.
\newblock Which neural net architectures give rise to exploding and vanishing
  gradients?
\newblock {\em Advances in neural information processing systems}, 31, 2018.

\bibitem[HBSDN20]{hron2020infinite}
Jiri Hron, Yasaman Bahri, Jascha Sohl-Dickstein, and Roman Novak.
\newblock Infinite attention: Nngp and ntk for deep attention networks.
\newblock In {\em International Conference on Machine Learning}, pages
  4376--4386. PMLR, 2020.

\bibitem[HMZ{\etalchar{+}}22]{he2022deep}
Bobby He, James Martens, Guodong Zhang, Aleksandar Botev, Andrew Brock,
  Samuel~L Smith, and Yee~Whye Teh.
\newblock Deep transformers without shortcuts: Modifying self-attention for
  faithful signal propagation.
\newblock In {\em The Eleventh International Conference on Learning
  Representations}, 2022.

\bibitem[KC22]{kuchibhotla2022moving}
Arun~Kumar Kuchibhotla and Abhishek Chakrabortty.
\newblock Moving beyond sub-gaussianity in high-dimensional statistics:
  Applications in covariance estimation and linear regression.
\newblock {\em Information and Inference: A Journal of the IMA},
  11(4):1389--1456, 2022.

\bibitem[MAT22]{murray2022activation}
Michael Murray, Vinayak Abrol, and Jared Tanner.
\newblock Activation function design for deep networks: linearity and effective
  initialisation.
\newblock {\em Applied and Computational Harmonic Analysis}, 59:117--154, 2022.

\bibitem[MNP{\.Z}15]{mlotkowski2015spectral}
Wojciech M{\l}otkowski, Maciej~A Nowak, Karol~A Penson, and Karol
  {\.Z}yczkowski.
\newblock Spectral density of generalized wishart matrices and free
  multiplicative convolution.
\newblock {\em Physical Review E}, 92(1):012121, 2015.

\bibitem[MS17]{mingo2017free}
James~A Mingo and Roland Speicher.
\newblock {\em Free probability and random matrices}, volume~35.
\newblock Springer, 2017.

\bibitem[NAB{\etalchar{+}}22]{noci2022signal}
Lorenzo Noci, Sotiris Anagnostidis, Luca Biggio, Antonio Orvieto, Sidak~Pal
  Singh, and Aurelien Lucchi.
\newblock Signal propagation in transformers: Theoretical perspectives and the
  role of rank collapse.
\newblock {\em Advances in Neural Information Processing Systems},
  35:27198--27211, 2022.

\bibitem[NSN24]{nait2024simple}
Thiziri Nait~Saada and Alireza Naderi.
\newblock A simple proof for the almost sure convergence of the largest
  singular value of a product of gaussian matrices.
\newblock {\em arXiv preprint arXiv:2409.20180}, 2024.

\bibitem[PSG17]{pennington2017resurrecting}
Jeffrey Pennington, Samuel Schoenholz, and Surya Ganguli.
\newblock Resurrecting the sigmoid in deep learning through dynamical isometry:
  theory and practice.
\newblock {\em Advances in neural information processing systems}, 30, 2017.

\bibitem[PSG18]{pennington2018emergence}
Jeffrey Pennington, Samuel Schoenholz, and Surya Ganguli.
\newblock The emergence of spectral universality in deep networks.
\newblock In {\em International Conference on Artificial Intelligence and
  Statistics}, pages 1924--1932. PMLR, 2018.

\bibitem[RDD{\etalchar{+}}24]{ramapuram2024theory}
Jason Ramapuram, Federico Danieli, Eeshan Dhekane, Floris Weers, Dan Busbridge,
  Pierre Ablin, Tatiana Likhomanenko, Jagrit Digani, Zijin Gu, Amitis Shidani,
  et~al.
\newblock Theory, analysis, and best practices for sigmoid self-attention.
\newblock {\em arXiv preprint arXiv:2409.04431}, 2024.

\bibitem[SGX{\etalchar{+}}22]{shi2022revisiting}
Han Shi, Jiahui Gao, Hang Xu, Xiaodan Liang, Zhenguo Li, Lingpeng Kong, Stephen
  Lee, and James~T Kwok.
\newblock Revisiting over-smoothing in bert from the perspective of graph.
\newblock {\em arXiv preprint arXiv:2202.08625}, 2022.

\bibitem[Tao12]{tao2012topics}
Terence Tao.
\newblock {\em Topics in random matrix theory}, volume 132.
\newblock American Mathematical Soc., 2012.

\bibitem[Tho76]{thompson1976behavior}
Robert~C Thompson.
\newblock The behavior of eigenvalues and singular values under perturbations
  of restricted rank.
\newblock {\em Linear Algebra and its Applications}, 13(1-2):69--78, 1976.

\bibitem[VSP{\etalchar{+}}17]{vaswani2017}
Ashish Vaswani, Noam Shazeer, Niki Parmar, Jakob Uszkoreit, Llion Jones, and
  Aidan~N Gomez.
\newblock Attention is all you need.
\newblock {\em Advances in neural information processing systems},
  30:5998--6008, 2017.

\bibitem[WLGK23]{wortsman2023replacing}
Mitchell Wortsman, Jaehoon Lee, Justin Gilmer, and Simon Kornblith.
\newblock Replacing softmax with relu in vision transformers, 2023.

\bibitem[WZCW22]{wang2022anti}
Peihao Wang, Wenqing Zheng, Tianlong Chen, and Zhangyang Wang.
\newblock Anti-oversmoothing in deep vision transformers via the fourier domain
  analysis: From theory to practice.
\newblock {\em arXiv preprint arXiv:2203.05962}, 2022.

\bibitem[XBSD{\etalchar{+}}18]{xiao2018dynamical}
Lechao Xiao, Yasaman Bahri, Jascha Sohl-Dickstein, Samuel Schoenholz, and
  Jeffrey Pennington.
\newblock Dynamical isometry and a mean field theory of cnns: How to train
  10,000-layer vanilla convolutional neural networks.
\newblock In {\em International Conference on Machine Learning}, pages
  5393--5402. PMLR, 2018.

\bibitem[YDX{\etalchar{+}}24]{ye2024differential}
Tianzhu Ye, Li~Dong, Yuqing Xia, Yutao Sun, Yi~Zhu, Gao Huang, and Furu Wei.
\newblock Differential transformer.
\newblock {\em arXiv preprint arXiv:2410.05258}, 2024.

\end{thebibliography}
\bibliographystyle{alpha}

\newpage
\appendix

\section{Appendix}
The Appendix is organised as follows:
\begin{itemize}
    \item Section \ref{sec:more_experiments} provides strong empirical support for the main theorem of this paper even when its assumptions are violated in practice.
    \item Section \ref{sec:proofs} presents the proofs of the paper's propositions.
    \item Section \ref{sec:lemmas} provides auxiliary lemmas along with their proofs that support the previous section. 
    \item Section \ref{sec:dynamical_isometry} explores dynamical isometry in transformers, questioning their ability to propagate information \textit{faithfully} through attention layers once the problematic outliers have been removed.
    \item Section \ref{sec:implementation_details} outlines comprehensive experimental details, ensuring reproducibility.
\end{itemize}

\subsection{Justification of our theoretical assumptions }\label{sec:more_experiments}

In this section, we empirically support the finding of our main theorem---a spectral gap arises in width within attention layers---even when its assumptions are violated. Our formal proof should be seen more as a proof of concept, and its conditions to hold are not necessary in practice, but rather made for mathematical rigor.

\paragraph{Inputs do not need to be isometric in practice.} In Figure \ref{fig:no_need_for_isometry_1},  sentences are tokenised with a pre-trained tokeniser, making inputs to our randomly initialised BERT clearly non-isometric. We include $\mathbf{X}_0 \mathbf{X}_0^T$
 to illustrate its deviation from identity. When tokens are instead assigned random embeddings as in Figure \ref{fig:no_need_for_isometry_2}, the covariance matrix is closer to orthogonality. Yet, the spectral gap and rank collapse persist in both cases. We also show the spectra of a random attention layer with (synthetic) non-isometric input tokens to mirror Figure 1 of the main in Figure \ref{fig:ablation_orthogonality_synthetic}. 

\begin{figure}[h!]
    \centering
    \begin{subfigure}[b]{0.3\linewidth}
        \includegraphics[width=\linewidth]{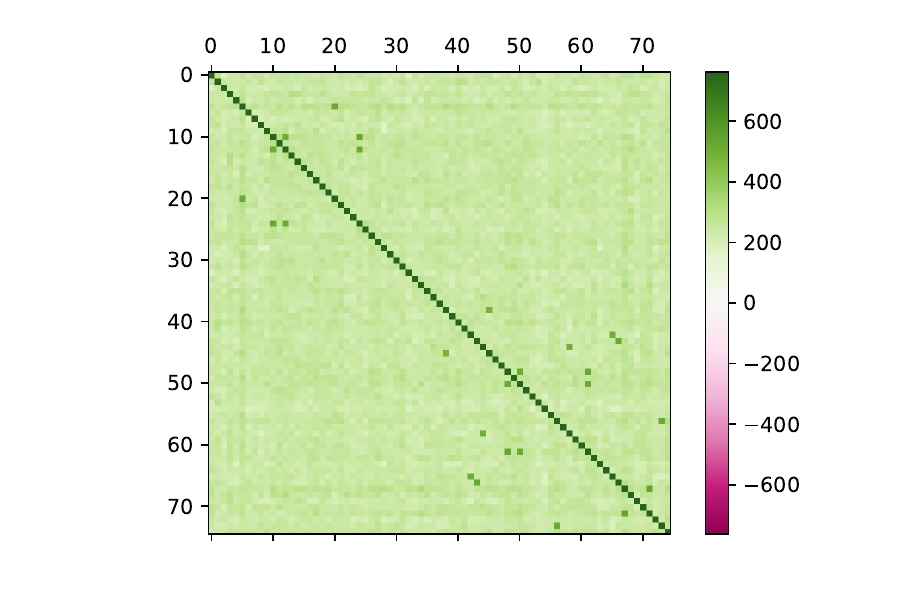}
        \caption{$\mathbf{X}_0 \mathbf{X}_0^\top$ when $T=75$}
        \label{subfig:a}
    \end{subfigure}
    \begin{subfigure}[b]{0.3\linewidth}
        \includegraphics[width=\linewidth]{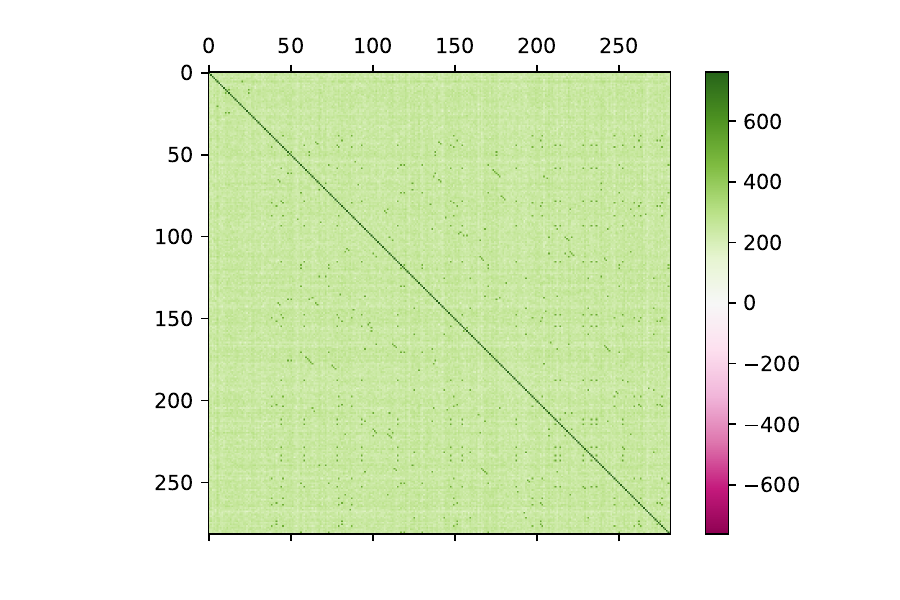}
        \caption{$\mathbf{X}_0 \mathbf{X}_0^\top$ when $T=282$}
        \label{subfig:b}
    \end{subfigure}
    \begin{subfigure}[b]{0.3\linewidth}
        \includegraphics[width=\linewidth]{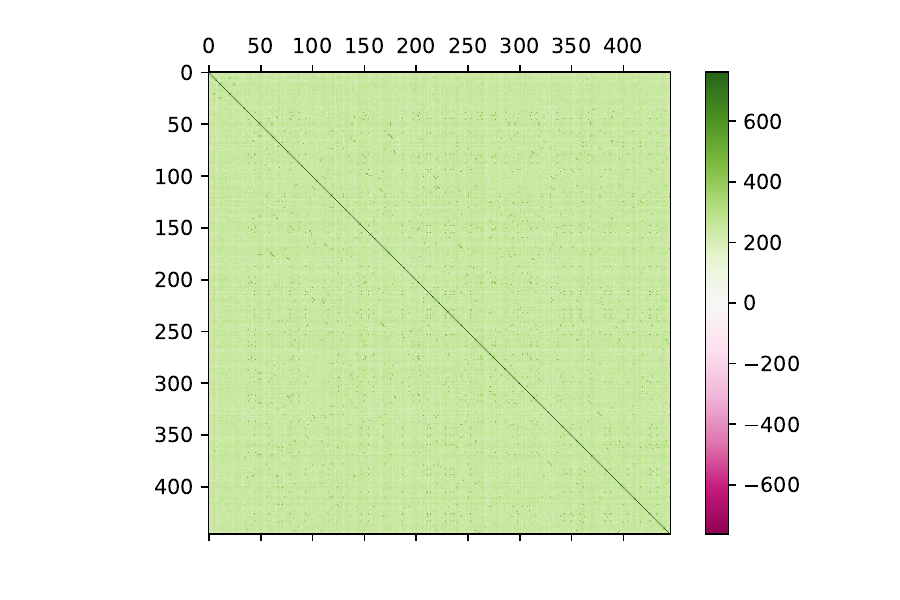}
        \caption{$\mathbf{X}_0 \mathbf{X}_0^\top$ when $T=446$}
        \label{subfig:c}
    \end{subfigure}

    \vspace{0.3cm}

    \begin{subfigure}[b]{0.4\linewidth}
        \centering
        \includegraphics[width=\linewidth]{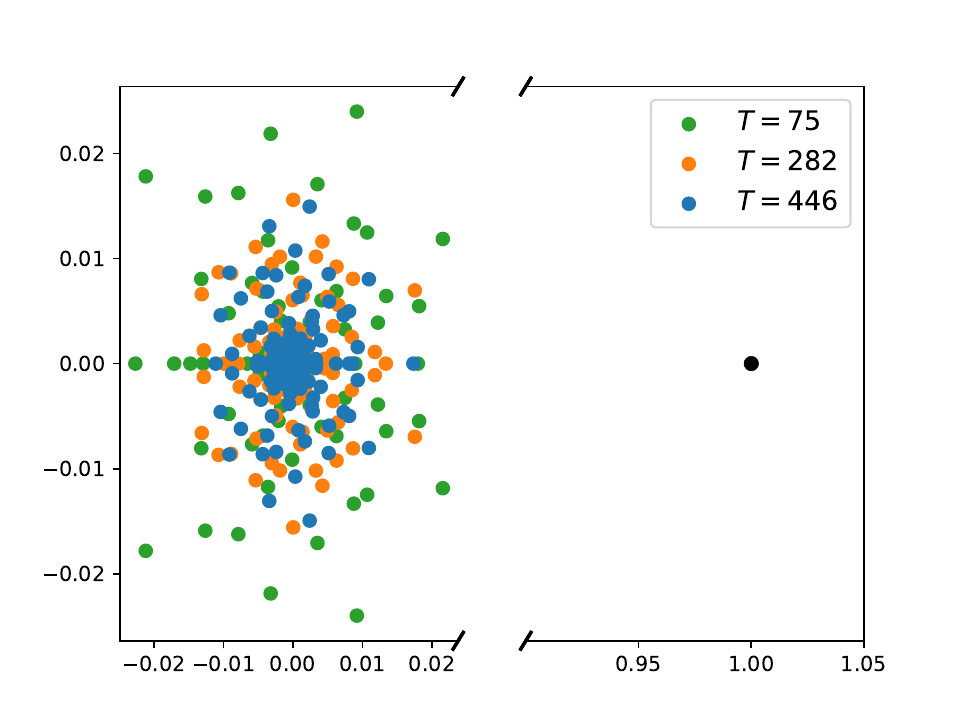}
        \caption{Eigenvalues in the complex plane of the attention matrix in the first head of layer $1$.}
        \label{subfig:spectral_gap_pretrained_tokenizer}
    \end{subfigure}

    \caption{Spectral gap (second row) persists even when the inputs are not orthogonal (see first row). Experiment conducted with a randomly initialised RoBERTa encoder, hence $d=768$ and $T<512$ by construction, on real text data from our abstract, processed by a pretrained tokenizer available on HuggingFace. We present a single realisation, though this behavior is consistently observed across multiple runs.}
    \label{fig:no_need_for_isometry_1}
\end{figure}

\begin{figure}[h!]
    \centering
    \begin{subfigure}[b]{0.3\linewidth}
        \includegraphics[width=\linewidth]{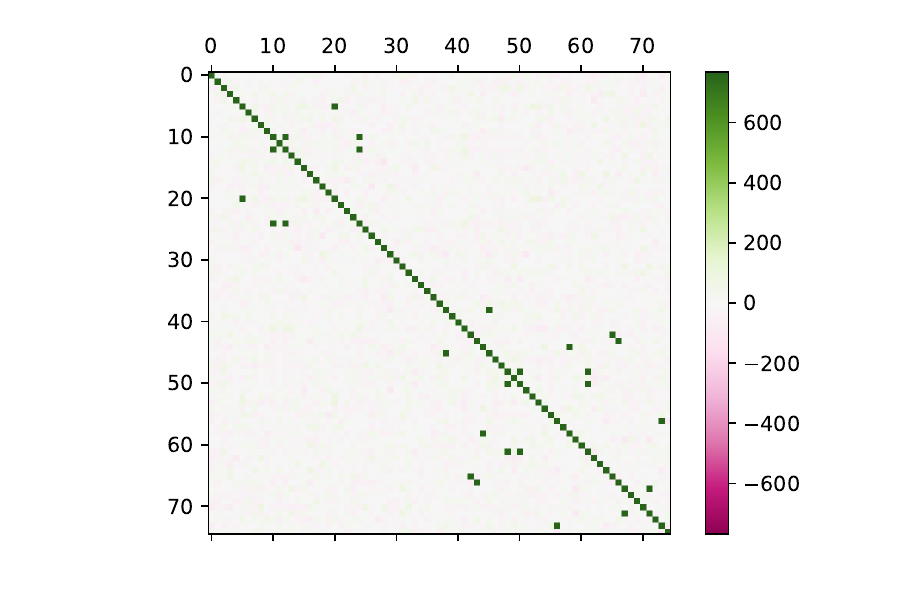}
        \caption{$\mathbf{X}_0 \mathbf{X}_0^\top$ when $T=75$}
        \label{subfig:a2}
    \end{subfigure}
    \begin{subfigure}[b]{0.3\linewidth}
        \includegraphics[width=\linewidth]{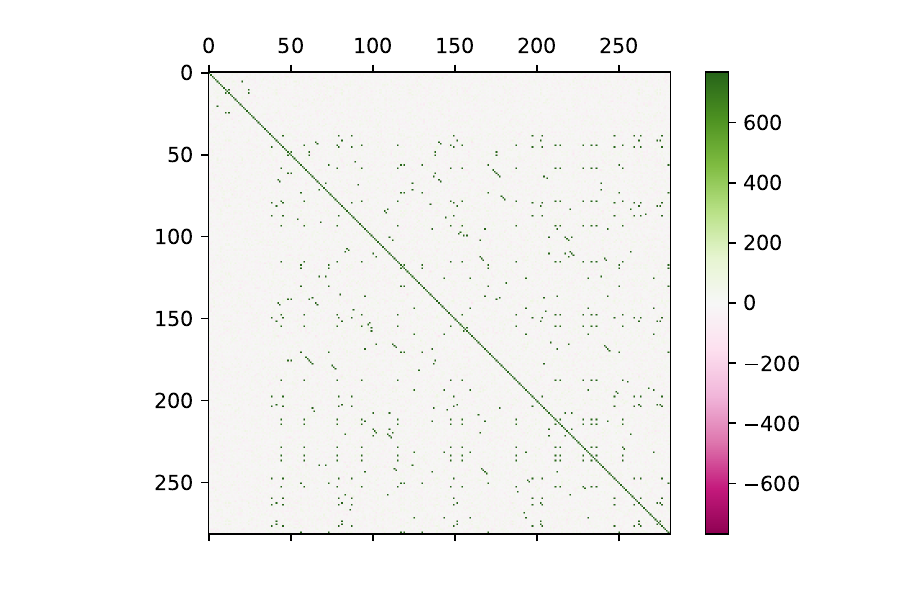}
        \caption{$\mathbf{X}_0 \mathbf{X}_0^\top$ when $T=282$}
        \label{subfig:b2}
    \end{subfigure}
    \begin{subfigure}[b]{0.3\linewidth}
        \includegraphics[width=\linewidth]{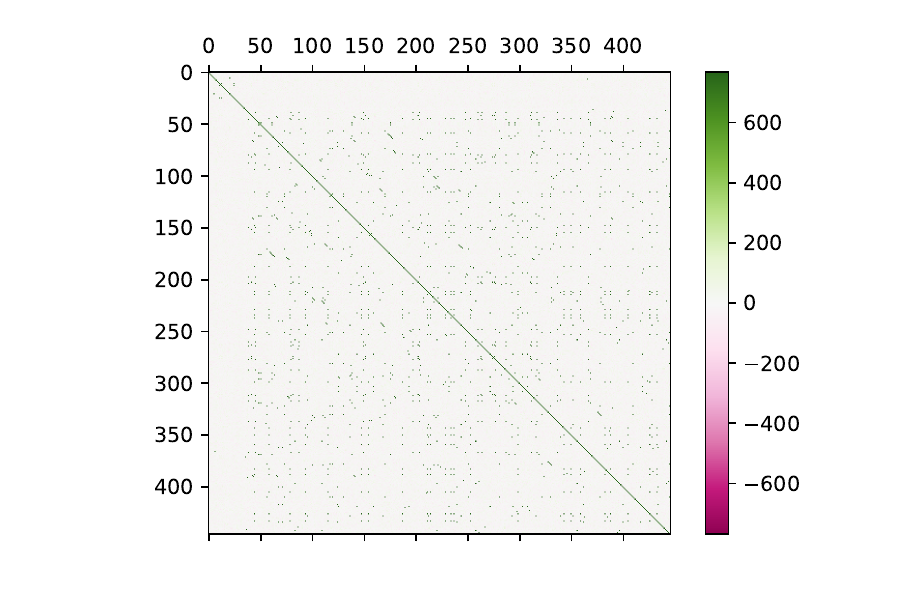}
        \caption{$\mathbf{X}_0 \mathbf{X}_0^\top$ when $T=446$}
        \label{subfig:c2}
    \end{subfigure}

    \vspace{0.3cm}

    \begin{subfigure}[b]{0.4\linewidth}
        \centering
        \includegraphics[width=\linewidth]{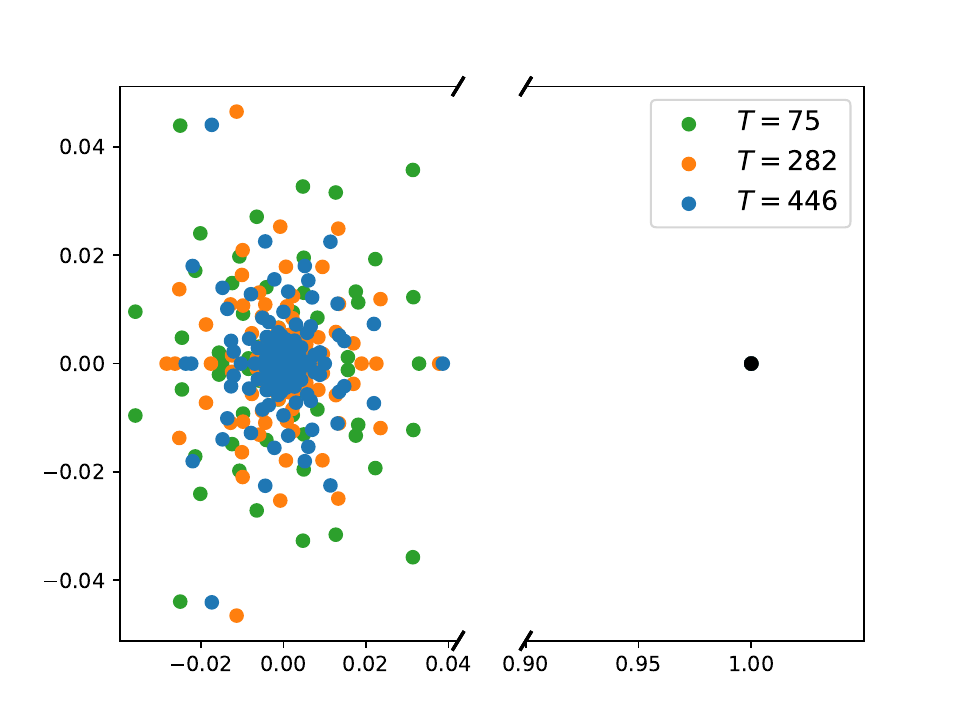}
        \caption{Eigenvalues of attention matrix in the first head of layer $1$ match our theoretical expectations.}
        \label{subfig:spectral_gap_random_tokenizer}
    \end{subfigure}

    \caption{Spectral gap (second row) persists on non-synthetic yet random data. Using a randomly initialized RoBERTa encoder ($d=768$, $T<512$ by construction), we substitute the word embeddings given by the pretrained tokenizer with random ones, making the input covariance matrix closer to identity (first row). We present a single realisation, though this behavior is consistently observed across multiple runs.}
    \label{fig:no_need_for_isometry_2}
\end{figure}

\begin{figure}[h!]
    \centering
    \begin{subfigure}[b]{0.48\linewidth}
        \includegraphics[width=\linewidth]{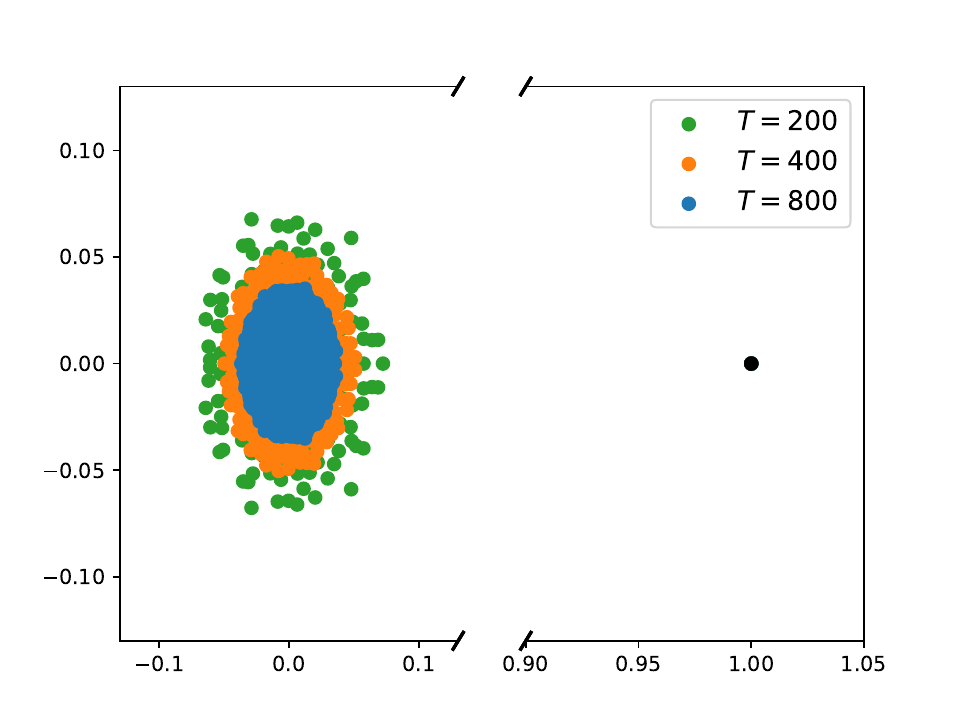}
        \caption{$\gamma=0.1$}
        \label{subfig:gamma01}
    \end{subfigure}
    \hfill
    \begin{subfigure}[b]{0.48\linewidth}
        \includegraphics[width=\linewidth]{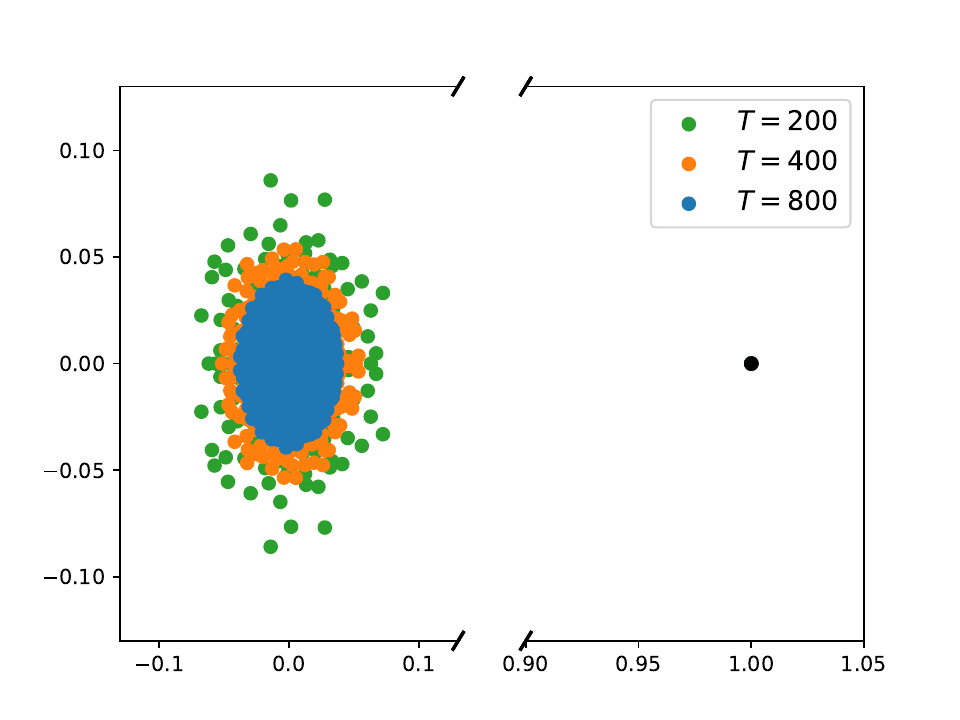}
        \caption{$\gamma=0.5$}
        \label{subfig:gamma05}
    \end{subfigure}

    \vspace{0.3cm}

    \begin{subfigure}[b]{0.48\linewidth}
        \includegraphics[width=\linewidth]{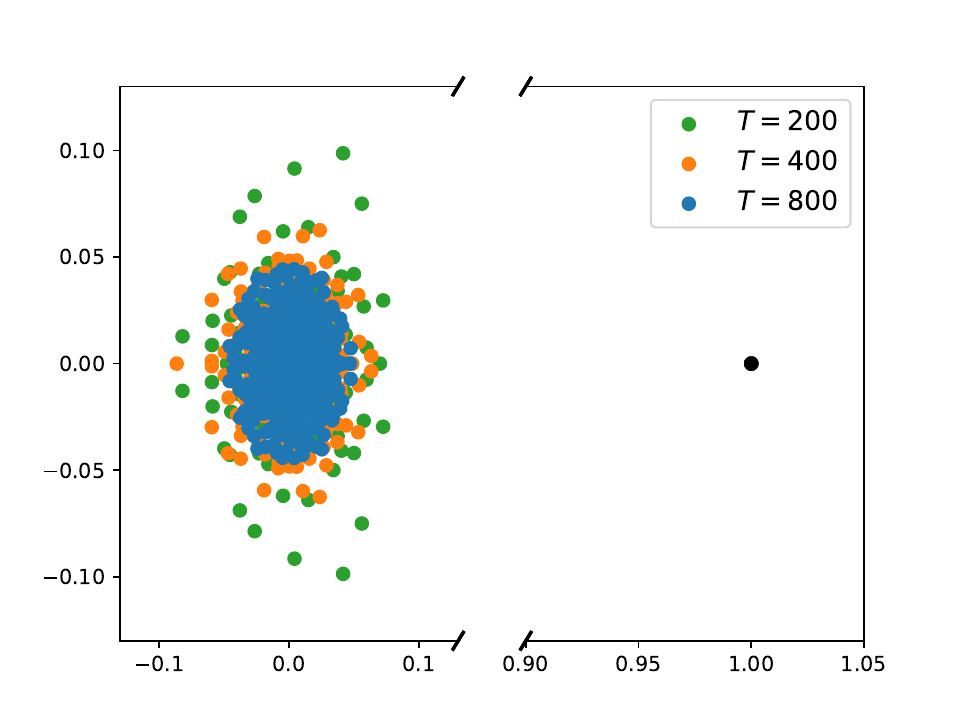}
        \caption{$\gamma=2$}
        \label{subfig:gamma2}
    \end{subfigure}
    \hfill
    \begin{subfigure}[b]{0.48\linewidth}
        \includegraphics[width=\linewidth]{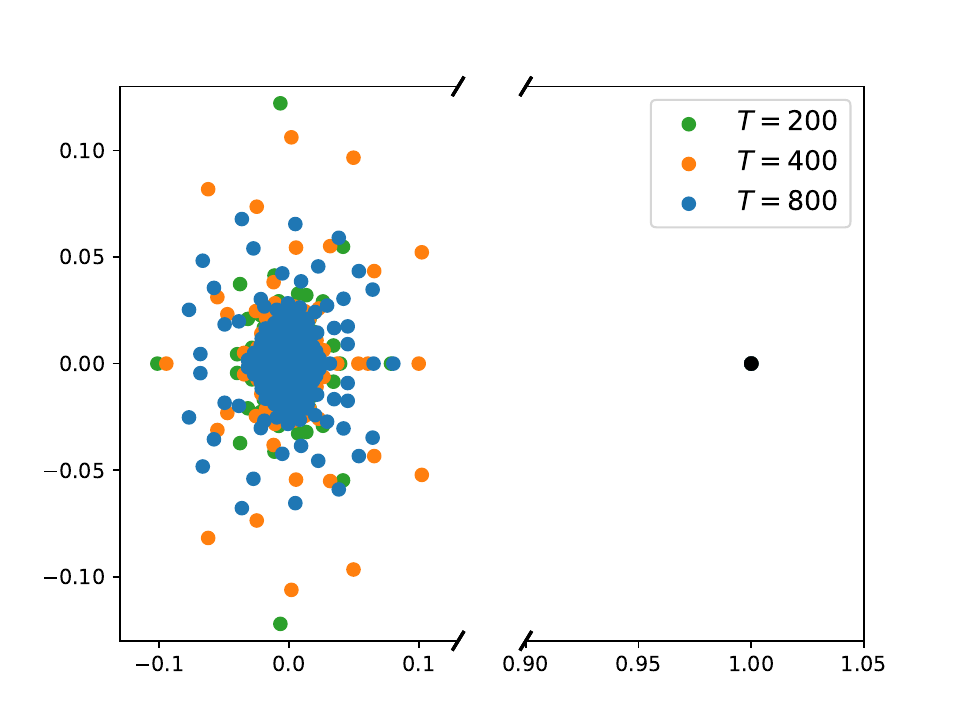}
        \caption{$\gamma=10$}
        \label{subfig:gamma10}
    \end{subfigure}

    \caption{The eigenvalue distribution of a single attention layer for non-isometric synthetic input and different values of $\gamma = \frac{T}{d}$. The input is generated by drawing i.i.d. Gaussian entries followed by normalising the rows, so each token has unit length but they are not necessarily orthogonal. The gap between the bulk and the edge persists despite the bulk looking different for $\gamma > 1$.}
    \label{fig:ablation_orthogonality_synthetic}
\end{figure}

\paragraph{The context length $T$ does not need to be bigger than the token dimension $d$ in practice.} Note that without $T\leq d$, the inputs cannot be isometric so this extra assumption was solely made for this reason. We provide the reader with an ablation on the role of $\gamma$, the ratio between the context length and the token dimension. In Figure \ref{fig:ablation_orthogonality_synthetic}, we explore information propagation in TinyBert, for which $\gamma \geq 1$, hence necessarily the tokens must be non-orthonormal. The spectral gap persists, and rank collapse in width follows. Another ablation, this time conducted on synthetic data, can be found in Figure \ref{fig:ablation_orthogonality_synthetic}.

\begin{figure}[h!]
    \centering
    \begin{subfigure}[b]{0.45\linewidth}
        \centering
        \includegraphics[width=\linewidth, height=4cm]{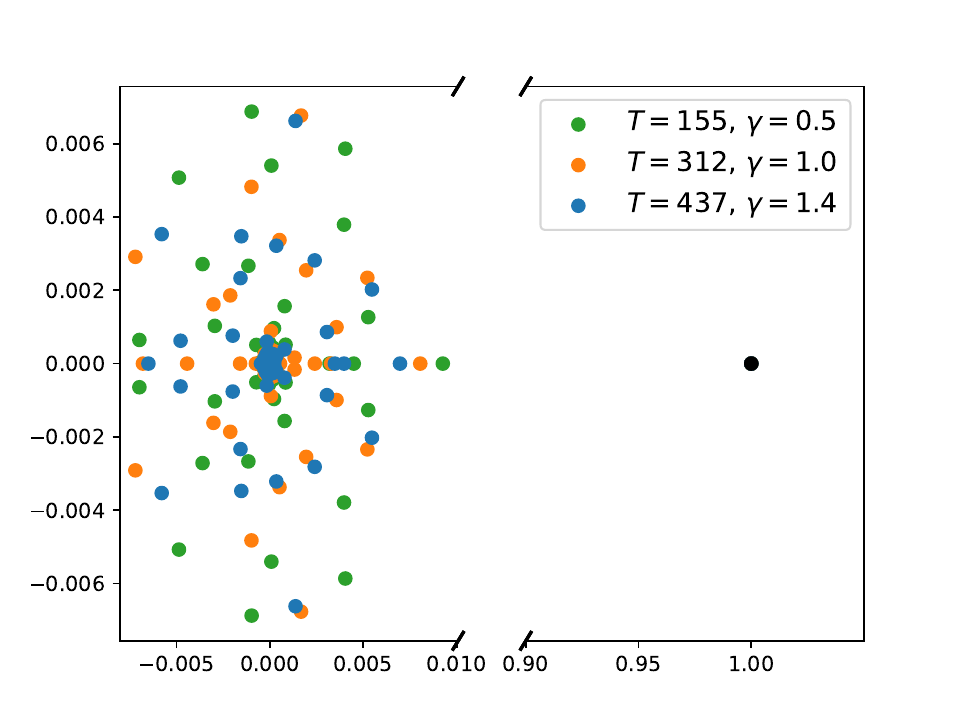}
        \caption{Spectral gap in the attention matrix of the first head of layer $1$.}
        \label{subfig:ablation_gamma-a}
    \end{subfigure}
    \hfill
    \begin{subfigure}[b]{0.45\linewidth}
        \centering
        \includegraphics[width=\linewidth, height=4cm]{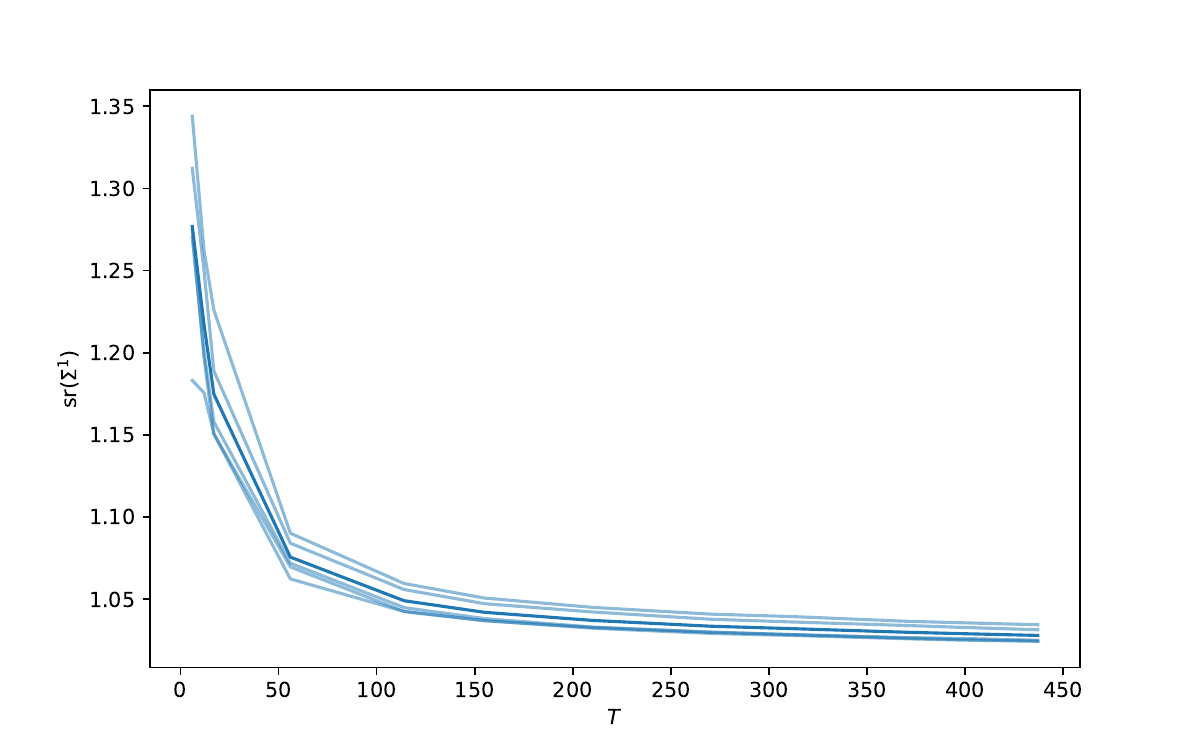}
        \caption{Stable rank quickly collapses in width, even when $\gamma > 1$.}
        \label{subfig:ablation_gamma-b}
    \end{subfigure}
    \caption{Spectral gap (top) persists in TinyBert encoders for which $\gamma>1$, highlighting the generality of our result, which constrained the value of $\gamma$ only for the sake of having a mathematical proof. As an almost direct consequence of the spectral gap in the attention matrix, stable rank rapidly collapses (bottom) in width. Here $d=312$ by design and we increase $\gamma$ by feeding the network with longer sequences. Whilst the eigenvalues (top) are shown for one specific realisation (but consistently observed across runs), the stable rank (bottom) is averaged over $5$ runs.}
    \label{fig:ablation_gamma}
\end{figure}
\clearpage

\subsection{Proofs}\label{sec:proofs}

\begin{proof}[Proof of Theorem\ \ref{theorem:first_layer}]
Let us show that the attention matrix $\mathbf{A}(\mathbf{X}_0)$ satisfies Definition \ref{def:markov} by demonstrating that the random variables
\begin{equation*}
    Z_{i,j} \coloneqq \exp \bigg(\frac{\mathbf{X}_0 \mathbf{W}_1^{Q} {\mathbf{W}_1^K}^\top \mathbf{X}^\top_0}{\sqrt{d_{qk}}} \bigg)_{i,j}
\end{equation*}
are i.i.d.~with a finite fourth moment.

Since the key and query matrices are initialised as Gaussian i.i.d.~matrices and the input data $\mathbf{X}_0$ is isotropic, $\widetilde{\mathbf{W}}^Q \coloneqq \mathbf{X}_0 \mathbf{W}_1^Q$ and $\widetilde{\mathbf{W}}^K \coloneqq \mathbf{X}_0 \mathbf{W}_1^K$ follow the same distribution as $\mathbf{W}_1^Q$ and $\mathbf{W}_1^K$. Each $Z_{i,j}$ can be written as the exponential of the inner product between the $i$-th row of $\widetilde{\mathbf{W}}^Q$ and the $j$-th row of $\widetilde{\mathbf{W}}^K$, thus $Z_{i,j}$ are i.i.d.~and we only need to prove that $\mathbb{E}|Z_{1,1}|^4 < \infty$. Let us define
\begin{equation*}
    U_{d_{qk}} \coloneqq \sum_{r=1}^{d_{qk}} \widetilde{\mathbf{W}}^Q_{1,r} \widetilde{\mathbf{W}}^K_{1,r} 
\end{equation*}
to be the dot product of the first row of $\widetilde{\mathbf{W}}^Q$ and the first row of $\widetilde{\mathbf{W}}^K$. So, $U_{d_{qk}}$ is simply the sum of $d_{qk}$ i.i.d.~copies of $U_1$, the product of two independent Gaussian random variables, whose density is known to be
\begin{equation*}
    f_1(x) \coloneqq \frac{1}{\pi \sigma^2_{qk}} K_0\Big(\frac{|x|}{\sigma^2_{qk}}\Big),
\end{equation*}
where $K_0$ is the modified Bessel function of the second kind. Therefore, the probability density function of $U_{d_{qk}}$ is given by the $d_{qk}$-fold convolution
\begin{equation*}
    f_{d_{qk}}(x) = \underbrace{f_1(x) \ast \dots \ast f_1(x)}_{d_{qk} \text{ times}}.
\end{equation*}
It is also known that $K_0(x)$ asymptotically behaves like $\sqrt{\frac{\pi}{2x}} e^{-x}$ and that the convolution $g \ast h$ decays at least as fast as the slower of $g$ and $h$. Combining these facts, we conclude that $f_{d_{qk}}$ decays at least as fast as $e^{-x}$, i.e.
\begin{equation*}
    f_{d_{qk}}(x) = g(|x|) e^{-|x|},
\end{equation*}
for some polynomially-bounded $g$. Now we can bound our quantity of interest \begin{align*}
    \mathbb{E}|Z_{1,1}|^4 &= \mathbb{E}\Big[\exp\Big(\frac{4 U_{d_{qk}}}{\sqrt{d_{qk}}}\Big)\Big] \\
    &= \int_{\mathbb{R}} e^{4x/\sqrt{d_{qk}}} \; g(|x|) e^{-|x|} dx \\
    & < \infty,
\end{align*}
as long as $\frac{4}{\sqrt{d_{qk}}} < 1$, i.e. $d_{qk} > 16$.

It is now time to invoke an important result, that we first state.

\begin{theorem}[Spectral gap in i.i.d. Markov matrices, \cite{Bordenave_2011}] \label{chafai_gap}
    Let $\mathbf{A} \in \mathbb{R}^{T \times T}$ be an i.i.d. Markov matrix whose underlying distribution has variance $\sigma_A^2$. Then, $\lambda_1(\mathbf{A}) = 1$ and almost surely,
    \begin{equation}
        \lim_{T \to \infty} s_1(\mathbf{A}) = 1 \quad \text{and} \quad \lim_{T \to \infty} s_2(\sqrt{T}\mathbf{A}) = 2\sigma_A \quad \text{while} \quad \varlimsup_{T \to \infty} |\lambda_2(\sqrt{T}\mathbf{A})| \leq 2\sigma_A.
    \end{equation}
\end{theorem}

Having shown that $\mathbf{A}(\mathbf{X}_0)$ forms an i.i.d. Markov matrix since the $Z_{i,j}$'s are i.i.d. with finite fourth moment, the spectral gap as described in Theorem \ref{theorem:first_layer} follows immediately.
\end{proof}

\begin{proof}[Proof of Proposition\ \ref{prop:cov_sr_collapse}]
    Fix $\ell \geq 1$. By definition of stable rank, we have
    \begin{align*}
        \mathrm{sr}(\boldsymbol{\Sigma}_{\ell}) & = \frac{\sum_{i=1}^{T} s_i^2(\boldsymbol{\Sigma}_{\ell})}{s_1^2(\boldsymbol{\Sigma}_{\ell})} = \frac{\sum_{i=1}^{T} s_i^2(\mathbf{X}_{\ell} \mathbf{X}_{\ell}^\top)}{s_1^2(\mathbf{X}_{\ell} \mathbf{X}_{\ell}^\top)} = \frac{\sum_{i=1}^{T} s_i^4(\mathbf{X}_{\ell})}{s_1^4(\mathbf{X}_{\ell})} \\
        & = 1 + \sum_{i=2}^{T} \frac{s_i^4(\mathbf{X}_{\ell})}{s_1^4(\mathbf{X}_{\ell})} \leq 1+(T-1)\frac{s_2^4(\mathbf{X}_{\ell})}{s_1^4(\mathbf{X}_{\ell})}.
    \end{align*}
    
For $T$ large enough, let us say bigger than some $T_0$, Theorem\ \ref{theorem_s2} provides a deterministic upper bound, i.e. $s_2(\mathbf{X}_{\ell}) \leq K$ for some constant $K$. Moreover, Theorem\ \ref{theorem_s1} gives the bound $T^{-\ell} s_1(\mathbf{X}_{\ell}) \in (1-t, 1+t)$ with (overwhelming) probability $P_{t, T}$ for arbitrary $t > 0$ and $T$ bigger than some $T_1$. Thus, for $T \geq \max(T_0, T_1)$,
\begin{equation*}
    1 \leq \mathrm{sr}(\mathbf{\Sigma}_{\ell}) \leq 1 + \frac{(T-1)K^4}{T^{4\ell}(1-t)^4}
\end{equation*}
with probability at least $P_{t, T}$. Therefore, the event
\begin{equation*}
    \lim_{T \to \infty} \mathrm{sr}(\mathbf{\Sigma}_{\ell}) = 1
\end{equation*}
holds with overwhelming probability.
\end{proof}

\begin{proof}[Proof of Proposition\ \ref{prop:exploding_gradients}]
Note that we will treat the matrix-to-matrix derivatives such as $\partial \mathbf{X}_{L}/\partial \mathbf{W}^V_{\ell}$ not as a tensor (in $\mathbb{R}^{T \times d \times d \times d}$), but as its matricised version (in $\mathbb{R}^{Td \times d^2}$). We make use of the chain rule to compute the gradients of interest. Namely, at layer $\ell$,
\begin{align*}
    \frac{\partial \mathbf{X}_L}{\partial \mathbf{W}^V_{\ell}} & = \frac{\partial \mathbf{X}_L}{\partial \mathbf{X}_{\ell}} \frac{\partial \mathbf{X}_{\ell}}{\partial \mathbf{W}^V_{\ell}}\\
    & = \Big(\big(\mathbf{A}_L \dots \mathbf{A}_{\ell+1} \big)\otimes \big(\mathbf{W}^V_{\ell+1} \dots \mathbf{W}^V_{L} \big) \Big) \Big( \big(\mathbf{A}_{\ell} \dots \mathbf{A}_{1}  \mathbf{X}_0 \mathbf{W}^V_{1} \dots \mathbf{W}^V_{\ell-1} \big)\otimes \mathbf{I}_{d} \Big) \\
    & = \big( \underbrace{\mathbf{A}_L \dots \mathbf{A}_1 \mathbf{X}_0 \mathbf{W}^V_{1} \dots \mathbf{W}^V_{\ell-1}}_{\coloneqq \mathbf{P}_1}\big) \otimes \big(\underbrace{\mathbf{W}^V_{\ell+1} \dots \mathbf{W}^V_{L}}_{\coloneqq \mathbf{P}_2}\big).
\end{align*}
Then, by properties of Kronecker product, we have 
\begin{align}\label{eq:exp_grad}
    \left\lVert \frac{\partial \mathbf{X}_L}{\partial \mathbf{W}^V_{\ell}}\right\rVert^2_F & = \sum_{i} s_i^2 \Big(\frac{\partial \mathbf{X}_L}{\partial \mathbf{W}^V_{\ell}} \Big) = \sum_{i,j} s_i^2(\mathbf{P}_1)  s_j^2(\mathbf{P}_2) \geq s_1^2(\mathbf{P}_1) s_1^2(\mathbf{P}_2).
\end{align}
The largest singular value of a product of i.i.d.~Gaussian matrices has been studied extensively, e.g., see \cite{Akemann_2013, mlotkowski2015spectral}. \cite{nait2024simple} show that, almost surely,
\begin{equation*}
    s_1^2(\mathbf{P}_2) = {T}^{L-\ell} \frac{{(L-\ell+1)}^{L-\ell+1}}{{(L-\ell)}^{L-\ell}}(1+o(1)).
\end{equation*}
On the other hand, by Theorem\ \ref{theorem_s1}, $s_1(\mathbf{P}_1)$ concentrates around $T^{\frac{\ell-1}{2}}$ with overwhelming probability, i.e., for $T$ large enough,
\begin{equation*}
    s_1^2(\mathbf{P}_1) \in (T^{\ell-1} (1-t)^2, T^{\ell-1} (1+t)^2),
\end{equation*}
with probability at least $P_{t, T}$. Altogether, with an overwhelming probability we have 
\begin{align*}
    \lim _{T \to \infty} \frac{1}{T^{L-1}} \left\lVert \frac{\partial \mathbf{X}_L}{\partial \mathbf{W}^V_{\ell}}\right\rVert^2_F & \geq (1-t)^2 \frac{{(L-\ell+1)}^{L-\ell+1}}{{(L-\ell)}^{L-\ell}}. 
\end{align*}

One can get a better bound in the single-layer case ($\ell = L =1$). Since $\mathbf{P}_2 = \mathbf{I}_d$, we can rewrite \eqref{eq:exp_grad} as
\begin{equation*}
    \left\lVert \frac{\partial \mathbf{X}_1}{\partial \mathbf{W}^V_{1}} \right\rVert^2_F  = \sum_{i} s_i^2 \Big(\frac{\partial \mathbf{X}_1}{\partial \mathbf{W}^V_{1}} \Big) = \sum_{i} s_i^2(\mathbf{P}_1)  \sum_{j} s_j^2(\mathbf{I}_d) \geq d \cdot s_1^2(\mathbf{P}_1),
\end{equation*}
while $s_1^2(\mathbf{P}_1) = s_1^2\big(\mathbf{A}_1(\mathbf{X}_0)\mathbf{X}_0\big) = O(1)$ almost surely. Therefore, almost surely, the following improved bound
\begin{equation*}
    \lim _{T \to \infty} \frac{1}{T} \left\lVert \frac{\partial \mathbf{X}_1}{\partial \mathbf{W}^V_{1}}\right\rVert^2_F \geq C,
\end{equation*}
holds.
\end{proof}

\begin{proof}[Proof of Proposition\ \ref{prop:resolved_sr_collapse}]
The resolved stable rank can be written as, 
\begin{align*}
    \frac{\mathrm{sr(\mathbf{\Sigma}^{\perp}_{\ell})}}{T} = \frac{T^{-1} \sum_{i=1}^T s_i^2(\mathbf{A}^{\perp}_{\ell} \dots \mathbf{A}^{\perp}_1 \mathbf{X}_0 \mathbf{W}^{V}_{1} \dots \mathbf{W}^{V}_{\ell})}{s_1^2(\mathbf{A}^{\perp}_{\ell} \dots \mathbf{A}^{\perp}_1 \mathbf{X}_0 \mathbf{W}^{V}_{1} \dots \mathbf{W}^{V}_{\ell})}.
    \end{align*}
By submultiplicativity of the operator norm,
\begin{align*}
\frac{\mathrm{sr(\mathbf{\Sigma}^{\perp}_{\ell})}}{T} \geq \frac{T^{-1} \sum_{i=1}^T s_i^2(\mathbf{A}^{\perp}_{\ell} \dots \mathbf{A}^{\perp}_1 \mathbf{X}_0 \mathbf{W}^{V}_{1} \dots \mathbf{W}^{V}_{\ell})}{s_1^2(\mathbf{A}^{\perp}_{\ell}) \dots s_1^2(\mathbf{A}^{\perp}_1) s_1^2( \mathbf{X}_0 \mathbf{W}^{V}_{1}) \dots s_1^2( \mathbf{W}^{V}_{\ell})}.
\end{align*}
Let us call $P_T$ the fraction of squared singular values of $\mathbf{A}^{\perp}_{\ell} \dots \mathbf{A}^{\perp}_1 \mathbf{X}_0 \mathbf{W}^{V}_{1} \dots \mathbf{W}^{V}_{\ell}$ above a certain finite threshold $c$, i.e.
\begin{equation*}
    P_T \coloneqq \frac{1}{T} \sum_{i=1}^T \mathds{1}_{s_i^2(\mathbf{A}^{\perp}_{\ell} \dots \mathbf{A}^{\perp}_1 \mathbf{X}_0 \mathbf{W}^{V}_{1} \dots \mathbf{W}^{V}_{\ell}) > c}.
\end{equation*}
Then, trivially
\begin{equation} \label{sr_linear}
    \frac{\mathrm{sr(\mathbf{\Sigma}^{\perp}_{\ell})}}{T} \geq \frac{c \, P_T}{s_1^2(\mathbf{A}^{\perp}_{\ell}) \dots s_1^2(\mathbf{A}^{\perp}_1) s_1^2( \mathbf{X}_0 \mathbf{W}^{V}_{1}) \dots s_1^2( \mathbf{W}^{V}_{\ell})}.
\end{equation}

Assuming the asymptotic freeness of all attention matrices $\mathbf{A}^{\perp}_{1}, \dots, \mathbf{A}^{\perp}_{\ell}$ and weight matrices $\widetilde{\mathbf{W}}^V_{1} = \mathbf{X}_0 \mathbf{W}_1, \mathbf{W}_2. \dots, \mathbf{W}_{\ell}$,
we may write the limiting squared singular value distribution of $\mathbf{X}^{\perp}_{\ell}$ as the free convolution of the corresponding Marchenko-Pastur distributions:
\begin{equation*}
    \mathcal{M}\coloneqq \mathcal{MP}^{\boxtimes \ell}(1, \sigma_A) \, \boxtimes \mathcal{MP}(\gamma, \frac{\sigma_W}{\sqrt{\gamma}}) \boxtimes \mathcal{MP}^{\boxtimes \ell-1}(1, \frac{\sigma_W}{\sqrt{\gamma}}).
\end{equation*}
Then, almost surely, 
\begin{equation*}
    P_T \longrightarrow P \coloneqq \int_{c}^{\infty} d\mathcal{M}.
\end{equation*}
The distribution $\mathcal{M}$ is compactly supported on the interval $[0, s^+_{\gamma, 2\ell}]$, where $s^+_{\gamma, 2\ell}$ does not depend on $T$. So, by choosing $c < s^+_{\gamma, 2\ell}$, we can make $c \,P$ a non-zero constant. Moreover, the denominator of \eqref{sr_linear} converges almost surely to some constant (in $T$), i.e.
\begin{equation*}
    s_1^2(\mathbf{A}^{\perp}_{\ell}) \dots s_1^2(\mathbf{A}^{\perp}_1) s_1^2( \mathbf{X}_0 \mathbf{W}^{V}_{1}) \dots s_1^2( \mathbf{W}^{V}_{\ell}) \to (2 \sigma_A)^{2\ell} \sigma_W^{2\ell} 2^{2(\ell-1)} (1+\gamma^{-1/2})^2.
\end{equation*}
Thus, almost surely,
\begin{equation*}
    \lim_{T\to\infty} \frac{\mathrm{sr(\mathbf{\Sigma}^{\perp}_{\ell})}}{T} \geq \frac{c \, P}{( \sigma_A \sigma_W)^{2\ell} 4^{\ell-1} (1+\gamma^{-1/2})^2} > 0.
\end{equation*}
\end{proof}

\begin{proof}[Proof of Proposition\ \ref{prop:resolved_exploding_gradient}]
    Let us compute the resolved gradients:
\begin{align*}
    \frac{\partial \mathbf{X}^{\perp}_L}{\partial \mathbf{W}^V_{\ell}} & = \frac{\partial \mathbf{X}^{\perp}_L}{\partial \mathbf{X}^{\perp}_{\ell}} \frac{\partial \mathbf{X}^{\perp}_{\ell}}{\partial \mathbf{W}^V_{\ell}}\\
    & = \big( \underbrace{\mathbf{A}^{\perp}_L \dots \mathbf{A}^{\perp}_1 \mathbf{X}_0 \mathbf{W}^V_{1} \dots \mathbf{W}^V_{\ell-1}}_{\coloneqq \mathbf{P}_1^{\perp}}\big) \otimes \big(\underbrace{\mathbf{W}^V_{\ell+1} \dots \mathbf{W}^V_{L}}_{= \mathbf{P}_2}\big).
\end{align*}
Therefore,
\begin{align*}
    \mathbb{E}\left\lVert \frac{\partial \mathbf{X}^{\perp}_L}{\partial \mathbf{W}^V_{\ell}}\right\rVert_{F}^{2} &= \mathbb{E} \Big[ \mathrm{tr}\Big(\frac{\partial \mathbf{X}^{\perp}_L}{\partial \mathbf{W}^V_{\ell}} \big({\frac{\partial \mathbf{X}^{\perp}_L}{\partial \mathbf{W}^V_{\ell}}}\big)^{\top}\Big) \Big] \\
    &= \mathbb{E}\Big[\mathrm{tr} \big(\mathbf{P}_1^{\perp} (\mathbf{P}_1^{\perp})^{\top}\big) \, \mathrm{tr} \big(\mathbf{P}_2 \mathbf{P}_2^{\top}\big)  \Big].
\end{align*}
Assuming $\mathbf{P}_1^{\perp}$ and $\mathbf{P}_2$ are asymptotically free, we have 
\begin{align*}
    \lim_{d\to \infty} \frac{1}{d^2} \mathbb{E} \left\lVert \frac{\partial \mathbf{X}^{\perp}_L}{\partial \mathbf{W}^V_{\ell}}\right\rVert_{F}^2 = \lim_{d\to \infty} \frac{1}{d} \mathbb{E}\Big(\mathrm{tr} \big(\mathbf{P}_1^{\perp} (\mathbf{P}_1^{\perp})^{\top}\big)\Big) \lim_{d\to \infty} \frac{1}{d} \mathbb{E}\Big( \mathrm{tr} \big(\mathbf{P}_2 \mathbf{P}_2^{\top}\big)  \Big).
\end{align*}
For each product matrix, the normalised expectation on the RHS of the above converges to the first moment of its limiting squared singular value distribution. By scaling them properly, i.e. 
\begin{align*}
    \widetilde{\mathbf{P}}_1^{\perp} &\coloneqq \sqrt{T} \mathbf{A}^{\perp}_L \dots \sqrt{T} \mathbf{A}^{\perp}_1 \frac{1}{\sqrt{d}} \mathbf{X}_0 \mathbf{W}^V_{1} \dots \frac{1}{\sqrt{d}} \mathbf{W}^V_{\ell-1} = T^{L/2}d^{-(\ell-1)/2} \mathbf{P}^{\perp}_1, \\
    \widetilde{\mathbf{P}}_2 &\coloneqq \frac{1}{\sqrt{d}}\mathbf{W}^V_{\ell+1} \dots \frac{1}{\sqrt{d}}\mathbf{W}^V_{L} = d^{-(L-\ell)/2}\mathbf{P}_2,
\end{align*}
we make sure that those limiting distributions (free convolutions of Marchenko-Pastur distributions) are compactly supported on an interval of length $O(1)$ and, hence, both $C_1 \coloneqq \lim \mathbb{E}(\mathrm{tr}(\widetilde{\mathbf{P}}^{\perp}_1(\widetilde{\mathbf{P}}^{\perp}_1)^{\top}))$ and $C_2 \coloneqq \lim \mathbb{E}(\mathrm{tr}(\widetilde{\mathbf{P}}_2(\widetilde{\mathbf{P}}_2)^{\top}))$ are constants. Thus, since $T = \gamma d$,
\begin{align*}
    \lim_{d\to \infty} \frac{1}{d^2} \mathbb{E} \left\lVert \frac{\partial \mathbf{X}^{\perp}_L}{\partial \mathbf{W}^V_{\ell}}\right\rVert_{F}^2 = C_1(T^{-L} d^{\ell-1}) \cdot C_2(d^{L-\ell}) = Cd^{-1},
\end{align*}
or
\begin{equation*}
    \lim_{d\to\infty} \frac{1}{d} \mathbb{E} \left\lVert \frac{\partial \mathbf{X}^{\perp}_L}{\partial \mathbf{W}^V_{\ell}}\right\rVert_{F}^{2} = C.
\end{equation*}
\end{proof}

\begin{proof}[Proof of Theorem\ \ref{prop:covariance_bulk}]
Since $T=\gamma d$, we can write
\begin{align*}
    \mathbf{X}_{\ell} & = \mathbf{A}^\perp_{\ell}\dots \mathbf{A}^{\perp}_{1} \mathbf{X}_0 \mathbf{W}^V_{1} \dots \mathbf{W}^V_{\ell} \\
    & = \sqrt{T} \mathbf{A}^{\perp}_{\ell} \dots \sqrt{T} \mathbf{A}^{\perp}_{1} \frac{1}{\sqrt{d}}(\frac{1}{\sqrt{\gamma}} \mathbf{X}_0 \mathbf{W}^V_{1}) \dots \frac{1}{\sqrt{d}}(\frac{1}{\sqrt{\gamma}} \mathbf{W}^V_{\ell}).
\end{align*}
Each of the rescaled matrices above has squared singular values that almost surely follow a Marchenko-Pastur distribution $\mathcal{MP}(p,\alpha)$, where $p$ is the ratio between the numbers of rows and columns of each matrix, and $\alpha$ the variance of its entries. Therefore, almost surely, the squared singular values of $\mathbf{X}_{\ell}$, or equivalently the singular values of $\mathbf{\Sigma}_{\ell}$, follow a distribution $\mathcal{M}$ which is given by the free convolution
\begin{align*}
    \mathcal{M} & \coloneqq \mathcal{MP}^{\boxtimes \ell}(1, \sigma_A) \boxtimes \mathcal{MP}(\gamma, \sigma_V/\sqrt{\gamma}) \boxtimes \mathcal{MP}^{\boxtimes \ell-1}(1, \sigma_V/\sqrt{\gamma}).
\end{align*}
The moments of such a distribution are given by Lemma\ref{free_conv} in the general case. Substituting the corresponding values from our setting gives the desired result.

\end{proof}

\begin{proof}[Proof of Theorem\  \ref{prop:jacobian_bulk}]
Let $\mathbf{A}^{\perp}\coloneqq \mathbf{A}^{\perp}_{\ell}\dots \mathbf{A}^{\perp}_{1} \in \mathbb{R}^{T \times T}$ and $\mathbf{W}^V \coloneqq \mathbf{W}^V_{1} \dots \mathbf{W}^V_{\ell} \in \mathbb{R}^{d \times d}$. Then \begin{equation*}
    \mathbf{J}_{\ell} = \mathbf{A}^{\perp} \otimes \mathbf{W}^V \in \mathbb{R}^{Td \times Td},
\end{equation*}
and we can compute the $k$-th moment of its limiting squared singular value distribution as
\begin{align*}
    \lim_{T,d \to \infty} \mathbb{E}\big[\frac{1}{Td} \mathrm{tr}(\mathbf{J}_{\ell} \mathbf{J}_{\ell}^{\top})^k\big] & = \lim_{T,d \to \infty} \mathbb{E}\big[\frac{1}{Td} \mathrm{tr}\big((\mathbf{A}^{\perp} {\mathbf{A}^{\perp}}^{\top} \otimes \mathbf{W}^V {\mathbf{W}^V}^{\top})^k\big) \big] \\
    & = \lim_{T,d \to \infty} \mathbb{E}\big[ \frac{1}{T} \mathrm{tr}\big( {(\mathbf{A}^{\perp} {\mathbf{A}^{\perp}}^{\top})}^k \big) \frac{1}{d} \mathrm{tr}\big( {(\mathbf{W}^V {\mathbf{W}^V}^{\top})}^k \big)\big],
\end{align*}
using simple linear algebra. Under the assumption that the matrices $\mathbf{A}$ and $\mathbf{W}^V$ are asymptotically free, the above limiting moment can be written as the product of individual limiting moments, i.e. 
\begin{align*}
    \lim_{T,d\to \infty} \mathbb{E}\big[\frac{1}{Td} \mathrm{tr}(\mathbf{J}_{\ell} \mathbf{J}_{\ell}^{\top})^k\big] & = \lim_{T\to \infty} \mathbb{E}\big[ \frac{1}{T} \mathrm{tr}\big( {(\mathbf{A}^{\perp} {\mathbf{A}^{\perp}}^{\top})}^k \big)\big] \lim_{d\to \infty}\mathbb{E}\big[\frac{1}{d} \mathrm{tr}\big( {(\mathbf{W}^V {\mathbf{W}^V}^{\top})}^k \big)\big],
\end{align*}
where each factor equals the $k$-th moment of the limiting squared singular value distribution of its respective matrix. For both $\mathbf{A}^{\perp}$ and $\mathbf{W}^V$ the limits exist almost surely, and are equal (up to a variance factor) to the well-known Fuss-Catalan numbers, defined by
\begin{align*}
    \mathrm{FC}_{\ell}(k) \coloneqq \frac{1}{\ell k+1} \binom{\ell k +k}{k}. 
\end{align*}
Therefore, almost surely,
\begin{align*}
    \lim_{T,d \to \infty} \mathbb{E}\big[\frac{1}{Td} \mathrm{tr}(\mathbf{J}_{\ell} \mathbf{J}_{\ell}^{\top})^k\big] = (\sigma_A^2)^k \mathrm{FC}_{\ell}(k) \times (\sigma_V^2)^k \mathrm{FC}_{\ell}(k).
\end{align*}
Simple calculations in the case $k=1$ and $k=2$ yield the specified formulae for mean and variance.  
\end{proof}

\subsection{Lemmas}\label{sec:lemmas}

\begin{lemma}\label{lemmaA}
    Let $\mathbf{W}_1 \in \mathbb{R}^{T \times d}$ and $\mathbf{W}_2, \dots, \mathbf{W}_q \in \mathbb{R}^{d \times d}$ be independent Gaussian matrices with i.i.d.~$\mathcal{N}(0,1)$ entries, and $\mathbf{u} \in \mathbb{R}^{T}$ a unit vector. Then,
    \begin{equation}
        \mathbb{E}\big[s_1^2(\mathbf{u}\mathbf{u}^\top \mathbf{W}_1 \dots \mathbf{W}_q)\big] = d^q,
    \end{equation}
    and the event $$\Big|\frac{s_1(\mathbf{u}\mathbf{u}^\top \mathbf{W}_1 \dots \mathbf{W}_q)}{d^{q/2}}-1\Big| < t$$ holds with overwhelming probability.
\end{lemma}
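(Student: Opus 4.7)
The plan is to exploit the rank-one structure of the product. Setting $\mathbf{v}^\top \coloneqq \mathbf{u}^\top \mathbf{W}_1 \cdots \mathbf{W}_q$, the matrix $\mathbf{u}\mathbf{u}^\top \mathbf{W}_1 \cdots \mathbf{W}_q = \mathbf{u}\mathbf{v}^\top$ has rank at most one, and its unique non-zero singular value equals $\|\mathbf{u}\|_2 \|\mathbf{v}\|_2 = \|\mathbf{v}\|_2$ since $\mathbf{u}$ is a unit vector. Hence $s_1^2 = \|\mathbf{v}\|^2$, and the problem reduces to controlling the squared norm of the vector that is propagated through the Gaussian cascade.

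Next, I would introduce the auxiliary sequence $\mathbf{y}_0 \coloneqq \mathbf{u}$ and $\mathbf{y}_k^\top \coloneqq \mathbf{y}_{k-1}^\top \mathbf{W}_k$, so that $\mathbf{y}_q = \mathbf{v}$. Conditional on $\mathbf{y}_{k-1}$, Gaussian rotational invariance implies $\mathbf{y}_k \sim \mathcal{N}(\mathbf{0}, \|\mathbf{y}_{k-1}\|^2 \mathbf{I}_d)$; consequently the ratio $R_k \coloneqq \|\mathbf{y}_k\|^2/\|\mathbf{y}_{k-1}\|^2$ is $\chi^2_d$-distributed and independent of $\mathbf{y}_{k-1}$. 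A short induction then shows that $R_1, \ldots, R_q$ are mutually independent $\chi^2_d$ random variables, and telescoping yields the key identity
\[
    s_1^2 = \|\mathbf{y}_q\|^2 = \prod_{k=1}^{q} R_k.
\]
Taking expectations gives $\mathbb{E}[s_1^2] = \prod_k \mathbb{E}[R_k] = d^q$, which settles the first claim.

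For the concentration part, I would apply the standard chi-squared tail bound $\mathbb{P}(|R_k/d - 1| > t') \leq 2\exp(-c d t'^2)$, valid for $t' \in (0,1)$ and some absolute constant $c > 0$. A union bound over the (constant) number $q$ of factors places all the ratios $R_k/d$ into $(1-t', 1+t')$ simultaneously with probability at least $1 - 2q\exp(-c d t'^2)$. On this event $s_1^2/d^q \in ((1-t')^q, (1+t')^q)$, and choosing $t'$ small enough in terms of $t$ (using that $q$ is fixed) yields $|s_1/d^{q/2} - 1| < t$. Since $\exp(-c d t'^2)$ decays faster than any polynomial in $d$, and hence in $T = \gamma d$, the bound holds with overwhelming probability in the sense of the paper.

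The only step requiring real care is the independence of the $R_k$'s, which I would establish by tracking the conditional distributions carefully through the recursion; beyond that, both parts of the lemma reduce to textbook chi-squared facts, so I do not anticipate any serious obstacle.
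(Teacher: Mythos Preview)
Your argument is correct and, for the first claim, essentially identical to the paper's: both reduce $s_1^2$ to a product of $q$ independent $\chi^2_d$ variables via the rank-one structure and rotational invariance (your $R_k$ is exactly the paper's $\alpha_k^2$).

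The concentration step is where the two proofs genuinely diverge. You control each factor $R_k/d$ separately with a standard chi-squared tail bound and then take a union bound over the $q$ factors, yielding failure probability $O\big(q\exp(-c\,d\,t'^2)\big)$. The paper instead expands $\prod_k \alpha_k^2 = \prod_k \sum_j w_{k,j}^2$ into a sum of $d^q$ terms, each a product of $q$ independent Gaussians (hence sub-Weibull with parameter $1/q$ after squaring and centering), and applies a generalised Bernstein inequality for sub-Weibull sums to obtain failure probability $2\exp\big[-C' d^q \min(t^2, t^{2/q})\big]$. Your route is more elementary---it needs only the textbook Laurent--Massart bound and avoids the less standard sub-Weibull machinery---while the paper's route yields a quantitatively sharper tail (decay in $d^q$ rather than $d$). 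Since the lemma only asks for overwhelming probability in the sense of \cite{tao2012topics}, either bound suffices, and your simpler argument is entirely adequate for the purposes of the paper.
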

\begin{proof}
    First of all, note that the distribution of $s_1(\mathbf{u}\mathbf{u}^\top \mathbf{W}_1 \dots \mathbf{W}_q)$ is independent of the choice of $\mathbf{u}$, since $\mathbf{W}_1, \dots, \mathbf{W}_q$ are rotation-invariant. Let us write $\mathbf{u}^\top \mathbf{W}_1 = \alpha_1 \mathbf{u}_1^\top$, where $\mathbf{u}_1 \in \mathbb{R}^{d}$ has length $1$. Similarly, define 
    \begin{equation*}
    \alpha_{i+1} \coloneqq \lVert \mathbf{u}_i^\top \mathbf{W}_{i+1} \rVert_2, \quad \mathbf{u}^\top_{i+1} \coloneqq \frac{\mathbf{u}_i^\top \mathbf{W}_{i+1}}{\alpha_{i+1}},
    \end{equation*}
    for $1 \leq i \leq q-1$. So, we can write
    \begin{align*}
        s_1(\mathbf{u}\mathbf{u}^\top \mathbf{W}_1 \mathbf{W}_2 \dots \mathbf{W}_q) &= s_1(\mathbf{u}(\alpha_1 \mathbf{u}_1^\top) \mathbf{W}_2 \dots \mathbf{W}_q) \\
        &= s_1(\mathbf{u}(\alpha_1 \alpha_2 \mathbf{u}_2^\top)) \dots \mathbf{W}_q) \\
        &= \dots \\
        &= \alpha_1 \dots \alpha_q \cdot s_1(\mathbf{u}\mathbf{u}_q^\top) \\
        &= \alpha_1 \dots \alpha_q,
    \end{align*}
    where $s_1(\mathbf{u}\mathbf{u}_q^\top) = 1$ since $\mathbf{u}\mathbf{u}_q^\top$ naturally takes the form of an SVD with a single nonzero singular value equal to $1$. The random variables $\alpha_1, \dots, \alpha_q$ are independent (by independence of $\mathbf{W}_i$'s) and identically distributed (by rotation-invariance of $\mathbf{W}_i$'s). Without loss of generality, we can substitute $\mathbf{e}_1$ (the first column of the identity matrix) for $\mathbf{u}$ or $\mathbf{u}_i$ to get
    \begin{equation*}
        \alpha_i \stackrel{d}{=} \lVert \mathbf{e}_1^\top \mathbf{W}_i \rVert = \lVert \mathbf{w} \rVert
    \end{equation*}
    where $\mathbf{w} \in \mathbb{R}^d$ (the first row of $\mathbf{W}_i$) has i.i.d.~$\mathcal{N}(0,1)$ entries. Thus, $\mathbb{E}(\alpha_i^2)=\mathbb{E}(\lVert \mathbf{w} \rVert_2^2) = d$, and by independence of $\alpha_i$'s we have
    \begin{equation*}
        \mathbb{E}\big[s_1^2(\mathbf{u}\mathbf{u}^\top \mathbf{W}_1 \dots \mathbf{W}_q)\big] = d^q.
    \end{equation*}
    Moreover, since each $\alpha_i^2$ has a chi-squared distribution with $d$ degrees of freedom, we can write it as the sum of $d$ independent squared standard Gaussian random variables $\alpha_i^2 = \sum_{j=1}^{d}w_{i,j}^2$. Thus, 
    \begin{equation*}
        s_1^2(\mathbf{u}\mathbf{u}^\top \mathbf{W}_1 \dots \mathbf{W}_q) = \prod_{i=1}^{q} \alpha_i^2 = \prod_{i=1}^{q} (w_{i,1}^2 + \dots + w_{i,d}^2) = \sum_{j=1}^{d^q} Z_j^2,
    \end{equation*}
    where each $Z_j$ is the product of $q$ independent $\mathcal{N}(0,1)$ random variables, and therefore is sub-Weibull with parameter $2/q$. We shall apply generalised Bernstein's inequality for the normalised sum of mean-zero sub-Weibull random variables \cite{kuchibhotla2022moving, bong2023tight}, i.e.
    \begin{equation} \label{eq:bernstein}
        \mathbb{P}\Big( \Big|\frac{1}{N} \sum_{i=1}^{N} X_i \Big| \geq u \Big) \leq 2 \exp\big[-CN \min(\frac{u^2}{K^2}, \frac{u^{\beta}}{K^{\beta}})\big],
    \end{equation}
    where $X_i$'s are independent mean-zero sub-Weibull random variables with parameter $\beta$ and $K \coloneqq \max_{i} \lVert X_i \rVert_{\psi_{\beta}}$. Applying \eqref{eq:bernstein} on
    \begin{equation*}
        \frac{1}{d^q} s_1^2(\mathbf{u}\mathbf{u}^\top \mathbf{W}_1 \dots \mathbf{W}_q) - 1 = \frac{1}{d^q} \sum_{j=1}^{d^q} \big(Z_j^2 -1\big),
    \end{equation*}
    where each $(Z_j^2-1)$ is centered sub-Weibull with parameter $1/q$, we get
    \begin{align*}
        \mathbb{P}\Big(\Big| \frac{1}{d^q} s_1^2(\mathbf{u}\mathbf{u}^\top \mathbf{W}_1 \dots \mathbf{W}_q) - 1 \Big| \geq u \Big) &= \mathbb{P}\Big(\Big| \frac{1}{d^q} \sum_{j=1}^{d^q} (Z_j^2-1) \Big| \geq u \Big) \\
        & \leq 2 \exp\big[-C'd^q \min(u^2, u^{1/q})\big],
    \end{align*}
    where we have absorbed the dependency on $K = \lVert (Z_j^2 -1) \rVert_{\psi_{1/q}}$ into $C'$. Combining the above with the simple fact that $|z-1| \geq t$ implies $|z^2-1| \geq \max(t, t^2)$, we obtain for any $t \geq 0$ that
    \begin{align*}
        & \mathbb{P}\Big(\Big| \frac{1}{d^{q/2}} s_1(\mathbf{u}\mathbf{u}^\top \mathbf{W}_1 \dots \mathbf{W}_q) - 1 \Big| \geq t \Big) \\ & \leq \mathbb{P}\Big(\Big| \frac{1}{d^q} s_1^2(\mathbf{u}\mathbf{u}^\top \mathbf{W}_1 \dots \mathbf{W}_q) - 1 \Big| \geq \max(t, t^2) \Big) \\
        & \leq 2 \exp\big[-C'd^q \min(t^2, t^{2/q})\big],
    \end{align*}
    i.e. $s_1(\mathbf{u}\mathbf{u}^\top \mathbf{W}_1 \dots \mathbf{W}_q)$ is sub-Weibull with parameter $2/q$ and $$\Big|\frac{s_1(\mathbf{u}\mathbf{u}^\top \mathbf{W}_1 \dots \mathbf{W}_q)}{d^{q/2}}-1\Big| < t$$ holds with probability at least $1-2 \exp\big[-C'd^q \min(t^2, t^{2/q})\big]$, i.e. with \textit{overwhelming} probability \cite{tao2012topics}.
\end{proof}

\begin{lemma}\label{lemmaB}
    Consider $p$ i.i.d. Markov matrices $\mathbf{A}_1, \dots, \mathbf{A}_p \in \mathbb{R}^{T \times T}$ as defined in \ref{def:markov}, and let $\mathbf{1}_{T \times T}$ be the matrix full of ones. Then, almost surely,
\begin{align}
    s_1(\mathbf{A}_p \dots \mathbf{A}_1 - \frac{1}{T} \mathbf{1}_{T \times T}) = O(T^{-p/2})
\end{align}
\end{lemma}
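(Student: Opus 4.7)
The plan is to exploit two algebraic facts: (i) Markov matrices fix the all-ones vector, so $\mathbf{A}_i \mathbf{J} = \mathbf{J}$ where $\mathbf{J} \coloneqq T^{-1}\mathbf{1}_{T\times T}$, and (ii) $\mathbf{J}^2 = \mathbf{J}$. Setting $\mathbf{A}_i^\perp \coloneqq \mathbf{A}_i - \mathbf{J}$, these yield $\mathbf{A}_i^\perp \mathbf{J} = 0$, which will force most terms in a binomial-style expansion below to vanish.

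First I would write $\mathbf{A}_i = \mathbf{J} + \mathbf{A}_i^\perp$ and expand the product $\mathbf{A}_p \cdots \mathbf{A}_1$. Among the $2^p$ resulting terms, any one containing a substring $\mathbf{A}_{i+1}^\perp \mathbf{J}$ is zero, while consecutive $\mathbf{J}$'s collapse. The survivors are precisely those in which the $\mathbf{A}^\perp$ factors appear as a contiguous rightmost block:
\begin{equation*}
    \mathbf{A}_p \cdots \mathbf{A}_1 \;=\; \mathbf{J} \;+\; \mathbf{A}_p^\perp \cdots \mathbf{A}_1^\perp \;+\; \sum_{k=1}^{p-1} \mathbf{J}\, \mathbf{A}_k^\perp \cdots \mathbf{A}_1^\perp.
\end{equation*}
Subtracting $\mathbf{J}$ leaves only the sum on the right, which remains to be bounded in operator norm.

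The all-perp term $\mathbf{A}_p^\perp \cdots \mathbf{A}_1^\perp$ has $s_1 = O(T^{-p/2})$ almost surely by submultiplicativity and Theorem~\ref{chafai_gap}. Each mixed term $\mathbf{J}\mathbf{A}_k^\perp \cdots \mathbf{A}_1^\perp$ is rank one, equal to $T^{-1}\mathbf{1}\mathbf{w}_k^\top$ with $\mathbf{w}_k \coloneqq (\mathbf{A}_k^\perp \cdots \mathbf{A}_1^\perp)^\top \mathbf{1}$, so its operator norm is $T^{-1/2}\|\mathbf{w}_k\|$. A CLT on the independent rows of $\mathbf{A}_k^\perp$ gives $\|(\mathbf{A}_k^\perp)^\top \mathbf{1}\| = O(1)$ with overwhelming probability, and each of the remaining $k-1$ contractions by $\mathbf{A}_j^\perp$ reduces the norm by $O(T^{-1/2})$, giving $s_1(\mathbf{J}\mathbf{A}_k^\perp \cdots \mathbf{A}_1^\perp) = O(T^{-k/2})$.

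The main obstacle is the $k=1$ term, for which this direct estimate only yields $O(T^{-1/2})$. Pushing the bound uniformly to $O(T^{-p/2})$ for all $p$ requires extracting additional cancellation from the joint law of the $\mathbf{A}_i^\perp$'s (mean zero and independent across $i$), for instance via matrix Bernstein or Hanson--Wright-type concentration applied to the rank-one contributions, or through a recursive argument showing that the centered column-sum vector of $\mathbf{A}_k^\perp \cdots \mathbf{A}_1^\perp$ contracts faster than the naive operator-norm bound suggests. That sharper concentration step, rather than the algebraic decomposition, is where the real effort lies.
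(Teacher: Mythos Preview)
Your expansion is correct and is in fact exactly the decomposition the paper writes down. The gap you flag at $k=1$ is real, but the remedy you propose will not work: $\mathbf{J}\mathbf{A}_1^\perp = \varphi\,(\varphi^\top\mathbf{A}_1^\perp)$ has operator norm $T^{-1/2}\bigl\lVert(\text{column sums of }\mathbf{A}_1)-\mathbf{1}\bigr\rVert$, and since a Random Markov matrix is only row-stochastic, its column sums fluctuate around $1$ by order $T^{-1/2}$, making this norm genuinely of order $T^{-1/2}$. There is no sum of many independent contributions here to concentrate---just a single rank-one matrix of that size---so matrix Bernstein or Hanson--Wright bounds have nothing to act on. Bounding the terms of your decomposition separately via the triangle inequality is a dead end.

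The idea you are missing is to exploit the rank-one structure \emph{globally} rather than term by term. All of your mixed terms share the left factor $\mathbf{J}=\varphi\varphi^\top$, so together with $\mathbf{J}$ itself they sum to a single rank-one matrix; equivalently, $\mathbf{A}\coloneqq\mathbf{A}_p\cdots\mathbf{A}_1$ and $\mathbf{A}_p^\perp\cdots\mathbf{A}_1^\perp$ differ by rank one. Thompson--Lidskii interlacing then gives $s_2(\mathbf{A})\le s_1(\mathbf{A}_p^\perp\cdots\mathbf{A}_1^\perp)=O(T^{-p/2})$ in one stroke, with no triangle inequality in sight. The remaining work is to transfer this to $s_1(\mathbf{A}-\mathbf{J})$: the paper does so by first proving $s_1(\mathbf{A})\to 1$, then combining the trace identity $\sum_i s_i^2(\mathbf{A})=1+\sum_i s_i^2(\mathbf{A}^\perp)$ (which follows from $\mathbf{A}^\perp\varphi=0$, killing the cross terms in $\mathbf{A}\mathbf{A}^\top=\varphi\varphi^\top+\mathbf{A}^\perp(\mathbf{A}^\perp)^\top$) with the interlacing $s_i^2(\mathbf{A}^\perp)\ge s_{i+1}^2(\mathbf{A})$ to conclude $s_1(\mathbf{A}^\perp)=s_2(\mathbf{A})(1+o(1))$. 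So the missing ingredient is interlacing under rank-one perturbations, not sharper concentration.
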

\begin{proof}
Let us first show $s_1(\mathbf{A}_p \dots \mathbf{A}_1) \xrightarrow{a.s.} 1$, as $T$ grows. 
Each matrix $\mathbf{A}_i$ can be written as the row-normalisation of a table $\mathbf{M}_i$ of i.i.d.~random variables, i.e. $\mathbf{A}_i \coloneqq \mathbf{D}_i \mathbf{M}_i$, where $\mathbf{D}_i$ is a $T\times T$ diagonal matrix containing the inverse row sums of $\mathbf{M}_i$. The entries in $\mathbf{M}_i$ have a finite fourth moment, and, without loss of generality, mean $1$ and variance $\sigma^2$. Thus, 
\begin{align*}
    s_1(T^{p/2} \mathbf{A}_p \dots \mathbf{A}_1) &= s_1(T^{p/2} \mathbf{D}_p \mathbf{M}_p \dots \mathbf{D}_1 \mathbf{M}_1)\\
    & \leq s_1(T\mathbf{D}_p) s_1(T^{-1/2}\mathbf{M}_p) \dots s_1(T \mathbf{D}_1) s_1(T^{-1/2} \mathbf{M}_1).
\end{align*}
Following the argument given in \cite{Bordenave_2011}, $s_1(T \mathbf{D}_i) = 1+o(1)$ and $s_1(T^{-1/2} \mathbf{X}_i) \leq \sqrt{T} + O(1)$, for all $1 \leq i \leq p$. Therefore, 
\begin{align*}
    s_1(T^{p/2} \mathbf{A}_p \dots \mathbf{A}_1) & \leq \big(\sqrt{T} + O(1)\big)^{p} (1+o(1)) \\
    &\leq T^{p/2} (1+ o(1)),
\end{align*}
which yields, almost surely, $\lim s_1(\mathbf{A}_p \dots \mathbf{A}_1) \leq 1$. The converse inequality is an immediate consequence of the closure of the set of i.i.d. Markov matrices under matrix multiplication, which gives $\lambda_1(\mathbf{A}_p \dots \mathbf{A}_1)=1$, combined with $s_1(\mathbf{A}_p \dots \mathbf{A}_1)\geq |\lambda_1(\mathbf{A}_p \dots \mathbf{A}_1)|$.
Hence, almost surely, $\lim s_1(\mathbf{A}_p \dots \mathbf{A}_1) = 1$.

Let $\varphi \in \mathbb{R}^{T}$ be the unit vector such that $\frac{1}{T} \mathbf{1}_{T \times T} = \varphi \varphi^{\top}$, i.e. $\varphi = T^{-1/2}(1, \dots, 1)^\top$. Also, let $\mathbf{A} \coloneqq \mathbf{A}_p \dots \mathbf{A}_1$ and define $\mathbf{A}^{\perp} \coloneqq \mathbf{A} - \varphi \varphi^{\top}$. Since the rows of $\mathbf{A}$ sum to $1$, our construction ensures that those of $\mathbf{A}^\perp$ sum to zero. We want to show that $s_1(\mathbf{A}^{\perp}) = s_2(\mathbf{A}) \big(1+ o(1)\big)$. To this end, consider the SVD of the matrix $\mathbf{A}^{\perp}$. There exist orthogonal matrices $\mathbf{U}, \mathbf{V}$ and a diagonal matrix $\mathbf{\Sigma}\coloneqq \mathrm{diag}\left(s_1(\mathbf{A}^{\perp}), \dots, s_n(\mathbf{A}^{\perp})\right)$ such that
\begin{equation*}
    \mathbf{A}^{\perp} = \mathbf{U} \mathbf{\Sigma} \mathbf{V}^{\top}. 
\end{equation*}
Note that since $\mathbf{A}^{\perp} \varphi = \mathbf{0}$, the matrix has rank at most $T-1$ and thus $s_n(\mathbf{A}^{\perp}) = 0$. We will now try to relate the singular values of $\mathbf{A}^\perp$ to those of $\mathbf{A}$, observing that $\mathbf{A}$ is a rank-one perturbation of $\mathbf{A}^{\perp}$, i.e.
\begin{align*}
    \mathbf{A} &= \varphi \varphi^{\top} + \mathbf{A}^{\perp} \\
    & = \varphi \varphi^{\top} + \mathbf{U} \mathbf{\Sigma} \mathbf{V}^{\top}.
\end{align*}
The squared singular values of $\mathbf{A}$ are exactly the eigenvalues of 
\begin{align}
    \mathbf{A} \mathbf{A}^{\top} &=  \varphi \varphi^{\top} + \mathbf{U} \mathbf{\Sigma}^2 \mathbf{U}^{\top}.
\end{align}
Since eigenvalues are invariant under orthogonal operators, we can multiply on the left and right by, respectively, $\mathbf{U}^{\top}$ and $\mathbf{U}$ to get a diagonal matrix perturbed by a rank-one matrix:
\begin{align}\label{eq:decomposition_diag_rankone}
    \mathbf{U}^{\top} \mathbf{A} \mathbf{A}^{\top} \mathbf{U} &= \mathbf{U}^{\top} \varphi \varphi^{\top} \mathbf{U} + \mathbf{\Sigma}^2.
\end{align}
Taking the trace, we have
\begin{align}\label{eq:s_A_A_perp}
    s_1^2(\mathbf{A}) + \dots + s_n^2(\mathbf{A}) = 1 + s_1^2(\mathbf{A}^{\perp}) + \dots +  s_{n-1}^2(\mathbf{A}^{\perp}).
\end{align}

On the other hand, we can apply Thompson-Lidskii's interlacing inequalities \cite{thompson1976behavior} on Equation \eqref{eq:decomposition_diag_rankone}
to get
\begin{align}\label{ineq:s_A_A_perp}
    s_1^2(\mathbf{A}) \geq s_1^2(\mathbf{A}^{\perp}) \geq s_2^2(\mathbf{A}) \geq s_2^2(\mathbf{A}^{\perp}) \geq \dots \geq s_{n-1}^2(\mathbf{A}^{\perp}) \geq s_n^2(\mathbf{A}) \geq 0.
\end{align}
Combining Equations \eqref{eq:s_A_A_perp} and \eqref{ineq:s_A_A_perp}, one obtains
\begin{align*}
    s_1^2(\mathbf{A}) + s_2^2(\mathbf{A}) \geq 1 + s_1^2(\mathbf{A}^{\perp}). 
\end{align*}

As established earlier, almost surely, $\lim s_1(\mathbf{A}) = 1$. So we conclude that in the limit, almost surely, $s_2(\mathbf{A}) \geq s_1(\mathbf{A}^{\perp})$. The converse is already given by \eqref{ineq:s_A_A_perp}. Therefore we have
\begin{equation*}\label{eq:s1=s_2}
    s_1(\mathbf{A}^{\perp}) = s_2(\mathbf{A}) \big(1 + o(1)\big),
\end{equation*}
almost surely. Note that the same reasoning is valid for the case $p=1$, and results in $s_1(\mathbf{A}^{\perp}_i) \xrightarrow{a.s.} s_2(\mathbf{A}_i)$ for any $i$.

Having shown the convergence of the largest singular value of $\mathbf{A}^\perp$ to the second largest singular value of $\mathbf{A}$, we now show that $s_2(\mathbf{A})$ is of order $T^{-p/2}$. To this end, note that the matrix can be written as a rank-one perturbation of the product of $\mathbf{A}^{\perp}_i$'s, i.e.
\begin{align*}
    \mathbf{A} &= \mathbf{A}_p \dots \mathbf{A}_1 \\
    &= (T^{-1} \mathbf{1}_{T \times T} + \mathbf{A}^{\perp}_p) \dots (T^{-1} \mathbf{1}_{T \times T} + \mathbf{A}^{\perp}_1)\\
    & = T^{-1} \mathbf{1}_{T \times T} \big(\mathbf{I} + \mathbf{A}^{\perp}_1 +  \dots + \mathbf{A}^{\perp}_{p-1}\dots \mathbf{A}^{\perp}_1\big) + \mathbf{A}^{\perp}_p \dots \mathbf{A}^{\perp}_1,
\end{align*}
where some of the terms vanish since $\mathbf{A}^{\perp}_i \varphi = \mathbf{0}$. Given that $\mathrm{rank}(\mathbf{A} - \mathbf{A}^{\perp}_p \dots \mathbf{A}^{\perp}_1)=1$, we can apply Thompson-Lidskii's inequality to get
\begin{equation*}
    s_1(\mathbf{A}^{\perp}_p \dots \mathbf{A}^{\perp}_1) \geq s_2(\mathbf{A}).
\end{equation*}
By submutiplicativity of the operator norm, this implies $s_1(\mathbf{A}^{\perp}_p) \dots s_1(\mathbf{A}^{\perp}_1) \geq s_2(\mathbf{A})$.
Moreover, we previously established that for each individual matrix $\mathbf{A}_i$, $s_1(\mathbf{A}^{\perp}_i) \xrightarrow{a.s.} s_2(\mathbf{A}_i)$, and it is shown in \cite{Bordenave_2011} that $s_2(\mathbf{A}_i) \xrightarrow{a.s.} 2\sigma T^{-1/2}$. Therefore, we conclude that
\begin{align*}
    s_2(\mathbf{A}) \leq \big(2\sigma T^{-1/2}\big)^p = O(T^{-p/2}).
\end{align*}
Combined with Equation \eqref{eq:s1=s_2}, we have
\begin{align*}
    s_1(\mathbf{A} - \frac{1}{T} \mathbf{1}_{T \times T}) = s_1(\mathbf{A}^{\perp}) = O(T^{-p/2}),
\end{align*}
almost surely.
\end{proof}

\begin{theorem}\label{theorem_s1}
Let $\mathbf{A}_1, \dots, \mathbf{A}_p \in \mathbb{R}^{T\times T}$ be independent i.i.d. Markov matrices as defined in \ref{def:markov} and $\mathbf{W}_1 \in \mathbb{R}^{T \times d}$, $\mathbf{W}_2, \dots, \mathbf{W}_q \in \mathbb{R}^{d \times d}$ be independent Gaussian matrices with i.i.d.~$\mathcal{N}(0,1)$ entries.
Then, for large enough $T$ and $d$ with fixed $\gamma = T/d \in (0,1]$, the event
\begin{equation*}
    \Big| \frac{s_1(\mathbf{A}_p \dots \mathbf{A}_1 \mathbf{W}_1 \dots \mathbf{W}_q)}{d^{q/2}} - 1 \Big| < t,
\end{equation*}
holds with overwhelming probability.
\end{theorem}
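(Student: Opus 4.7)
The plan is to exploit the rank-one plus small perturbation structure of the Markov product established in Lemma~\ref{lemmaB}, so that the leading singular value of the full product is dictated by the rank-one part, to which Lemma~\ref{lemmaA} directly applies. Writing $\mathbf{A}_p \dots \mathbf{A}_1 = \varphi \varphi^\top + \mathbf{A}^\perp$ with $\varphi \coloneqq T^{-1/2}(1,\dots,1)^\top$, Lemma~\ref{lemmaB} gives $s_1(\mathbf{A}^\perp) = O(T^{-p/2})$, and multiplying on the right by $\mathbf{W}_1 \dots \mathbf{W}_q$ yields the additive decomposition
\begin{equation*}
    \mathbf{A}_p \dots \mathbf{A}_1 \mathbf{W}_1 \dots \mathbf{W}_q = \varphi \varphi^\top \mathbf{W}_1 \dots \mathbf{W}_q + \mathbf{A}^\perp \mathbf{W}_1 \dots \mathbf{W}_q.
\end{equation*}
Applying Lemma~\ref{lemmaA} to the first summand (with $\mathbf{u}=\varphi$) delivers $\bigl|s_1(\varphi\varphi^\top \mathbf{W}_1\dots\mathbf{W}_q)/d^{q/2} - 1\bigr| < t/2$ with overwhelming probability.

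For the residual, I would invoke submultiplicativity of the operator norm, $s_1(\mathbf{A}^\perp \mathbf{W}_1 \dots \mathbf{W}_q) \leq s_1(\mathbf{A}^\perp) \prod_{i=1}^q s_1(\mathbf{W}_i)$. Standard concentration for the operator norm of Gaussian matrices (Bai--Yin type bounds) gives $s_1(\mathbf{W}_i) = O(\sqrt{d})$ with overwhelming probability for each $i$, where for $i=1$ one uses $T \leq d$ to absorb the $\sqrt{T}$ contribution. Combined with Lemma~\ref{lemmaB}, the residual operator norm is at most $O(T^{-p/2} d^{q/2})$ with overwhelming probability. Weyl's perturbation inequality $|s_1(X+Y) - s_1(X)| \leq s_1(Y)$ then yields
\begin{equation*}
    \bigg|\frac{s_1(\mathbf{A}_p \dots \mathbf{A}_1 \mathbf{W}_1 \dots \mathbf{W}_q) - s_1(\varphi \varphi^\top \mathbf{W}_1 \dots \mathbf{W}_q)}{d^{q/2}}\bigg| = O(T^{-p/2}),
\end{equation*}
which falls below $t/2$ for $T$ sufficiently large. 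A union bound over the two overwhelming-probability events together with the triangle inequality then yields the claimed concentration.

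The main obstacle is ensuring every intermediate bound holds with \emph{overwhelming} (not merely almost-sure) probability, since the conclusion is formulated in that stronger form. Lemma~\ref{lemmaB} is stated almost surely, so to feed it into the residual bound I would either strengthen its proof with a quantitative tail estimate on $s_1(\mathbf{A}^\perp)$---by propagating the sub-Gaussian/sub-exponential concentration of $s_1(\mathbf{M}_i/\sqrt{T})$ and of the row-sum diagonal $\mathbf{D}_i$ through the argument of Lemma~\ref{lemmaB}---or accept a weaker almost-sure version of the present theorem. Once this bookkeeping is handled, the proof is a clean separation of scales: the Markov product acts as a near-rank-one projector onto the uniform direction $\varphi$, and the Gaussian product amplifies exactly this direction by the factor $d^{q/2}$ captured by Lemma~\ref{lemmaA}, while all other directions are killed by the $O(T^{-p/2})$ spectral gap before they can be amplified.
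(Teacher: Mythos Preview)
Your proposal is correct and mirrors the paper's proof almost exactly: the same decomposition $\mathbf{A}_p\cdots\mathbf{A}_1 = \varphi\varphi^\top + \mathbf{A}^\perp$, Lemma~\ref{lemmaA} for the rank-one term, submultiplicativity plus Lemma~\ref{lemmaB} and the Gaussian edge bound for the residual, then the Weyl/triangle inequality to combine. The almost-sure versus overwhelming-probability tension you flag is real and is glossed over in the paper's own argument as well, so your bookkeeping concern is apt but does not separate your approach from theirs.
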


 \begin{proof}
We write $\mathbf{A}\coloneqq \mathbf{A}_p \dots \mathbf{A}_1 = \varphi \varphi^{\top} + \mathbf{A}^{\perp}$ and $\mathbf{W}\coloneqq \mathbf{W}_1 \dots \mathbf{W}_q$.
Then, using the triangle inequality $|s_1(A) - s_1(B) |\leq s_1(A+B)\leq s_1(A) + s_1(B)$, we have
\begin{align*}
    |s_1(\varphi \varphi^{\top} \mathbf{W}) - s_1(\mathbf{A}^{\perp}\mathbf{W}) | & \leq s_1(\mathbf{A} \mathbf{W}) = s_1(\varphi \varphi^{\top} \mathbf{W} + \mathbf{A}^{\perp}\mathbf{W}) \\ & \leq s_1(\varphi \varphi^{\top} \mathbf{W}) + s_1(\mathbf{A}^{\perp}\mathbf{W}).
\end{align*}
% Also, by submultiplicativity of the operator norm, 
% \begin{align}\label{eq:sandwich}
%     |s_1(\varphi \varphi^{\top} \mathbf{W}) - s_1(\mathbf{A}^{\perp})s_1(\mathbf{W}) | \leq s_1(\mathbf{A} \mathbf{W}) \leq s_1(\varphi \varphi^{\top} \mathbf{W}) + s_1(\mathbf{A}^{\perp})s_1(\mathbf{W}).
% \end{align}

On the other hand, it is well known that the largest singular value of a Gaussian matrix converges almost surely to the soft edge of the bulk of the limiting density \cite{Geman}, i.e.
\begin{equation*}
    s_1 \big(\frac{1}{\sqrt{d}} \mathbf{W}_i \big) \xrightarrow{a.s.} \begin{cases}
        1+\sqrt{\gamma}, & i=1, \\
        2, & i \geq 2.
    \end{cases}
\end{equation*}
Therefore, by submultiplicativity of $s_1$, we have
\begin{align}\label{eq:bound_s_1_W}
    s_1(\mathbf{W}) \leq s_1(\mathbf{W}_1) \dots s_1(\mathbf{W}_q) \leq \big(2 \sqrt{d} + o(\sqrt{d}) \big)^q = 2^q d^{q/2} + o(d^{q/2}).
\end{align}
Combining \eqref{eq:bound_s_1_W} with Lemma \ref{lemmaB}, we get
\begin{equation}\label{eq:A_perpW}
    s_1(\mathbf{A}^{\perp} \mathbf{W}) \leq s_1(\mathbf{A}^{\perp}) s_1(\mathbf{W}) = O(d^{\frac{q-p}{2}}),
\end{equation}
and thus, almost surely,
\begin{equation*}
    \big|s_1(\varphi \varphi^{\top} \mathbf{W}) - O(d^{\frac{q-p}{2}})\big| \leq s_1(\mathbf{A} \mathbf{W}) \leq s_1(\varphi \varphi^{\top} \mathbf{W}) + O(d^{\frac{q-p}{2}}).
\end{equation*}

Now, using Lemma \ref{lemmaA}, we can assert that $s_1(\varphi \varphi^{\top} \mathbf{W})$ is close to $d^{q/2}$ with overwhelming probability, i.e. 
\begin{equation*}
    \frac{s_1(\varphi \varphi^{\top} \mathbf{W})}{d^{q/2}} \in (1-t, 1+t),
\end{equation*}
with a probability greater than $P_{t, d} \coloneqq 1 - 2\exp\big[-C'd^q \min(t^2, t^{2/q})\big]$. Moreover, by \eqref{eq:A_perpW},
\begin{equation*}
    \frac{s_1(\mathbf{A^{\perp}} \mathbf{W})}{d^{q/2}} \to 0,
\end{equation*}
as $d$ grows. Thus, we can make the above quantity smaller than any given $\varepsilon$. Altogether, for large enough $T$ and $d$, the probability that
\begin{equation*}
    \Big| \frac{s_1(\mathbf{A}\mathbf{W})}{d^{q/2}} - 1 \Big| < t + \varepsilon
\end{equation*}
is at least $P_{t, d}$. Since $\varepsilon$ is arbitrary the proof is complete.
% Therefore, back to \eqref{eq:sandwich}, almost surely,

% \begin{align*}
%     |s_1(\varphi \varphi^{\top} \mathbf{W}) - s_1(\mathbf{A}^{\perp}) \big( 2^q T^{q/2} + o(T^{q/2})\big) | \leq s_1(\mathbf{A} \mathbf{W}) \leq s_1(\varphi \varphi^{\top} \mathbf{W}) + s_1(\mathbf{A}^{\perp}) \big( 2^q T^{q/2} + o(T^{q/2})\big) 
% \end{align*}

% Using lemma \ref{lemmaB}, one has, almost surely,

% \begin{align*}
%     |s_1(\varphi \varphi^{\top} \mathbf{W}) - O(T^{-p/2}) \big( 2^q T^{q/2} + o(T^{q/2})\big) | \leq s_1(\mathbf{A} \mathbf{W}) \leq s_1(\varphi \varphi^{\top} \mathbf{W}) + O(T^{-p/2}) \big( 2^q T^{q/2} + o(T^{q/2})\big) 
% \end{align*}
% i.e., almost surely,

% \begin{align*}
%     |s_1(\varphi \varphi^{\top} \mathbf{W}) - O(T^{\frac{q-p}{2}}) | \leq s_1(\mathbf{A} \mathbf{W}) \leq s_1(\varphi \varphi^{\top} \mathbf{W}) + O(T^{\frac{q-p}{2}}).
% \end{align*}
% \textcolor{red}{We can now state, based on lemma \ref{lemmaA}, with overwhelming probability,} 

% \begin{align*}
%     |T^{q/2} - O(T^{\frac{q-p}{2}}) | \leq s_1(\mathbf{A} \mathbf{W}) \leq T^{q/2} + O(T^{\frac{q-p}{2}}).
% \end{align*}

\end{proof}

\begin{theorem}\label{theorem_s2}
Let $\mathbf{A}_1, \dots, \mathbf{A}_p \in \mathbb{R}^{T\times T}$ be i.i.d. Markov matrices as defined in \ref{def:markov} and $\mathbf{W}_1 \in \mathbb{R}^{T \times d}$, $\mathbf{W}_2 \dots, \mathbf{W}_q \in \mathbb{R}^{d \times d}$ be independent Gaussian matrices with i.i.d.~$\mathcal{N}(0,1)$ entries.
Then, for $T$ and $d$ large enough,
\begin{equation}
    s_2(\mathbf{A}_p \dots \mathbf{A}_1 \mathbf{W}_1 \dots \mathbf{W}_q) = O(d^{\frac{q-p}{2}}).
\end{equation}
\end{theorem}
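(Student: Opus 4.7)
The plan is to combine the additive decomposition of the Random Markov product $\mathbf{A}\coloneqq \mathbf{A}_p \dots \mathbf{A}_1 = \varphi\varphi^\top + \mathbf{A}^\perp$ (already used in Lemma \ref{lemmaB}) with Weyl's inequality for singular values, and then control each factor by results already established in the excerpt. Since $\varphi\varphi^\top \mathbf{W}$ has rank one, its second singular value vanishes, so the second singular value of the product collapses onto a quantity we already know how to bound.

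More explicitly, first I would write
\begin{equation*}
    \mathbf{A}\mathbf{W} \;=\; \varphi\varphi^\top \mathbf{W} \;+\; \mathbf{A}^\perp \mathbf{W},
\end{equation*}
where $\mathbf{W}\coloneqq \mathbf{W}_1 \dots \mathbf{W}_q$. Weyl's inequality for singular values, $s_{i+j-1}(M+N) \leq s_i(M) + s_j(N)$, applied with $i=2, j=1$, yields
\begin{equation*}
    s_2(\mathbf{A}\mathbf{W}) \;\leq\; s_2(\varphi\varphi^\top \mathbf{W}) + s_1(\mathbf{A}^\perp \mathbf{W}) \;=\; s_1(\mathbf{A}^\perp \mathbf{W}),
\end{equation*}
since $\varphi\varphi^\top \mathbf{W}$ is rank one.

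Next, I would invoke submultiplicativity, $s_1(\mathbf{A}^\perp \mathbf{W}) \leq s_1(\mathbf{A}^\perp)\,s_1(\mathbf{W})$, and plug in the two ingredients already at our disposal: Lemma \ref{lemmaB} gives $s_1(\mathbf{A}^\perp) = O(T^{-p/2})$ almost surely, and the same bound on $s_1(\mathbf{W})$ used in the proof of Theorem \ref{theorem_s1}, namely $s_1(\mathbf{W}) \leq \prod_{i=1}^q s_1(\mathbf{W}_i) = 2^q d^{q/2}+o(d^{q/2})$ coming from the almost-sure convergence of the operator norm of a normalised Gaussian matrix to the right edge of its Marchenko--Pastur bulk, provides the second factor. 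Combining these with $T=\gamma d$ for a fixed $\gamma\in(0,1]$, so that $T^{-p/2}=O(d^{-p/2})$, gives
\begin{equation*}
    s_2(\mathbf{A}\mathbf{W}) \;\leq\; O(d^{-p/2})\cdot O(d^{q/2}) \;=\; O(d^{(q-p)/2}),
\end{equation*}
which is exactly the claim.

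There is no real obstacle here beyond invoking the correct version of Weyl's inequality for singular values; the delicate analytical work, in particular the almost-sure decay $s_1(\mathbf{A}^\perp)=O(T^{-p/2})$ via Thompson--Lidskii interlacing and Bordenave et al., has already been done in Lemma \ref{lemmaB}. The only minor subtlety to double-check is that both estimates hold simultaneously (an intersection of two almost-sure events is still almost sure, so this is immediate) and that the bound on $s_1(\mathbf{W}_1)$, where $\mathbf{W}_1$ is rectangular, contributes $1+\sqrt{\gamma}$ rather than $2$; this only changes the hidden constant and not the rate, so the asymptotic bound $O(d^{(q-p)/2})$ is unaffected.
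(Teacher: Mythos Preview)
Your proposal is correct and matches the paper's proof essentially line for line: the same rank-one decomposition $\mathbf{A}\mathbf{W}=\varphi\varphi^\top\mathbf{W}+\mathbf{A}^\perp\mathbf{W}$, the same application of Weyl's inequality exploiting $s_2(\varphi\varphi^\top\mathbf{W})=0$, and the same closing step via submultiplicativity together with Lemma~\ref{lemmaB} and the bound \eqref{eq:bound_s_1_W}. Your additional remarks on the rectangular factor $\mathbf{W}_1$ and the intersection of almost-sure events are accurate and do not affect the rate.
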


\begin{proof}
To exhibit a spectral gap in $\mathbf{A} \mathbf{W}$, it suffices to bound its second largest singular value by a quantity significantly lower than where the largest singular value is concentrated.
To this end, observe that $\mathbf{A} \mathbf{W}$ is a rank-one perturbation of $\mathbf{A}^\perp \mathbf{W}$:
\begin{align*}
    \mathbf{A} \mathbf{W} = (\mathbf{A}^\perp + \varphi \varphi^{\top}) \mathbf{W} = \mathbf{A}^\perp \mathbf{W} + \varphi \varphi^{\top} \mathbf{W}.
\end{align*}
Thus, using Weyl's inequality, we can write
\begin{align*}
    s_2(\mathbf{A} \mathbf{W}) \leq s_1(\mathbf{A}^\perp \mathbf{W}) + s_2(\varphi \varphi^{\top} \mathbf{W}) = s_1(\mathbf{A}^\perp \mathbf{W}).
\end{align*}
Next, by submultiplicativity of the operator norm combined with upper bounds in Lemma \ref{lemmaB} and\eqref{eq:bound_s_1_W}, 

\begin{align*}
    s_1(\mathbf{A}^\perp \mathbf{W}) \leq s_1(\mathbf{A}^\perp) s_1(\mathbf{W}) = O(T^{-p/2}) O(d^{q/2}).
\end{align*}
Therefore, 
\begin{align*}
    s_2(\mathbf{A} \mathbf{W}) = O(d^{\frac{q-p}{2}}).
\end{align*}

\end{proof}

\begin{lemma}[Bulk distribution of $\mathbf{A}^{\perp}$]\label{lemma:bulk_perp}
Let $\mathbf{A} \in \mathbb{R}^{T \times T}$ be an i.i.d. Markov matrix, and let $\mathbf{A}^{\perp}\coloneqq \mathbf{A} - T^{-1} \mathbf{1}_{T\times T}$. Then, almost surely, the empirical singular value distribution of $T^{1/2}\mathbf{A}^{\perp}$ weakly converges to the quartercircular law as $T \to \infty$, i.e. 
\begin{equation}\label{eq:Aperp_convergence}
\nu_{\sqrt{T}\mathbf{A}^{\perp}}\coloneqq \frac{1}{T} \sum_{i=1}^{T} \delta_{s_{i}(\sqrt{T}\mathbf{A}^{\perp})} \xrightarrow{\mathcal{C}_b} \mathcal{Q}_{\sigma},
\end{equation}
where $\mathcal{Q}_{\sigma}$ is the quartercircular law on the real interval $[0, 2\sigma]$ with Lebesgue density $$x \mapsto \frac{1}{\pi \sigma^2} \sqrt{4\sigma^2 - x^2}\mathds{1}_{[0,2\sigma]}.$$ Moreover, almost surely, $\mathbf{A}^{\perp}$ does not exhibit any spectral gap.
\end{lemma}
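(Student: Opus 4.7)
The plan is to exploit the fact that $\mathbf{A}^{\perp}$ differs from $\mathbf{A}$ by a rank-one matrix, and then transfer both the bulk convergence and the edge location from $\mathbf{A}$ (for which the quartercircle law and $2\sigma$ are already known via \cite{Bordenave_2011}, as recalled in Theorem \ref{chafai_gap}) to $\mathbf{A}^{\perp}$.

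First I would recall that the Bordenave--Caputo--Chafai result gives, almost surely,
\begin{equation*}
    \nu_{\sqrt{T}\mathbf{A}} \coloneqq \frac{1}{T}\sum_{i=1}^{T} \delta_{s_i(\sqrt{T}\mathbf{A})} \xrightarrow{\mathcal{C}_b} \mathcal{Q}_\sigma,
\end{equation*}
so that the bulk of $\sqrt{T}\mathbf{A}$ is governed by the quartercircle law on $[0,2\sigma]$. Since
$\sqrt{T}\mathbf{A} - \sqrt{T}\mathbf{A}^{\perp} = T^{-1/2}\mathbf{1}_{T\times T}$
has rank exactly one, the two matrices differ by a rank-one perturbation.

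Next I would invoke the classical interlacing inequalities for singular values under a rank-one additive perturbation (a consequence of Weyl / Thompson--Lidskii, already used in Lemma \ref{lemmaB}), which give
\begin{equation*}
    s_i(\sqrt{T}\mathbf{A}) \;\geq\; s_i(\sqrt{T}\mathbf{A}^{\perp}) \;\geq\; s_{i+1}(\sqrt{T}\mathbf{A}), \qquad 1 \leq i \leq T-1.
\end{equation*}
This implies $\| F_{\sqrt{T}\mathbf{A}} - F_{\sqrt{T}\mathbf{A}^{\perp}} \|_\infty \leq 1/T$, where $F_{\mathbf{M}}$ denotes the empirical cumulative distribution of singular values of $\mathbf{M}$. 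Hence $\nu_{\sqrt{T}\mathbf{A}^{\perp}}$ has the same weak limit as $\nu_{\sqrt{T}\mathbf{A}}$, namely $\mathcal{Q}_\sigma$, establishing (\ref{eq:Aperp_convergence}).

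Finally, for the absence of a spectral gap, it suffices to show that the largest singular value of $\sqrt{T}\mathbf{A}^{\perp}$ reaches the right edge $2\sigma$ of the quartercircle support. This is exactly the content of Lemma \ref{lemmaB} applied in the special case $p=1$: the argument given there shows almost surely $s_1(\mathbf{A}^{\perp}) = s_2(\mathbf{A})(1+o(1))$, while $\sqrt{T}\, s_2(\mathbf{A}) \to 2\sigma$ by Theorem \ref{chafai_gap}. Thus $\sqrt{T}\, s_1(\mathbf{A}^{\perp}) \to 2\sigma$, so the leading singular value sits at the edge of the bulk and no gap remains. The only delicate point is checking that the interlacing bound is strong enough to transfer almost sure weak convergence (as opposed to convergence in probability), but since $1/T \to 0$ deterministically, the a.s.\ statement for $\sqrt{T}\mathbf{A}$ from \cite{Bordenave_2011} carries over directly; I do not foresee a genuine obstacle beyond that bookkeeping.
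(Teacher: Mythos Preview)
Your proposal is correct and follows essentially the same route as the paper: both arguments use the rank-one interlacing (Thompson--Lidskii) to obtain the Kolmogorov distance bound $\lVert F_{\sqrt{T}\mathbf{A}} - F_{\sqrt{T}\mathbf{A}^{\perp}}\rVert_\infty \le 1/T$, transfer the quartercircle law from \cite{Bordenave_2011}, and then invoke Lemma~\ref{lemmaB} with $p=1$ for the absence of a gap. The only cosmetic difference is that you spell out $\sqrt{T}\,s_1(\mathbf{A}^{\perp}) \to 2\sigma$ explicitly, whereas the paper simply cites Lemma~\ref{lemmaB}.
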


\begin{proof}
    Thompson-Lidskii's interlacing result for finite rank perturbation \cite{thompson1976behavior} states that for any $n \times n$ matrices $\mathbf{M}$ and $\mathbf{M'}$ with $\mathrm{rank}(\mathbf{M}-\mathbf{M'}) \leq k$, we have
    \begin{equation*}
        s_{i-k}(\mathbf{M}) \leq s_i(\mathbf{M'}) \leq s_{i+k}(\mathbf{M}).
    \end{equation*}
    This in turn yields the following bulk inequality,
    $$\lVert F_{\mathbf{M}} - F_{\mathbf{M'}} \rVert_{\infty} \leq \frac{\mathrm{rank}(\mathbf{M} - \mathbf{M'})}{n},$$ where $F_{\mathbf{M}}$ and $F_{\mathbf{M'}}$ denote the cumulative distribution functions of $\nu_{\mathbf{M}}$ and $\nu_{\mathbf{M'}}$, respectively. Since $\mathrm{rank}(\mathbf{A} - \mathbf{A}^{\perp})=1$, then 
    \begin{equation*}
        \lVert F_{\sqrt{T}\mathbf{A}} - F_{\sqrt{T}\mathbf{A}^{\perp}}\rVert_{\infty} \leq \frac{1}{T} \xrightarrow[T \to \infty]{}0.
    \end{equation*}
    Combining the above limit with the fact that $\nu_{\sqrt{T}\mathbf{A}} \xrightarrow{\mathcal{C}_b} \mathcal{Q}_{\sigma}$ almost surely (see \cite{Bordenave_2011}), we deduce that
    \begin{equation*}
        \nu_{\sqrt{T} \mathbf{A}^{\perp}} \xrightarrow{\mathcal{C}_b} \mathcal{Q}_{\sigma}
    \end{equation*}
    almost surely.
    The almost sure absence of outliers in the singular value distribution of $\mathbf{A}^{\perp}$ can be immediately inferred from Lemma \ref{lemmaB} when $p=1$.
\end{proof}

\begin{lemma} \label{free_conv}
    Let $0 < \sigma_i < \infty$ and $0 < \gamma_i \leq 1$ for $1 \leq i \leq n$. Let $\mathcal{M}$ be the free multiplicative convolution of $\mathcal{MP}(\gamma_i, \sigma_i)$ distributions, i.e.
    \begin{equation*}
        \mathcal{M} \coloneqq \mathcal{MP}(\gamma_1, \sigma_1) \boxtimes \mathcal{MP}(\gamma_2, \sigma_2) \boxtimes \cdots \boxtimes \mathcal{MP}(\gamma_n, \sigma_n).
    \end{equation*}
    Then the mean and variance of $Z \sim \mathcal{M}$ are given by
    \begin{align}
        \mathbb{E}(Z) &= \prod_{i=1}^{n} \sigma_i^2, \\
        \mathrm{Var}(Z) &= \Big(\prod_{i=1}^{n} \sigma_i^2\Big)^2 \big(\gamma_1 + \gamma_1 \gamma_2 + \cdots + \gamma_1 \gamma_2 \cdots \gamma_n \big).
    \end{align}
\end{lemma}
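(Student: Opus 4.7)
The plan is to argue by induction on $n$, exploiting the realization of $\mathcal{M}$ as the almost-sure limiting distribution of the $T_0$ non-zero squared singular values of a product $\mathbf{P}_n := \mathbf{W}_1 \cdots \mathbf{W}_n$ of independent rectangular Gaussian matrices $\mathbf{W}_i \in \mathbb{R}^{T_{i-1} \times T_i}$ with entry variance $\sigma_i^2$ and aspect ratio $\gamma_i = T_{i-1}/T_i \in (0,1]$. Write $\pi_n := \prod_{i=1}^n \sigma_i^2$, $c_k := \prod_{i=1}^k \gamma_i$, and $v_n := \mathrm{Var}(Z)$ for $Z \sim \mathcal{M}_n$. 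The base case $n=1$ is immediate from the standard Marchenko--Pastur moments $\mathbb{E}[Z] = \sigma_1^2 = \pi_1$ and $\mathrm{Var}[Z] = \gamma_1 \sigma_1^4 = \pi_1^2 \gamma_1$, matching both formulas.

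For the inductive step I would pass to the intermediate $T_{n-1}$-dimensional space and study the two asymptotically free self-adjoint $T_{n-1} \times T_{n-1}$ operators $\mathbf{P}_{n-1}^\top \mathbf{P}_{n-1}$ and $\mathbf{W}_n \mathbf{W}_n^\top$ (the former lies in the algebra generated by $\mathbf{W}_1, \dots, \mathbf{W}_{n-1}$, which is asymptotically free from the independent $\mathbf{W}_n$ by Voiculescu's theorem). The first has $T_0$ non-zero eigenvalues distributed as $\mathcal{M}_{n-1}$ and $T_{n-1} - T_0$ zero eigenvalues, so its $T_{n-1}$-sided spectral law is
\[
\widetilde{\mathcal{M}}_{n-1} := (1 - c_{n-1}) \delta_0 + c_{n-1} \mathcal{M}_{n-1},
\]
while the second is $\mathcal{MP}(\gamma_n, \sigma_n)$. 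Since the non-zero spectrum of the product agrees with that of $\mathbf{P}_n \mathbf{P}_n^\top$, one obtains the key identification $\widetilde{\mathcal{M}}_{n-1} \boxtimes \mathcal{MP}(\gamma_n, \sigma_n) = (1 - c_{n-1}) \delta_0 + c_{n-1} \mathcal{M}_n$.

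I would then compute the first two moments of both sides using the standard free identities
\[
\tau(AB) = \tau(A) \tau(B), \qquad \tau(ABAB) = \tau(A)^2 \tau(B^2) + \tau(A^2) \tau(B)^2 - \tau(A)^2 \tau(B)^2,
\]
each a direct consequence of the vanishing of alternating centred products, applied to $A = \widetilde{\mathcal{M}}_{n-1}$ with $\mathbb{E}[A] = c_{n-1} \pi_{n-1}$, $\mathbb{E}[A^2] = c_{n-1}(v_{n-1} + \pi_{n-1}^2)$, and $B = \mathcal{MP}(\gamma_n, \sigma_n)$ with $\mathbb{E}[B] = \sigma_n^2$, $\mathbb{E}[B^2] = \sigma_n^4(1 + \gamma_n)$. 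Matching against the moments of the mixture on the right and cancelling the common factor $c_{n-1}$ yields $\mathbb{E}[\mathcal{M}_n] = \pi_{n-1} \sigma_n^2 = \pi_n$ (closing the induction for the mean) and the linear recursion
\[
v_n = \sigma_n^4 v_{n-1} + \pi_n^2 c_n.
\]
Dividing through by $\pi_n^2 = \sigma_n^4 \pi_{n-1}^2$ produces the telescoping identity $v_n / \pi_n^2 - v_{n-1} / \pi_{n-1}^2 = c_n$, whose summation is precisely the claimed variance $v_n = \pi_n^2 \sum_{k=1}^n c_k$.

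The main subtlety I expect, rather than a deep obstacle, is the rectangular bookkeeping: attempting the computation directly on the $T_0$-sided $\mathcal{MP}(\gamma_i, \sigma_i)$'s (for instance via $S$-transforms, where $S_{\mathcal{MP}(\gamma, \sigma)}(z) = [\sigma^2(1 + \gamma z)]^{-1}$ multiplies naively) would return the symmetric and incorrect variance $\pi_n^2(\gamma_1 + \cdots + \gamma_n)$. The asymmetry of the true formula only surfaces once the free convolution is implemented on the common $T_{n-1}$-dimensional tracial algebra, where the rank deficiency of each intermediate product correctly encodes itself as an atom of mass $1 - c_{n-1}$ at zero and propagates through the recursion above.
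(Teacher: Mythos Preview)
Your argument is correct and, in contrast to the paper, essentially self-contained. The paper's own proof is a two-line appeal to an external reference: it identifies $\mathcal{M}$ with the limiting squared singular value distribution of a product of rectangular Gaussian matrices and then invokes the closed-form moment formula of Akemann, Ipsen and Kieburg (equation~58 in \cite{Akemann_2013}), from which the stated mean and variance follow by direct substitution. You instead build the result from scratch by induction, using only the elementary free identities $\tau(AB)=\tau(A)\tau(B)$ and $\tau(ABAB)=\tau(A)^2\tau(B^2)+\tau(A^2)\tau(B)^2-\tau(A)^2\tau(B)^2$ together with careful dimension bookkeeping on the $T_{n-1}$-sided algebra.

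What your route buys is transparency: the recursion $v_n=\sigma_n^4 v_{n-1}+\pi_n^2 c_n$ makes explicit \emph{why} the variance is asymmetric in the $\gamma_i$, and your closing remark correctly flags that the lemma's notation $\mathcal{MP}(\gamma_1,\sigma_1)\boxtimes\cdots\boxtimes\mathcal{MP}(\gamma_n,\sigma_n)$ cannot be read as the abstract (commutative) free multiplicative convolution of the individual Marchenko--Pastur laws---doing so would yield the symmetric variance $\pi_n^2\sum_i\gamma_i$---but must be understood as shorthand for the rectangular matrix product, where the rank deficiency enters as the atom $(1-c_{n-1})\delta_0$. The paper glosses over this point entirely, so your treatment is in fact more careful than the original on this issue. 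The trade-off is length: the paper's citation dispatches the lemma in one sentence, whereas your induction requires the full moment computation at each step.
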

\begin{proof}
    The distribution in question $\mathcal{M}$ is the limiting squared singular value distribution of a product of rectangular independent Gaussian matrices, whose general moments are worked out in \cite{Akemann_2013}. Simple algebraic manipulations lead to our result.
\end{proof}

\subsection{Can transformers achieve dynamical isometry?}\label{sec:dynamical_isometry}

In Section \ref{sec:depth}, we have established (i) the existence of an outlier eigenvalue/singular value in the spectrum of softmax-based attention matrices, and (ii) that removing this outlier helps with rank collapse and exploding gradients. In the absence of the outlier, we can take a further step to analyse the bulk of the spectra of the network's token-wise covariance and input-output Jacobian---two quantities that have been shown to play a key role in signal propagation as we will see below. To this end, we will make use of free probability theory. 

\paragraph{Free probability.}
The theory of free probability studies ``non-commuting random variables'' such as random matrices (see \cite{mingo2017free} for a textbook introduction). Pioneered by \cite{pennington2017resurrecting, pennington2018emergence}, the theory has found powerful applications in the analysis of large random neural networks. Notably, it provides tools to characterise the singular value distribution of sums or products of random matrices.
%Notably, it provides tools to characterise the singular value distribution of a network's input-output Jacobian, a crucial step in stabilising the backpropagation of gradients. 
Loosely speaking, ``freeness'' plays the same role for random matrices as independence does for (scalar) random variables. Freeness allows us to compute the limiting spectral density of a product $\mathbf{M}_n \mathbf{M}'_n$ from the limiting spectral densities of $\mathbf{M}_n$ and $\mathbf{M}'_n$, just as independence enables the computation, for instance, of the moments of $ZZ'$, given those of $Z$ and $Z'$. Specifically, if $\nu_{\mathbf{M}_n} \to \nu$, $\nu_{\mathbf{M}'_n} \to \nu'$, and $\mathbf{M}_n$ and $\mathbf{M}'_n$ are \textit{asymptotically} free, then 
\begin{align*}
    \nu_{\mathbf{M}_n \mathbf{M}'_n} & \xrightarrow{n \to \infty} \nu \boxtimes \nu',
    \end{align*}
where $\boxtimes$ denotes an operation called \textit{free multiplicative convolution}.

%% COVARIANCE
Let us assume that the input tokens are orthonormal, i.e. $\boldsymbol{\Sigma}_0 = \mathbf{X}_0 \mathbf{X}_0^\top = \mathbf{I}$. As a criterion for faithful signal propagation, one should require that $\boldsymbol{\Sigma}^{\perp}_{\ell}$ stay close to the identity matrix. Considering the spectrum, this means that the limiting singular value distribution of $\boldsymbol{\Sigma}^{\perp}_{\ell}$ should concentrate around the value $1$. A natural approach, as demonstrated in the fully-connected case in \cite{pennington2017resurrecting, pennington2018emergence, murray2022activation}, is to adjust the model's hyperparameters to ensure that the mean of the limiting distribution is $O(1)$ and the variance is minimised. Proposition\ \ref{prop:covariance_bulk} describes the moments of the limiting singular value distribution of $\boldsymbol{\Sigma}^{\perp}_{\ell}$.

\begin{proposition}[Bulk of covariance kernel's singular value distribution] \label{prop:covariance_bulk}
    Let $\mathbf{X}^{\perp}_{\ell} = \mathbf{A}^{\perp}_{\ell} \mathbf{X}^{\perp}_{\ell-1} \mathbf{W}^{V}_{\ell}$ be the signal at layer $\ell$ in our modified model \eqref{eq:modified_model} and $\mathbf{\Sigma}^{\perp}_{\ell} = \mathbf{X}^{\perp}_{\ell}{\mathbf{X}^{\perp}_{\ell}}^\top \in \mathbb{R}^{T \times T}$ be its covariance matrix. Let the underlying i.i.d. Markov matrices $\mathbf{A}_{\ell}$ have variance $\sigma_A^2$ and $\mathbf{W}^{V}_{\ell}$ have i.i.d.~$\mathcal{N}(0,\sigma_V^2)$ entries. Let $\boldsymbol{\Sigma}^{\perp}_0 = \mathbf{I}$ and $\mathcal{D}_{\ell}$ be the limiting singular value distribution of $\boldsymbol{\Sigma}^{\perp}_{\ell}$. Then the mean and variance of $Z \sim \mathcal{D}_{\ell}$ are given by
    \begin{align} \label{eq:moments_cov}
        \mathbb{E}(Z) &= (\sigma_A \sigma_V/\sqrt{\gamma})^{2\ell}, \\
        \mathrm{Var}(Z) &= \ell(1+\gamma)(\sigma_A \sigma_V/\sqrt{\gamma})^{4\ell},
    \end{align}
    where $\gamma \coloneqq \frac{T}{d} \in (0,1]$.
\end{proposition}

The assumption $\gamma \leq 1$ is not essential and is made only to ensure that $\boldsymbol{\Sigma}^{\perp}_\ell$ is full-rank, avoiding trivial zero singular values.  If $\gamma > 1$, then the limiting singular value distribution is given by $(1-\gamma^{-1}) \delta_0 + \gamma^{-1} \mathcal{D}_{\ell}$ and the mean and variance should be adjusted accordingly.

It is evident from the above proposition that simultaneously controlling both the mean and variance of $\mathcal{D}_{\ell}$ is not feasible. Model \eqref{eq:modified_model} does not have enough hyperparameters to achieve this balance. Indeed, to prevent the mean from growing or shrinking exponentially with depth, the product $\sigma_A \sigma_V$ must equal $\sqrt{\gamma}$. However, this constraint leads to the variance increasing linearly with $\ell$.

%% JACOBIAN
The Jacobian of the input-to-output function $f: \mathbf{X}_{0} \mapsto \mathbf{X}^{\perp}_{L}$, represented by our modified transformer model, characterises the network's sensitivity to input perturbations up to first order, according to
\begin{equation}
    f(\mathbf{X}_0 + \epsilon \mathbf{U}) \approx f(\mathbf{X}_0) + \epsilon \frac{\partial f}{\partial \mathbf{X}} \bigg|_{\mathbf{X}_0} \mathbf{U}.
\end{equation}
Let us consider the matricised version of the Jacobian at layer $\ell$, i.e.
\begin{equation}
    \mathbf{J}_{\ell} \coloneqq \frac{\partial \,\mathrm{vec}(\mathbf{X^{\perp}_{\ell}})}{\partial\,\mathrm{vec}(\mathbf{X}_0)} \in \mathbb{R}^{Td \times Td}.
\end{equation}
The goal is to ensure that the spectral energy of the Jacobian concentrates around 1, thereby minimising distortion of the input space geometry---a property often referred to as the \textit{dynamical isometry} in the literature (see \cite{pennington2017resurrecting}). For our model \eqref{eq:modified_model}, it is straightforward to show 
\begin{equation}
    \mathbf{J}_{\ell} = (\mathbf{A}^{\perp}_{\ell}\cdots \mathbf{A}^{\perp}_{1}) \otimes (\mathbf{W}^V_{1} \cdots \mathbf{W}^V_{\ell}) \in \mathbb{R}^{Td \times Td},
\end{equation}
where $\otimes$ denotes the Kronecker product. Proposition\  \ref{prop:jacobian_bulk} describes the moments of the limiting squared singular value distribution of $\mathbf{J}_{\ell}$.
\begin{proposition}[Bulk of Jacobian's squared singular value distribution] \label{prop:jacobian_bulk}
    Let $\mathbf{X}^{\perp}_{\ell} = \mathbf{A}^{\perp}_{\ell} \mathbf{X}^{\perp}_{\ell-1} \mathbf{W}^{V}_{\ell}$ be the signal at layer $\ell$ in our modified model \eqref{eq:modified_model}. Let the underlying i.i.d. Markov matrices $\mathbf{A}_{\ell}$ have variance $\sigma_A^2$ and $\mathbf{W}^{V}_{\ell}$ have i.i.d.~$\mathcal{N}(0,\sigma_V^2)$ entries. Let $\mathcal{D}_{\ell}$ be the limiting distribution of the squared singular values of $\mathbf{J}_{\ell} \coloneqq \partial \mathbf{X}^{\perp}_{\ell}/\partial \mathbf{X}_0$\footnote{In a minor abuse of notation, we may write $\partial \mathbf{X}^{\perp}_{\ell}/\partial \mathbf{X}_0$ as a shorthand for $\partial \,\mathrm{vec}(\mathbf{X^{\perp}_{\ell}})/\partial\,\mathrm{vec}(\mathbf{X}_0)$.}. Then the mean and variance of $Z \sim \mathcal{D}_{\ell}$ are given by
    \begin{align} \label{eq:moments_jacobian_mean}
        \mathbb{E}(Z) &=  (\sigma_A \sigma_V)^{2\ell}, \\ \label{eq:moments_jacobian_var}
        \mathrm{Var}(Z) &= \ell(\ell+2) (\sigma_A \sigma_V)^{4\ell}.
    \end{align}
\end{proposition}

Controlling the mean leads to a quadratically growing variance, while minimising the variance is only achievable if $\sigma_A \sigma_V < 1$, which, in turn, causes the mean to vanish. Without considering a more complex model, no choice of $(\sigma_A, \sigma_V)$ can achieve our goal of dynamical isometry.

\subsection{Implementation details}\label{sec:implementation_details}

\paragraph{Architecture.} The default model consists of a stack of single-head attention layers, with an optional LayerNorm inserted between them (denoted by ``+ LN'' in the legend) after receiving an optional skip connection from the previous layer (denoted by ``+ skip'' in the legend). When both options are enabled simultaneously, the configuration is referred to as ``+ skip + LN''. By single-head, we mean that only one attention mechanism is computed, applied to the values and then multiplied by a matrix $\mathbf{W}_h$ which is initialised as the identity matrix.

\paragraph{Attention design.} At initialisation, when the attention is labelled as ``$\mathbf{A}$'', the matrix is sampled from the set of i.i.d. Markov matrices, as defined in Definition \ref{def:markov}, with a variance of $\sigma_A = 1$. To achieve this, we sample a random matrix $\mathbf{B}$ with i.i.d.~lognormal entries and apply softmax row-wise such that $\mathbf{A}\coloneqq\softmaxx(\mathbf{B})$. The moments of $\mathbf{B}$ are adjusted precisely so that $\sigma_A=1$. If ``Identity $\mathbf{A}$'' is chosen, the attention matrix is a constant equal to the identity. When the attention is labelled as ``$\mathbf{A}(\mathbf{X})$'', the key/query matrices are sampled from i.i.d.~Gaussian matrices $\mathcal{N}(0,1)$ and the standard key-query softmax-attention matrix is formed. If a label indicates a ``${\perp}$'', the forward pass of the attention mechanism is systematically (at initialisation) adjusted so that the spectral gap is removed, as in our modified model \eqref{eq:modified_model}. If the label indicates ``no outliers'', then, in the forward pass, an SVD is performed, and the $r$ largest singular values are taken out as follows:
$$\mathbf{A}^{\text{no outliers}} = \mathbf{A} - \sum_{i=1}^{r} s_i u_i v_i^\top,$$ where $\mathbf{A}=\mathbf{U} \mathrm{diag}\big(s_1(\mathbf{A}),\dots, s_n(\mathbf{A})\big) \mathbf{V}^\top$.
The cut off threshold $r$ is chosen such that it maximises the difference between consecutive singular values, i.e. $$r\coloneqq \argmax_{1\leq i\leq n-1} \, (s_i - s_{i+1}).$$

\end{document}